\theoremstyle{plain}
\titleformat{\section}[block]{\color{black}\Large\bfseries}{\thesection}{1em}{}
\titleformat{\subsection}[hang]{\color{black}\large\bfseries}{\thesubsection}{1em}{}
\titleformat{\subsubsection}[hang]{\color{black}\large\bfseries}{\thesubsubsection}{1em}{}
\theoremstyle{plain}
\newtheorem{thm}{Theorem}
\newtheorem{theorem}[thm]{Theorem}
\newtheorem{proposition}[thm]{Proposition}
\newtheorem{lemma}[thm]{Lemma}
\newtheorem{corollary}[thm]{Corollary}
\newtheorem{definition}{Definition}
\newtheorem{condition}{Condition}
\newtheorem{assumption}{Assumption}
\newtheorem{remark}{Remark}
\newtheorem{rmk}{Remark}
\newtheorem*{assumption*}{Assumption}
\newtheorem{note}[thm]{Notation}
\newtheorem{defn}[thm]{Definition}
\newtheorem{fact}[thm]{Fact}
\newtheorem*{conj*}{Conjecture}
\newtheorem*{defn*}{Definition}
\newtheorem*{note*}{Notation}
\newtheorem*{fact*}{Fact}
\newtheorem*{ques*}{Question}
\newtheorem*{exer*}{Exercise}
\newtheorem*{prob*}{Problem}
\newtheorem*{algo*}{Algorithm}
\Crefname{defn}{Definition}{Definitions}
\Crefname{definition}{Definition}{Definitions}
\Crefname{rmk}{Remark}{Remarks}
\Crefname{prop}{Proposition}{Propositions}
\Crefname{thm}{Theorem}{Theorems}
\Crefname{theorem}{Theorem}{Theorems}
\Crefname{cor}{Corollary}{Corollaries}
\Crefname{lemma}{Lemma}{Lemmas}
\Crefname{algo}{Algorithm}{Algorithms}
\Crefname{ex}{Example}{Examples}
\Crefname{answer}{Answer}{Answers}
\Crefname{ques}{Question}{Questions}
\Crefname{prob}{Problem}{Problems}
\Crefname{assumption}{Assumption}{Assumptions}
\Crefname{note}{Notation}{Notations}
\Crefname{fact}{Fact}{Facts}
\Crefname{exer}{Exercise}{Exercises}
\Crefname{conj}{Conjecture}{Conjectures}
\Crefname{claim}{Claim}{Claims}
\Crefname{figure}{Figure}{Figures}
\Crefname{subsection}{Subsection}{Subsections}
\Crefname{section}{Section}{Sections}
\Crefname{appendix}{Appendix}{Appendices}
\Crefname{table}{Table}{Tables}
\patchcmd{\algorithmic}{\addtolength{\ALC@tlm}{\leftmargin} }{\addtolength{\ALC@tlm}{\leftmargin}}{}{}
\newcommand{\nonl}{\renewcommand{\nl}{\let\nl}}
\newcommand\numberthis{\addtocounter{equation}{1}\tag{\theequation}}
\crefname{algocf}{Algorithm}{Algorithms}
\Crefname{algocfproc}{Algorithm}{Algorithms}
\Crefname{definition}{Definition}{Definitions}
\let\cref@old@stepcounter\stepcounter
\def\stepcounter#1{%
  \cref@old@stepcounter{#1}%
  \cref@constructprefix{#1}{\cref@result}%
  \@ifundefined{cref@#1@alias}%
    {\def\@tempa{#1}}%
    {\def\@tempa{\csname cref@#1@alias\endcsname}}%
  \protected@edef\cref@currentlabel{%
    [\@tempa][\arabic{#1}][\cref@result]%
    \csname p@#1\endcsname\csname the#1\endcsname}}
\newcommand{\mytag}[2]{%
  \text{#1}%
  \@bsphack
  \begingroup
    \@onelevel@sanitize\@currentlabelname
    \edef\@currentlabelname{%
      \expandafter\strip@period\@currentlabelname\relax.\relax\@@@%
    }%
    \protected@write\@auxout{}{%
      \string\newlabel{#2}{%
        {#1}%
        {\thepage}%
        {\@currentlabelname}%
        {\@currentHref}{}%
      }%
    }%
  \endgroup
  \@esphack
}
\definecolor{aqua}{rgb}{0.0, 1.0, 1.0}
\definecolor{caribbeangreen}{rgb}{0.0, 0.8, 0.6}
\definecolor{azure}{rgb}{0.0, 0.5, 1.0}
\definecolor{charcoal}{rgb}{0.21, 0.27, 0.31}
\def\clearwf{\par{\count@\c@WF@wrappedlines\zz}\par}
\def\zz{{%
\ifnum\count@>\@ne
\noindent\mbox{zz}\\%
\advance\count@\m@ne
\expandafter\zz
\else
\ifhmode\unskip\unpenalty\fi
\fi}}
\title{
Non-Stationary Dueling Bandits
Under a Weighted Borda Criterion
}
\author{%
Joe Suk\\
Columbia University\\
\href{mailto:joe.suk@columbia.edu}{{\color{blue} \texttt{joe.suk@columbia.edu}}}%
\and Arpit Agarwal\\
Indian Institute of Technology Bombay\\
\href{mailto:aarpit@iitb.ac.in}{{\color{blue} \texttt{aarpit@iitb.ac.in}}}%
}
\date{}
\begin{document}

\maketitle

\begin{abstract}

 In $K$-armed dueling bandits,
 the learner receives preference feedback between arms, and the regret of an arm is defined in terms of its suboptimality to a \emph{winner} arm.
 The {\em non-stationary} variant of the problem, motivated by concerns of changing user preferences, has received recent interest \citep{SahaGu22a,buening22,sukagarwal23}.
 The goal here is to design algorithms with low {\em dynamic regret}, ideally without foreknowledge of the amount of change.

The notion of regret here is tied to a notion of {\em winner arm}, most typically taken to be a so-called Condorcet winner or a Borda winner.
However, the aforementioned results mostly focus on the Condorcet winner.
In comparison, the Borda version of this problem has received less attention which is the focus of this work.
We establish the first optimal and adaptive dynamic regret upper bound $\tilde{O}(\tilde{L}^{1/3} K^{1/3} T^{2/3} )$, where $\tilde{L}$ is the unknown number of significant Borda winner switches.

We also introduce a novel {\em weighted Borda score} framework which generalizes both the Borda and Condorcet problems.
 This framework surprisingly allows a Borda-style regret analysis of the Condorcet problem and establishes improved bounds over the theoretical state-of-art in regimes with a large number of arms or many spurious changes in Condorcet winner.
Such a generalization was not known and could be of independent interest.
\end{abstract}


\section{Introduction}
\label{sec:intro}
In $K$-armed {\em dueling bandits} problem, a learner relies on relative feedback between arms, as opposed to the reward feedback model of the more well-studied multi-armed bandit problem
\citep[see][for surveys]{SuiZHY18,bengs_survey}.
This problem has application in information retrieval, recommendation systems, etc,
where relative feedback
is easy to elicit, while real-valued feedback is
difficult to obtain or interpret.
For example, the availability of implicit
user feedback comparing the output of two information retrieval algorithms
allows one to automatically tune the
parameters of these algorithms using the
framework of dueling bandits \citep{RadlinskiKJ02, Liu09}.


Formally, at round $t \in [T]$, the learner pulls a \emph{pair} of arms
and observes \emph{relative feedback} between these arms indicating
which was preferred.
The feedback is  drawn randomly according to a pairwise mean preference matrix and the
regret is measured in terms of
the sub-optimality of choices to a `winner' arm.
Unlike classical MAB, there are different contending notions of winner arm in dueling bandits, and the underlying theory depends critically on the chosen notion.
Most early work in dueling bandits
considered the {\em Condorcet winner} (CW)
 \citep{Urvoy+13,Ailon+14,Zoghi+14,Zoghi+15a,Komiyama+15a}
which is an  arm that
`stochastically beats' every other arm.
An alternative line of works focuses
on the {\em Borda winner} (BW), an arm maximizing the probability of defeating a uniformly at random chosen comparator \citep{Urvoy+13,Jamieson+15,Ramamohan+16,falahatgar+17,lin+18,heckel18,SahaKM21}.
Both Borda and Condorcet carry their own notions of 
suboptimality/regret.


In the more challenging {\em non-stationary} dueling bandit problem,
preferences may change over time.
\citet{SahaGu22a} first studied this problem
and provided an algorithm that achieves a nearly optimal
{\em Condorcet dynamic regret}\footnote{measured to a time-varying sequence of Condorcet winners.} of $\tilde{O}(\sqrt{\Sc KT})$, where $\Sc$ is the
number of changes in Condorcet Winner.
For the Borda problem, they showed a dynamic regret bound of $\tilde{O}(\Sb^{1/6}K^{-1/3} T^{5/6} + \Sb^{1/2} K^{1/3} T^{2/3})$ in terms of $\Sb$ switches in Borda winner.
While the Condorcet rate of $\sqrt{\Sc KT}$ is minimax optimal, up to log terms, the reported Borda rate is suboptimal as confirmed by our lower bound (\Cref{thm:lower-dynamic-fixed}).
Furthermore, the aforementioned procedures (for both winner settings) are not {\em adaptive}, requiring knowledge of the number of switches. 
Moreover, the dependence on $\Sc,\Sb$ is pessimistic,
as these may count insignificant changes (e.g., if there are $\Omega(T)$ spurious changes in CW) which do not properly capture the difficulty of non-stationarity.

Addressing these limitations, \citet{buening22}
propose for the CW setting a notion of \emph{significant CW switches} (only counting those switches in CW which truly pose challenging non-stationarity) and, under the classical \ssti condition (\Cref{assumption:ssti}), achieves a parameter-free dynamic regret bound of $\tilde{O}(K \sqrt{ \Ssig T})$ in terms of $\Ssig$ such switches.
A subsequent work \citep{sukagarwal23} achieved the optimal dependence on $K$ of $\tilde{O}(\sqrt{\Ssig K T})$. 
Naturally, one wonders whether an analogous result is possible for the Borda problem, leading to our first question (answered in the affirmative by our \Cref{thm:upper-bound-dynamic-borda}): 

{\bf Question \#1.} {\em Can we attain adaptive and optimal Borda dynamic regret in terms of a ``Borda notion'' of significant winner switches?}


Now, turning back to the Condorcet problem, we again note that the $\tilde{O}(\sqrt{\Ssig KT})$ rate was only established under \ssti, which assumes a linear ordering on arms and monotonicity/transitivity conditions on preferences.
While such assumptions are well-studied in earlier dueling bandit works \citep{YueJo11,YueBK+12}, they're arguably unrealistic as pairwise preferences may not even be ordered in applications (e.g., cyclic preferences in a tournament).
Despite this, \citet{sukagarwal23} showed the $\sqrt{\Ssig KT}$ rate is in fact unattainable without \ssti.

\citet{buening22} show, outside of \ssti, a dynamic regret upper bound of $\tilde{O}(K \sqrt{ST})$ in terms of the coarser count $S \geq \Ssig$ of total CW switches.
However, when compared to the optimal stationary regret rate $\sqrt{KT}$, it's unclear if the dependence of $K$ in this result is optimal and also whether an {\em intermediate notion of significant non-stationarity} (counting between $\Ssig$ and $S$ switches in CW) can be learned adaptively.
In other words, one hopes for a regret rate of $\tilde{O}(\sqrt{S_{\text{int}} K  T})$ in terms of some count $S_{\text{int}} \in [\Ssig,S]$ capturing the learnable significant non-stationarity, thus resolving the gap in the findings of \citet{sukagarwal23} and \citet{buening22}.

{\bf Question \#2.} {\em Can we learn other notions of non-stationarity in the Condorcet problem and improve upon the $K\sqrt{ST}$ regret rate?}

In fact, we argue this is a difficult problem.
Even in the stationary dueling bandit problem, the minimax optimal regret rate of $\sqrt{KT}$ is only known to be achievable using a sparring reduction to adversarial bandit algorithms \citep{Dudik+15,balsubramani16,SahaGi22}.
On the other hand, the adaptive procedures of \citet{buening22,sukagarwal23} crucially rely on stochastic elimination-style algorithms.
Thus, it's unclear how to simultaneously attain a sharp dependence on $K$ and handle unknown non-stationarity.

\begin{wrapfigure}{r}{0.25\textwidth}
	\vspace{-0.75cm}
	\includegraphics[scale=0.23]{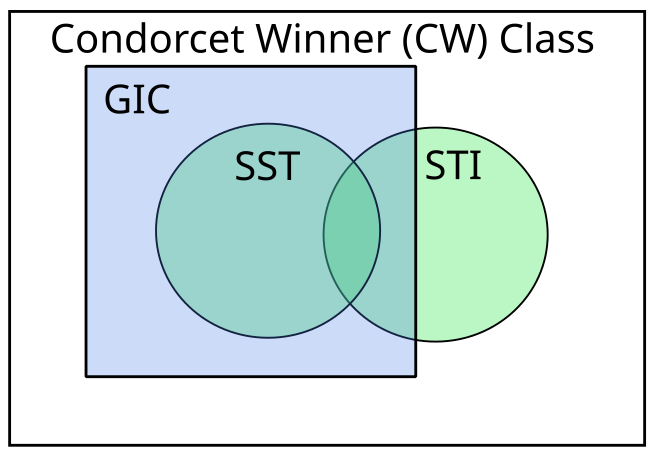}
		\caption{{\small GIC vs. SST, STI.}}
	    \label{fig}
\end{wrapfigure}
The second contribution of this work is to make progress in answering the above question.
In particular, we introduce a new count $\Sapprox$ of {\em approximate CW changes} and show an adaptive Condorcet regret bound of $\Sapprox^{1/3} K^{2/3} T^{2/3}$ under a weaker {\em general identifiability condition} (GIC) (\Cref{assumption:dominant}).
GIC mandates that there's an arm stochastically dominating any other arm with the largest margin, and is weaker than \ssti (see \Cref{fig} for comparison).
This yields an improvement over the $K\sqrt{ST}$ rate of \citet{buening22} under GIC when the number of arms $K$ is large or when $\Sapprox \ll S$.

Both our discussion for non-stationary Borda and Condorcet dueling bandits can also go through an alternative non-stationarity measure $V_T$ measuring the {\em total variation} of preferences over time.
For the Borda setting, we give the first optimal dynamic regret bound in terms of $V_T$ (\Cref{thm:upper-bound-dynamic-borda}).
For the Condorcet setting, we give the first adaptive dynamic regret bound, which was only shown previously under the restrictive \ssti assumption \citep{buening22,sukagarwal23}.

Both our answers to Questions \#1 and 2 involve
a new unified regret minimization framework, called {\em weighted Borda scores}, for dueling bandits.
This framework generalizes both the Condorcet and Borda problems and, in particular, allows for the Condorcet regret to be recast as a Borda-like regret.
Such a relationship was not known before and may be of independent interest.
We note that, within this framework, showing a Condorcet regret bound involves additional complications beyond the Borda-style regret analysis as the {\em reference weight} (a notion generalizing the uniform weight of the Borda score; cf. \Cref{sec:unified}) on arms is unknown and must be efficiently estimated.
Further details on the challenges of this Condorcet regret analysis are found in \Cref{sec:challenges}.

A more expansive account of related works on dueling bandits is deferred to \Cref{subsec:related}. 

\subsection{Tabular Summary of Contributions}\label{subsec:summary}

\begin{figure}[h]
	\begin{tabular}{|c|c|}
		\hline
        Notation & Non-Stationarity Measure\\
        \hline
        $\Sb$ & Borda winner switches\\
	$\Lborda$ & Significant BW switches (\Cref{defn:sig-shift-borda}) \\
        $\Sc$ & Condorcet winner switches\\
	$\Ssig$ & Significant CW switches \citep{buening22} \\
	$\Sapprox$ & Approximate CW switches (\Cref{defn:approximate-winner-changes})\\
	$V_T$ & Total Variation in preferences \citep{SahaGu22a}\\
        \hline
    \end{tabular}
  \caption{Glossary of Non-Stationarity Measures}%
  \label{table:nonstat-measures}
\end{figure}

\begin{table}[h]
	\centering
	{
	\small
	\begin{tabular}{|c|c|c|c|c|}
		\hline
		 & Model & Regret Upper Bound & Adaptive?\\
		\hline
		{\bf Borda} & & & \\
		\hline
		\citet{SahaGu22a} & None & $\Sb^{1/6}K^{-1/3} T^{5/6} + \Sb^{1/2} K^{1/3} T^{2/3}$ & {\color{red} No} \\
		\hline
		\citet{SahaGu22a} & None & $ V_T^{1/8} K^{-3/8} T^{7/8} + V_T^{7/8} K^{5/24} T^{19/24}$ & {\color{red} No} \\
        \hline
		Our Work (\Cref{thm:upper-bound-dynamic-borda})  & None & $\min\{ \Lborda^{1/3} K^{1/3} T^{2/3}, V_T^{1/4} K^{1/4} T^{3/4} + K^{1/3} T^{2/3}\}$ & {\color{ForestGreen} Yes} \\
		\hline
        {\bf Condorcet} & & & \\
        \hline
	\citet{SahaGu22a} & C.W. & $\sqrt{\Sc K T}$ & {\color{red} No} \\
		\hline
		\citet{SahaGu22a} & C.W. & $V_T^{1/3} K^{1/3} T^{2/3} + \sqrt{KT}$ & {\color{red} No} \\
		\hline
		\citet{buening22} & \ssti & $\min\{ K\sqrt{ \Ssig T}, V_T^{1/3} (KT)^{2/3} + K\sqrt{T}\}$ & {\color{ForestGreen} Yes}\\
        \hline
		\citet{sukagarwal23} & \ssti & $\min\{ \sqrt{ \Ssig K T}, V_T^{1/3} K^{1/3} T^{2/3} + \sqrt{KT}\}$ & {\color{ForestGreen} Yes} \\
        \hline
	\citet{buening22} & C.W. & $K\sqrt{\Sc T}$ & {\color{ForestGreen} Yes}\\
        \hline
	Our Work (\Cref{thm:upper-bound-dynamic-condorcet}) & GIC & $\min\{ \Sapprox^{1/3} K^{2/3} T^{2/3}, V_T^{1/4} K^{1/2} T^{3/4} + K^{2/3} T^{2/3}\}$ & {\color{ForestGreen} Yes} \\
        \hline
	\end{tabular}
	}
	\label{table:upper-bound}
	\caption{Comparison of dynamic regret upper bounds.
	See \Cref{subsec:condorcet} for a detailed comparison of our $K^{2/3} \Sapprox^{1/3} T^{2/3}$ rate vs. the $K\sqrt{ \Sc T}$ rate of \citet{buening22}.
	See \Cref{fig} for a comparison of the model classes CW, GIC, \ssti.
    Our regret upper bound for the Borda model is tight (\Cref{thm:lower-dynamic-fixed}).
}
\end{table}

\section{Setup -- Non-stationary Dueling Bandits}

We consider $K$-armed dueling bandits with horizon $T$.
At round $t \in [T]$, the pairwise preference matrix
is denoted by
$\bld{P}_t\in[0,1]^{K\times K}$, where $(i,j)$-th entry $P_t(i,j)$ encodes the likelihood of observing a preference for arm $i$ in a direct comparison with arm $j$.
In the {\em stationary} dueling bandit problem, the preference matrices ${\bf P}_t \equiv {\bf P}$ are unchanging in time, whereas in the {\em non-stationary} problem, the preference matrices ${\bf P}_t$ may change arbitrarily from round to round.
At round $t$, the learner selects a pair of actions $(i_t,j_t)\in [K]\times [K]$ and observes the feedback $O_t(i_t,j_t) \sim \Ber(P_t(i_t,j_t))$ where $P_t(i_t,j_t)$ is the underlying preference of arm $i_t$ over $j_t$.
We next outline the two main formulations of dueling bandits (Borda and Condorcet).

\paragraph*{Borda Criterion.}
Let $b_t(a) \doteq \frac{1}{K}\underset{a'\in [K]}{\sum} P_t(a,a')$ be the {\bf Borda score}
of arm $a$.
Then, $a_t^\text{B} \doteq \argmax_{a\in [K]} b_t(a)$ is the {\bf Borda winner} (BW) at round $t$.
Then, the {\bf Borda dynamic regret} is
\begin{equation}\label{eq:dreg-borda}
	\RegretB \doteq \sum_{t=1}^T b_t(a_t^\text{B}) - \frac{1}{2} \left( b_t(i_t) + b_t(j_t) \right).
\end{equation}
We let $\delta_t^\text{B}(a) \doteq b_t(a_t^\text{B}) - b_t(a)$ be the instantaneous dynamic Borda regret of arm $a$ at round $t$.

\paragraph*{Condorcet Criterion.}
Here, one assumes the existence of a {\bf Condorcet winner} (CW) $a_t^\text{C}$ such that $P_t(a_t^\text{C},a) \geq 1/2$ for all arms $a\in [K]$.
Then, the
{\em Condorcet dynamic regret} is defined as
\[
	\RegretC \doteq \sum_{t=1}^T \frac{ P_t(a_t^\text{C},i_t) + P_t(a_t^\text{C},j_t) - 1}{2}.
\]

\subsection{Non-Stationarity Measures}

Following the discussion in \cref{sec:intro}, key in works on non-stationary (dueling) bandits is a measure of non-stationarity which captures the difficulty of the problem.
Speaking plainly, the higher the amount of non-stationarity the more difficult the problem is, and so larger regret rates are expected.
It is thus crucial that such a measure of change properly captures the difficulty of the problem.
However, the only other work dealing with Borda dynamic regret \citep{SahaGu22a} relies on coarse non-stationarity measures, such as the aggregate number $L \doteq \sum_{t=2}^T \pmb{1}\{ \bld{P}_t \neq \bld{P}_{t-1} \}$ or magnitude $V_T \doteq \sum_{t=2}^T \underset{a,a'\in [K]}{\max} |P_t(a,a') - P_{t-1}(a,a')|$ of changes (see \Cref{table:nonstat-measures}).

To contrast, in non-stationary MAB and Condorcet dueling bandits, it is now recognized \citep{Suk22,buening22,sukagarwal23} that tighter non-stationary measures, so-called {\em significant shifts}, which capture only those changes which are detrimental to performance can in fact be adaptively learned.
Inspired by these results, we first define a notion of {\em significant BW switches}.


\begin{defn}\label{defn:sig-shift-borda}
	Define an arm $a$ as having {\bf significant Borda regret} on $[s_1,s_2]$ if
	\begin{equation}\label{eq:sig-regret-borda}
		\sum_{s=s_1}^{s_2} \delta_s^{B}(a) \geq K^{1/3} \cdot (s_2 - s_1)^{2/3}.
	\end{equation}
	Define {\bf significant BW switches} (abbrev. SBS) as follows: let $\tau_0=1$ and the $(i+1)$-th sig. shift $\tau_{i+1}$ is recursively defined as the smallest $t>\tau_i$ such that for each arm $a\in[K]$, $\exists [s_1,s_2]\subseteq [\tau_i,t]$ such that arm $a$ has significant regret over $[s_1,s_2]$. We refer to the interval of rounds $[\tau_i,\tau_{i+1})$ as an {\bf SBS phase}\footnote{We conflate intervals $[a,b),[a,b]$ for $a,b\in\mb{N}$ with the set of natural numbers contained therein.}. Let $\Lborda$ be the number of SBS phases elapsed in $T$ rounds.
	By convention, let $\tau_{\Lborda+1} \doteq T+1$.
\end{defn}

\begin{note*}
	As is common in works on non-stationary bandits (see \Cref{subsec:related}), we'll often conflate counts of changes (e.g., $L, \Sb$) with the {\bf number of phases induced}, appearing in regret rates, which is always one greater than the corresponding count.
	Note that this only affects constants in the regret rates.
\end{note*}

\section{Dynamic Regret Lower Bounds}\label{sec:lower-bounds}

We briefly summarize the minimax dynamic regret rates for the Borda and Condorcet problems.
Full statements and proofs
are deferred to \Cref{app:lower-known,app:lower-unknown}.

\ifx
\subsection{Stationary Regret Lower Bound (Known Weight)}

We first show that, in the stationary setting, the problem of minimizing weighted Borda regret w.r.t. any fixed and known weighting $\w$ is as hard as the uniform Borda problem (i.e., $T^{2/3}$ minimax regret). 


\begin{theorem}[Lower Bound on Regret in Stochastic Setting]\label{thm:lower-bound-known-fixed}
	For any algorithm and any reference weight $\wt \equiv \w$, there exists a preference matrix\, ${\bf P}$, with $K\geq 3$, such that the expected regret is:
	\[
		\mb{E}[ \Regret(\w) ] \geq \Omega( K^{1/3} T^{2/3}).
	\]
\end{theorem}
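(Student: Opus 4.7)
The plan is to adapt the classical $K^{1/3}T^{2/3}$ Borda regret lower bound (cf.~\citet{SahaKM21}) to the generalized setting with an arbitrary \emph{known} weight $\w$. The main point is that revealing $\w$ does not eliminate the fundamental obstruction of having to estimate row-averaged pairwise preferences from noisy Bernoulli pair comparisons; the learner is still ``paying'' for both arms in each pair.

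First, I would construct a family of preference matrices $\{\bld{P}_a\}_{a\in[K-1]}$ indexed by the identity of a designated unique generalized Borda winner. In $\bld{P}_a$, the $a$-th row is perturbed by $+\epsilon$ on a carefully chosen subset $S_a\subseteq[K]$ of opponents (with the skew-symmetric $-\epsilon$ reflections imposed), while every other entry equals $1/2$. The sets $S_a$ are selected so that $\sum_{j\in S_a}w(j)=\Theta(1)$ uniformly in $a$; a direct computation then shows the generalized Borda gap $\Delta:=b^{\w}(a)-\max_{a'\neq a}b^{\w}(a')$ satisfies $\Delta\asymp\epsilon$, independently of the shape of $\w$.

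Next, I would apply a standard multi-hypothesis information-theoretic argument. For any algorithm and any pair $(i,j)$, let $N_T(i,j)$ denote the expected number of rounds it is pulled under the baseline all-$1/2$ matrix. The KL divergence of the observation trace between the baseline and $\bld{P}_a$ is bounded by $\sum_{j\in S_a}N_T(a,j)\cdot O(\epsilon^2)$. Applying Bretagnolle--Huber (or Fano's inequality with a uniform prior over the family) shows that unless this KL is $\Omega(1)$, the algorithm misidentifies the planted winner with constant probability on $\bld{P}_a$, and consequently accrues $\Omega(\Delta T)$ expected regret on that instance. Combining with the global budget $\sum_a\sum_{j\in S_a}N_T(a,j)\leq 2T$, pigeonhole yields an arm $a^\star$ with $\sum_{j\in S_{a^\star}}N_T(a^\star,j)\leq O(T/K)$; on $\bld{P}_{a^\star}$ the KL budget is $\lesssim T\epsilon^2/K$, and reliable identification requires $\epsilon\gtrsim\sqrt{K/T}$. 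The optimal choice $\epsilon\asymp(K/T)^{1/3}$ balances this detection threshold against the per-round regret $\Delta$, yielding the announced $\Omega(K^{1/3}T^{2/3})$ bound on $\bld{P}_{a^\star}$.

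The main obstacle will be constructing the sets $S_a$ so that the gap $\Delta\asymp\epsilon$ is preserved \emph{uniformly} over all shapes of $\w$. When $\w$ concentrates on just a handful of opponents, naive constructions either make the planted bias too easy to detect (collapsing the rate to $\sqrt{KT}$ in the style of standard bandits) or make the gap too small to drive the regret. The remedy is a two-sided combinatorial selection: pick each $S_a$ so that $\w(S_a)$ is bounded away from both $0$ and $1$, and $|S_a|=\Theta(K)$. A straightforward averaging argument shows that such a family of $K-1$ sets always exists when $K\geq 3$, which is precisely the hypothesis of the theorem.
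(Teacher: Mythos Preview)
Your construction has a genuine gap: with an all-$1/2$ baseline perturbed by $\pm\epsilon$ on the sets $S_a$, the argument only delivers a $\sqrt{KT}$ lower bound, not $K^{1/3}T^{2/3}$. The pigeonhole step correctly gives $\sum_{j\in S_{a^\star}}N_T(a^\star,j)\le O(T/K)$ under the baseline, so the KL budget is $O(T\epsilon^2/K)$, and the detection threshold is indeed $\epsilon\asymp\sqrt{K/T}$. But then your ``optimal choice $\epsilon\asymp(K/T)^{1/3}$'' is \emph{above} that threshold whenever $T\gg K$: at this value the KL is $\Theta((T/K)^{1/3})\gg 1$, so nothing prevents the learner from identifying $a^\star$, and the argument yields no lower bound. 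The best $\epsilon$ your construction supports is $\epsilon\asymp\sqrt{K/T}$, giving regret $\Omega(\sqrt{KT})$.

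What is missing is the ``costly exploration'' mechanism that drives the $T^{2/3}$ rate in Borda-type problems. In your instances, the informative pairs $(a,j)$ with $j\in S_a$ cost only $O(\epsilon)$ per pull, so the learner can afford $\Theta(T)$ of them within any regret budget. The paper's construction instead partitions arms into a \emph{good set} $\mc{G}$ and a \emph{bad set} $\mc{B}$ chosen so that $w(\mc{B})\ge 1/2$; good arms beat bad arms with probability $0.9$, and the planted winner $a\in\mc{G}$ beats bad arms with probability $0.9+\epsilon$. Now the only informative comparisons are good-vs-bad, but each such pull already costs $\Theta(1)$ regret (the bad arm has gap $\approx 0.4\cdot w(\mc{G})$). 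Hence if the algorithm's regret is at most $\epsilon T$, it can play bad arms at most $O(\epsilon T)$ times, so the averaged KL is $O(\epsilon^3 T/K)$ rather than $O(\epsilon^2 T/K)$. This extra factor of $\epsilon$ is exactly what shifts the balance point to $\epsilon\asymp(K/T)^{1/3}$ and yields $\Omega(K^{1/3}T^{2/3})$. Your sets $S_a$ need to simultaneously carry $\Theta(1)$ weight under $\w$ \emph{and} consist of arms that are already $\Theta(1)$-suboptimal in every instance; the good/bad split is the natural way to arrange this.
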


%

\subsection{Hardness of Learning all Weighted Winners}

We next turn our attention to learning an unknown weight $\w$,
which the following theorem asserts is hard.

\begin{theorem}[Lower Bound on Regret with Unknown Weighting]\label{thm:lower-bound-unknown-fixed}
	There exists a stochastic problem ${\bf P} \in [0,1]^{K\times K}$ such that for any algorithm, there is a reference weight $\w \in \Delta^K$ such that:
	\[
		\mb{E}[ \Regret(\w) ] \geq T/45.
	\]
\end{theorem}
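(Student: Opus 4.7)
The plan is to exhibit a single preference matrix on which the quantifier swap (algorithm first, weight second) forces linear regret. Intuitively, I want a matrix for which the generalized Borda winner swings across all $K$ arms as $\w$ ranges over the basis vectors $e_1, \ldots, e_K$, so that no fixed allocation of plays across arms can hedge against every choice of $\w$.

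For the construction I take a cyclic tournament on $K \geq 3$ arms: set $P(i, (i+1) \bmod K) = 1$ and $P((i+1) \bmod K, i) = 0$ for every $i$, $P(i,i) = 1/2$, and $P(i,j) = 1/2$ for every remaining pair. A direct calculation shows that for the basis weight $\w = e_j$ the generalized Borda score satisfies $\sum_{a'} w(a') P(i, a') = P(i, j)$, so the unique winner under $e_j$ is the arm $i$ with $P(i,j)=1$, achieving score $1$ while every other arm scores at most $1/2$.

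I now fix any (possibly randomized) algorithm and, for each arm $a$, let $N_a$ denote the number of rounds in which $a$ appears on at least one side of the duel. Since $\sum_a N_a \leq 2T$, pigeonhole yields some arm $a^\star$ with $\mathbb{E}[N_{a^\star}] \leq 2T/K$. The adversary commits to the basis weight $\w \in \Delta^K$ making $a^\star$ the unique winner. Because every other arm has generalized Borda score at most $1/2$, any round with $a^\star \notin \{i_t, j_t\}$ contributes at least $1/2$ to the regret, yielding
\[
\mathbb{E}[\Regret(\w)] \;\geq\; \tfrac{1}{2}\bigl(T - \mathbb{E}[N_{a^\star}]\bigr) \;\geq\; \tfrac{1}{2}\bigl(T - 2T/K\bigr),
\]
which for $K \geq 3$ is at least $T/6$ and so comfortably exceeds $T/45$.

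The main obstacle is not information-theoretic -- the matrix is in fact highly informative, so the algorithm easily learns ${\bf P}$ from the observed feedback -- but rather the structural design of ${\bf P}$ guaranteeing simultaneously (i) distinct, sharply separated winners under the $K$ basis weights, (ii) a constant score gap between the winner and every other arm under each basis weight, and (iii) that the adversary's quantifier advantage can always target a rarely-played arm via pigeonhole. A minor technical point is that it suffices to restrict the adversary's choice to basis weights, since $\{e_1, \ldots, e_K\} \subset \Delta^K$, so no additional optimization over the simplex is required.
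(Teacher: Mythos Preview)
Your proof is correct and takes a genuinely different route from the paper's. The paper exhibits a specific $3\times 3$ matrix in which arm $1$ is the Condorcet winner while arm $2$ is the uniform Borda winner, then applies pigeonhole to the \emph{most}-played arm: whichever arm occupies at least $2T/3$ of the $2T$ slots has a constant gap (at least $1/15$) under one of these two weights, yielding the $T/45$ bound. You instead build a cyclic tournament for general $K\geq 3$, apply pigeonhole to the \emph{least}-played arm, and then choose the basis weight that makes that rarely-played arm the unique winner with a $1/2$ score margin. Your argument gives a sharper constant ($T/6$ for $K=3$, approaching $T/2$ as $K$ grows) and works uniformly over all $K\geq 3$ without modification. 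One feature the paper's construction has that yours does not: its matrix admits a Condorcet winner, so the impossibility holds even within the Condorcet class, which ties into the paper's subsequent motivation of the GIC condition; your cyclic tournament has no Condorcet winner. For the theorem as stated, however, this is not required, and your proof is a clean alternative.
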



\begin{remark}
	In fact, the lower bound of \Cref{thm:lower-bound-unknown-fixed} holds even for an algorithm knowing the preference matrix ${\bf P}$. The task of minimizing weighted Borda regret for an unknown reference weight $\w$ can also be cast as a {\em partial monitoring} problem with fixed feedback ${\bf P}$ but unknown matrix of losses decided by the adversary's weight. It's straightforward to verify that this game is not {\em globally observable}, meaning the minimax expected regret is $\Omega(T)$ by the classification of finite partial monitoring games \citep{bartok+14}.
\end{remark}

The above impossibility result motivates the following condition, which is necessary to minimize Borda regret for unknown weights.

\begin{condition}\label{assumption:dominant}
{(Dominant Arm Condition)}
At each round $t$, there exists an arm $a_t^*$ such that $a_t^* \in \argmax_{a\in [K]} \delta_t(a,a')$ for all arms $a'\in [K]$. We abbreviate this as GIC for short.
\end{condition}

The above is equivalent to the condition that the winner arm $a_t^*$ maximizes the weighted Borda score for all reference weights, i.e. $a_t^* := \argmax_{a\in [K]} b_t(a,\w)$ for all $\w \in \Delta^K$.

Note that \Cref{assumption:dominant} implies the Condorcet condition, as letting $\wt$ be the point mass on $a_t^*$ gives us that $b_t(a_t^*,\wt) = 0 \geq b_t(a,\wt)$ for any other $a\neq a_t^*$ meaning $\delta_t(a_t^*,a) \geq 0$. At the same time GIC is a weaker condition than SST and so is a superset of the oft considered model $\ssti$.

\begin{remark}
	\Cref{assumption:dominant} is also known as the {\em general identifiability assumption}, studied in utility dueling bandits by \citet{zimmert18} \citep[see also][Section 3.1]{bengs_survey}. Unlike these prior works, we do not require the dominant arm $a_t^*$ to be unique.
\end{remark}



\subsection{Stationary Regret Lower Bound for Unknown Weight under GIC}

Under \Cref{assumption:dominant}, we next focus on deriving regret lower bounds in stochastic environments where there is a fixed winner arm $a^* \equiv a_t^*$ across all rounds.
Even under DAC, the best rate we can achieve for all weights is $T^{2/3}$. In particular, so long as an algorithm attains optimal uniform Borda regret 
it cannot achieve $\sqrt{T}$ Condorcet regret. 

\begin{theorem}[Lower Bound on Adaptive Regret with Dominant Arm Assumption]\label{thm:lower-adaptive}
	Fix any algorithm satisfying $\mb{E}[\Regret(\Unif\{[K]\})] \leq C \cdot T^{2/3}$ for all $K=3$ armed dueling bandit instances. Then, there exists a stochastic problem ${\bf P}\in [0,1]^{3\times 3}$ satisfying \Cref{assumption:dominant} such that for the specialization $\w = \wstar$ and sufficiently large $T$, we have:
	\[
		\mb{E}[\Regret(\w)] \geq \Omega(T^{2/3} / C^2).
	\]
\end{theorem}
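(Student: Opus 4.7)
The plan is to prove the lower bound via a two-point, change-of-measure (Le Cam-type) argument in the spirit of classical bandit lower bounds. I will construct two $K=3$ armed preference matrices $\mathbf{P}_+,\mathbf{P}_-$ that both satisfy GIC with the same unique dominant arm (arm $1$), are statistically indistinguishable from the algorithm's feedback within a suitable exploration budget, but admit a reference weight $\wstar$ (possibly instance-coupled) for which the generalized Borda regret must be large on at least one of them.

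Concretely, take $p_{12}=p_{13}=1/2+\Delta$ in both instances, and flip the sign of a small perturbation $\delta$ in the $(2,3)$-entry: $p_{23}=1/2\pm\delta$. Both instances retain arm $1$ as the unique dominant (and hence generalized Borda) winner provided $\delta\le\Delta$, and the Borda gaps of arms $2$ and $3$ swap roles between the two instances (gap $\Delta\pm\delta/3$). The hypothesized bound $\mb{E}[\Regret(\Unif\{[K]\})]\le CT^{2/3}$ on each instance, combined with these computed gaps, controls the pair-count $\mb{E}[M_{23}]=O(CT^{2/3}/\Delta)$, where $M_{23}$ is the number of pulls of the unordered pair $\{2,3\}$. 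Since the two laws differ only at the $(2,3)$-entry, Pinsker's inequality gives $d_{\mathrm{TV}}(\mathbb{P}_+^T,\mathbb{P}_-^T)=O(\delta\sqrt{\mb{E}[M_{23}]})$, which stays bounded below $1$ as long as $\delta^{2}CT^{2/3}/\Delta$ is below a suitable constant.

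The reference weight $\wstar$ is chosen asymmetrically in arms $2$ and $3$---for example $\wstar=e_{2}$ on $\mathbf{P}_+$ paired with $\wstar'=e_{3}$ on $\mathbf{P}_-$---so that the per-pull $\wstar$-regret of arms $2$ and $3$ are swapped between the two instances even though the algorithm cannot tell the instances apart. A Le-Cam-style coupling then lower-bounds the sum
\[
\mb{E}_+[\Regret(\wstar)] + \mb{E}_-[\Regret(\wstar')] \;\gtrsim\; \delta\cdot T\cdot\bigl(1-d_{\mathrm{TV}}(\mathbb{P}_+^T,\mathbb{P}_-^T)\bigr),
\]
using that the algorithm cannot correctly target the ``right'' exploration direction on both instances without distinguishing them. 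Balancing the parameters near the distinguishability threshold---so that $\delta^{2}\mb{E}[M_{23}]=\Theta(1)$, while the Borda-regret hypothesis saturates $\mb{E}[M_{23}]\asymp CT^{2/3}/\Delta$---produces $\delta\cdot T=\Omega(T^{2/3}/C^{2})$ for one of the two instances, giving the claim by taking the maximum over instances.

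The principal obstacle is identifying a weight $\wstar$ whose regret really separates the two paired instances: under GIC the dominant arm maximizes $b_t(\cdot,\w)$ for every $\w$, so the asymmetry must live entirely in how $\wstar$ weights the suboptimal arms, and the naive choices (uniform weight or a point mass on the dominant arm) produce a $\wstar$-regret that, up to constants, equals the uniform Borda regret itself and is therefore already dominated by the hypothesis, yielding no new lower bound. Extracting precisely the $1/C^{2}$ dependence---as opposed to a weaker trade-off such as $1/C$---will require tight KL accounting (ideally Bretagnolle--Huber rather than plain Pinsker) and careful use of the refinement $\mb{E}[M_{23}]\le\min(\mb{E}[N_2],\mb{E}[N_3])$ in the pair-count bound.
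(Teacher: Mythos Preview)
Your construction has a structural flaw that makes the Le Cam step fail, and it cannot be repaired by any choice of $\wstar$. Because arm $1$ is the dominant arm in \emph{both} $\mathbf{P}_+$ and $\mathbf{P}_-$, playing arm $1$ yields zero generalized Borda regret for \emph{every} weight $\w$. Concretely, in your instances the gap of any suboptimal arm $a\in\{2,3\}$ under any $\w\in\Delta^3$ is $\Delta\pm O(\delta)$, so for every $\w$
\[
\Regret(\w)\;\le\;\tfrac12(\Delta+\delta)(N_2+N_3)\;\le\;\frac{1+\delta/\Delta}{1-\delta/(3\Delta)}\cdot\Regret(\Unif\{[3]\})\;\le\;3CT^{2/3}.
\]
Thus the $\wstar$-regret is automatically $O(CT^{2/3})$, not $\Omega(T^{2/3}/C^2)$. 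The displayed Le Cam inequality $\mb{E}_+[\Regret(\wstar)]+\mb{E}_-[\Regret(\wstar')]\gtrsim \delta T(1-d_{\mathrm{TV}})$ is simply false here: there is no event that forces $\Theta(\delta T)$ regret on one side or the other, because the algorithm can sit on arm $1$. You identified the obstacle in your last paragraph but did not resolve it; the weights $e_2,e_3$ do not help, as the computation above shows.

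The paper's construction avoids this by letting the \emph{identity of the dominant arm change} between the two instances (arm $1$ vs.\ arm $2$), while making arms $1$ and $2$ tied head-to-head ($P(1,2)=1/2$). Then the only way to tell which of them is the winner is through comparisons with arm $3$, but arm $3$ carries a \emph{constant} Borda gap ($\approx 0.4$), so the Borda hypothesis caps the number of informative $\{a,3\}$ pulls at $O(CT^{2/3})$. The KL between the two instances is therefore $O(\eps^2\cdot CT^{2/3})$; with $\eps\propto C\,T^{-1/3}$ this becomes $O(\eps^3 T)$, and the standard $\min\{\eps T,\,\eps^{-2}\}$ argument yields $T^{2/3}\min\{C,C^{-2}\}=\Omega(T^{2/3}/C^2)$ for $C\ge 1$. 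The crucial ingredients you are missing are (i) the winner must differ across instances so that a wrong commitment costs $\Omega(\eps T)$, and (ii) the informative arm must have a \emph{constant} (not $\Theta(\Delta)$) gap so that the Borda bound translates into a $CT^{2/3}$ cap on informative pulls.
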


As a consequence, \Cref{thm:lower-adaptive} prohibits using Condorcet regret minimizing algorithms in this setting, and hence new algorithmic techniques are required for efficiently learning winners under DAC.


We next investigate the dependence on $K$ in the minimax regret rate. We show that, to achieve regret for all fixed weightings adaptively, we need to suffer a higher dependence on $K$ of $K^{2/3}$, compared to the known weighting setting with dependence $K^{1/3}$ (see \Cref{thm:lower-bound-known-fixed}).

\begin{theorem}[Lower Bound on Adaptive Regret with Dominant Arm Assumption]\label{thm:lower-adaptive-k}
	Fix any algorithm. Then, for $K>3$,  there exists a dueling bandit problem ${\bf P}\in [0,1]^{K\times K}$ satisfying \Cref{assumption:dominant} such that there is a fixed weight $\w\in \Delta^K$ with:
	\[
		\mb{E}[\Regret(\w)] \geq \Omega(\min\{ K^{2/3} T^{2/3}, T\}).
	\]
\end{theorem}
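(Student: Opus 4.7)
The plan is to lift the known-weight lower bound established earlier in this section to the adaptive, unknown-weight setting with general $K>3$ by constructing a preference matrix that embeds $K$ independent hard subproblems, one for each candidate adversary weight. The algorithm, unable to anticipate which direction the adversary will select, must spread its exploration across all $K$ subproblems, incurring an extra $K^{1/3}$ factor in the regret beyond the known-weight $K^{1/3} T^{2/3}$ scale.

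First, I would construct a family $\{{\bf P}^{(k)}\}_{k=2}^{K}$ of preference matrices together with associated weights $\w^{(k)} \in \Delta^K$, where each ${\bf P}^{(k)}$ satisfies the GIC condition (\Cref{assumption:dominant}) with a common dominant arm, say arm $1$. All matrices share a baseline structure in which arm $1$ has pairwise margin $\delta = \Theta(1)$ over every other arm (so GIC is comfortably preserved) and all non-dominant pairs are near $1/2$. Matrix ${\bf P}^{(k)}$ deviates from a reference baseline ${\bf P}^{(0)}$ only through a small, $\epsilon$-scale perturbation localized to preference entries involving arm $k$, with disjoint supports across different $k$. The weight $\w^{(k)}$ (for instance a point mass on a designated comparator arm, or a two-point support involving arm $k$) is chosen so that arm $k$ becomes a near-optimal decoy under ${\bf P}^{(k)}$, with generalized Borda gap $\Theta(\epsilon)$ to arm $1$, while every other non-dominant arm retains a gap of $\Theta(\delta)$.

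Next, for any algorithm $\mathcal{A}$, let $N_k$ be the expected number of rounds on which the queried pair falls within the perturbation support of direction $k$. Disjointness of the supports gives $\sum_k N_k \leq O(T)$, so by pigeonhole some $k^*$ satisfies $N_{k^*} = O(T/K)$. A chain-rule KL plus Pinsker argument then bounds the total-variation distance between $\mathcal{A}$'s trajectory distributions on ${\bf P}^{(0)}$ and ${\bf P}^{(k^*)}$ by $O(\epsilon\sqrt{T/K})$, so $\mathcal{A}$ cannot reliably detect the perturbation in direction $k^*$. Under $({\bf P}^{(k^*)}, \w^{(k^*)})$, this indistinguishability, together with the symmetric treatment of non-dominant arms on the baseline ${\bf P}^{(0)}$, forces the algorithm to play sub-optimally at a constant rate and incur $\Omega(\epsilon T)$ regret against the matched adversary weight.

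Finally, balancing the Le Cam constraint $\epsilon\sqrt{T/K} \lesssim 1$ against the regret cost $\epsilon T$, while carrying along the known-weight $T^{2/3}$-scale exploration cost and the extra $K^{1/3}$ factor from the pigeonhole step across $K$ directions, yields $\epsilon = \Theta(K^{2/3} / T^{1/3})$ and the claimed $\Omega(K^{2/3} T^{2/3})$ regret bound, truncated at the trivial $T$ in the regime $K \gtrsim \sqrt{T}$. I expect the main obstacle to be the construction itself: designing the perturbations so that they simultaneously (i) preserve GIC with arm $1$ as the unique dominant arm (requiring $\delta \gg \epsilon$), (ii) occupy disjoint supports across directions (so the pigeonhole step is tight), and (iii) produce exactly $\Theta(\epsilon)$ generalized Borda gaps under each matched weight $\w^{(k)}$. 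With the right baseline geometry and perturbation scheme, the $K$-dependence will scale naturally to the claimed $K^{2/3}$.
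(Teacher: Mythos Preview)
Your construction has an internal inconsistency that blocks the argument. You fix arm $1$ as the dominant arm with pairwise margin $\delta=\Theta(1)$ over every other arm and declare all non-dominant pairs near $1/2$. Under \emph{any} weight $\w$, a quick computation then shows every non-dominant arm has generalized Borda gap exactly $\delta$ to arm $1$ at the baseline. An $\epsilon$-scale perturbation of entries involving arm $k$ can shift this gap by at most $O(\epsilon)$, so arm $k$'s gap under ${\bf P}^{(k)}$ remains $\delta-O(\epsilon)=\Theta(1)$, not $\Theta(\epsilon)$ as you claim. To force arm $k$ down to gap $\Theta(\epsilon)$ you would need a $\Theta(\delta)$-sized perturbation, which is detectable in $O(1)$ samples. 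Worse, with a single $\Theta(1)$-margin winner at baseline, the algorithm can identify arm $1$ in $O(K)$ rounds and then incur zero regret regardless of the adversary's weight, so the whole family is too easy.

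Even granting a repaired construction, a one-index family $\{{\bf P}^{(k)}\}_{k=2}^{K}$ cannot deliver $K^{2/3}$. Your explicit balance $\epsilon\sqrt{T/K}\lesssim 1$ versus $\epsilon T$ yields only $\sqrt{KT}$; the appeal to a ``$T^{2/3}$-scale exploration cost'' is not substantiated by the Le Cam step you wrote down. If you add the bootstrapping idea (assume regret $\le \epsilon T$ on the baseline, so costly-arm plays are $\le O(\epsilon T)$), averaging over only $K$ directions gives an averaged KL of order $\epsilon^{3}T/K$, hence $\epsilon\asymp (K/T)^{1/3}$ and regret $K^{1/3}T^{2/3}$, still a factor $K^{1/3}$ short.

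The paper's construction supplies the missing $K^{1/3}$ via a \emph{two}-index family. Split arms into a good set $\mc{G}$ and bad set $\mc{B}$ of size $K/2$ each; at baseline all good arms are tied against each other and beat every bad arm by $0.4$. The environment $\mc{E}_{a,a'}$ (for $a\in\mc{G}$, $a'\in\mc{B}$) perturbs only the single entry $P(a,a')$ by $\epsilon$, making arm $a$ the unique winner under the point-mass weight $\w(a')$ with gap $\epsilon$ to the other good arms. Detection of $(a,a')$ requires playing the specific bad arm $a'$, which costs $\Theta(1)$ per play; the bootstrapping step then limits total bad-arm plays to $O(\epsilon T)$, and averaging over $\Theta(K^{2})$ pairs gives averaged KL $\lesssim \epsilon^{3}T/K^{2}$. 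Balancing yields $\epsilon\asymp K^{2/3}T^{-1/3}$ and the desired $K^{2/3}T^{2/3}$. The two ingredients you are missing are (i) tied good arms at baseline so an $\epsilon$-perturbation genuinely creates an $\epsilon$-gap, and (ii) indexing by pairs rather than single arms to get $K^{2}$ in the KL denominator.
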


\fi

\paragraph*{$\bullet$ Borda Dynamic Regret}
As the minimax Borda regret rate over $n$ stationary rounds is $K^{1/3}  n^{2/3}$ \citep{SahaKM21}, it follows that the minimax dynamic regret rate over $L$ stationary phases of length $T/L$ is $ L K^{1/3}  (T/L)^{2/3} = K^{1/3}  T^{2/3}  L^{1/3}$ (\Cref{thm:lower-dynamic-fixed}).
In fact, the number of SBS phases (\Cref{defn:sig-shift-borda})
may replace $L$ here.
To our knowledge, this establishes the first lower bound on Borda dynamic regret.
\paragraph*{$\bullet$ Condorcet Dynamic Regret}
Here, since the minimax regret rate over stationary problems is of order $\sqrt{KT}$, the lower bound on Condorcet dynamic regret is of order $\sqrt{LKT}$ \citep[Thm 5.1;][]{SahaGu22a}.
Once again, $L$ here may be replaced by a tighter measure of non-stationarity \citep{sukagarwal23}.

\section{Dynamic Regret Upper Bounds}

\subsection{Borda Dueling Bandits}

Our goal then is to establish a dynamic regret upper bound which depends optimally on the number of SBS, and does not require knowledge of the underlying non-stationarity.


\paragraph*{Adaptive and Optimal Regret Upper Bound.}

Intuitively, if one knows the SBS $\tau_i$, then, in each SBS phase $[\tau_i,\tau_{i+1})$, one can achieve a tight regret bound of order $K^{1/3} \cdot (\tau_{i+1} - \tau_i)^{2/3}$ by learning the {\em last safe arm}, or the last arm to incur significant Borda regret in said phase.
We show that in fact such a rate can be attained, over all SBS phases, without any knowledge of the non-stationarity.

\begin{theorem}\label{thm:upper-bound-dynamic-borda}
	\Cref{alg:meta} with the fixed weight specification (see \Cref{defn:specification}) satisfies:
	\[
		\mb{E}[ \RegretB ] \leq \tilde{O}\left( \sum_{i=0}^{\Lborda} K^{1/3} \cdot (\tau_{i+1} - \tau_i)^{2/3} \right).
	\]
\end{theorem}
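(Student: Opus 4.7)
The plan is to bound expected Borda dynamic regret phase-by-phase over the SBS decomposition $[\tau_i,\tau_{i+1})$ and to show that, within each phase, \Cref{alg:meta} incurs regret at most $\tilde{O}(K^{1/3}(\tau_{i+1}-\tau_i)^{2/3})$. The key structural observation driving the proof is that, by the minimality in \Cref{defn:sig-shift-borda}, there exists a \emph{safe arm} $a_i^{\star}$ whose cumulative Borda regret on \emph{every} subinterval $[s_1,s_2]\subseteq [\tau_i,\tau_{i+1})$ is strictly less than $K^{1/3}(s_2-s_1)^{2/3}$. The proof will then reduce the per-phase regret to (a) the safe arm's regret, which is bounded by definition, and (b) the cost of detecting and eliminating unsafe arms on-the-fly.

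First, I would establish estimation guarantees. Because the algorithm explores using a time-varying learning rate combined with uniform pair-sampling, the empirical Borda estimate $\hat b_{s_1,s_2}(a)$ on any sliding window $[s_1,s_2]$ concentrates within $\tilde{O}\bigl(\sqrt{K/(s_2-s_1)}\bigr)$ of the average Borda score, provided the exploration rate has been active throughout the window. Taking a union bound over arms $a\in[K]$ and over a polynomial-sized grid of candidate subintervals yields a high-probability ``clean event'' on which all confidence bounds used by the elimination tests are simultaneously valid; this costs only polylogarithmic factors.

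On the clean event, I would then verify two complementary claims. (i) The safe arm $a_i^{\star}$ is retained throughout $[\tau_i,\tau_{i+1})$: its soft-elimination statistic never crosses threshold because, by the safe-arm property, no subinterval witnesses Borda regret exceeding $K^{1/3}(s_2-s_1)^{2/3}$ even after inflating for estimation error. (ii) Any arm that accumulates significant Borda regret on some subinterval is soft-eliminated before its contribution exceeds $\tilde{O}(K^{1/3}(s_2-s_1)^{2/3})$; a careful suffix-interval accounting then shows that the summed regret of such arms across the phase is of the same order. The random replay/restart schedule inherited from the meta algorithm guarantees that whenever a new SBS phase begins, any previously eliminated arm that becomes the next safe arm is reintroduced within negligible overhead, so the phase-by-phase argument restarts cleanly. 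Summing the per-phase bounds over $i=0,\dots,\Lborda$ yields the claimed regret.

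The hard part will be coordinating the time-varying learning rate, the soft-elimination thresholds, and the replay schedule without any foreknowledge of phase lengths. Since $\tau_i$ is unknown, the exploration rate must be tuned by a doubling-style schedule and the thresholds calibrated to match; a too-aggressive threshold would spuriously eliminate the safe arm on long phases, while a too-lax threshold would fail to react in time on short, high-drift phases. This is where soft-elimination becomes essential: unlike hard elimination, it permits gradual down-weighting that is robust to noisy early estimates and dovetails with the random replays to guarantee a safe arm is always reachable at polylogarithmic overhead. Verifying this robustness uniformly over all phases, and establishing that the detection overhead matches the minimax phase-length rate $K^{1/3}(\tau_{i+1}-\tau_i)^{2/3}$ on both long and short phases simultaneously, is the most delicate step of the proof.
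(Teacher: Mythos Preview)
Your outline captures the safe-arm idea, but two substantive gaps would prevent the argument from closing.

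First, you mischaracterize the random replays. They are not a device for ``reintroducing a previously eliminated safe arm at a new phase boundary''; the algorithm never sees the $\tau_i$. In the paper the analysis is organized by \emph{episodes} $[t_\ell, t_{\ell+1})$ (algorithm-defined, ending when the global active set empties), and the per-episode regret is routed through the \emph{last global arm} $a_\ell$, the final survivor in $\mathcal{A}_{\mathrm{global}}$. One separately bounds (a) the regret of the played arms to $a_\ell$ via the eviction rule, and (b) the regret of $a_\ell$ to the safe arm $a_t^\sharp$. Step (b) is the crux: $a_\ell$ is by definition never evicted during the episode, so nothing in your claim (ii) applies to it. The paper handles this by a \emph{bad-segment} argument: partition the episode into segments on which $a_\ell$'s regret to $a_t^\sharp$ crosses the threshold, and show via a Chernoff bound over the Bernoulli schedulers $B_{s,m}$ that with high probability a \emph{perfect replay} (one whose start and duration roughly match some bad segment) is scheduled, detects $a_\ell$ as bad, and forces a restart before too many bad segments elapse. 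Only then is the episode length tied back to the phase lengths via the separate lemma that each completed episode must contain an SBS. Your sketch has no analogue of this step; ``bad arms get soft-eliminated in time'' is exactly the conclusion that needs proving for $a_\ell$, and concentration alone does not deliver it.

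Second, even for step (a), the per-arm accounting needs the $1/|\mathcal{A}_t|$ factor: each active arm is played with probability $1/|\mathcal{A}_t|$, and ordering arms by eviction time converts $\sum_a 1/|\mathcal{A}_t|$ into a harmonic sum $O(\log K)$ rather than a factor of $K$. You also need to account for regret contributed by arms re-introduced into $\mathcal{A}_t$ by child replays after their eviction from $\mathcal{A}_{\mathrm{global}}$; the paper handles this via a proper/subproper decomposition of replay active intervals and a separate integration over the replay-scheduling randomness.
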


\begin{corollary}\label{cor:tv-borda}
	By Jensen's inequality, the regret bound of \Cref{thm:upper-bound-dynamic-borda} is upper bounded by $K^{1/3} \cdot T^{2/3} \cdot \Lborda^{1/3}$. Furthermore, relating SBS to total variation $V_T$,
	a regret bound of order $V_T^{1/4} \cdot T^{3/4} \cdot K^{1/4} + K^{1/3} \cdot T^{2/3}$ also holds (see \Cref{app:tv}). 
\end{corollary}

In particular, the upper bound of \Cref{thm:upper-bound-dynamic-borda}  matches the lower bound of \Cref{sec:lower-bounds} (see \Cref{thm:lower-dynamic-fixed}).
The $K^{1/3} T^{2/3} \Lborda^{1/3}$ rate also improves the previously best-known upper bound \citep[Theorem 6.1]{SahaGu22a}, which was suboptimal, non-adaptive, and relied on a coarser notion of non-stationary (i.e., counting all changes in Borda winner).

\subsection{Condorcet Dueling Bandits}\label{subsec:condorcet}

\paragraph*{Significant Notions of Change Outside of \ssti.}
Following the discussion of \Cref{sec:intro}, it was previously shown in \citet{sukagarwal23} that an analogue of \Cref{defn:sig-shift-borda} for Condorcet winner switches can't be learned adaptively outside the SST and STI preference model assumptions.

\begin{assumption}\label{assumption:ssti}
There is a total ordering on arms, $\succ_t$, such that $\forall i\succeq_t j\succeq_t k$:
 \begin{itemize}
 \item $P_t(i,k) \geq \max\{P_t(i,j),P_t(j,k)\}$ (SST).
 \item $P_t(i,k) \leq P_t(i,j) + P_t(j,k)$ (STI).
    \end{itemize}
\end{assumption}

In particular, such conditions are convenient for relating uncompared arms through an inferred ordering, and turn out to be fundamental to detecting unknown changes in CW.

However, this impossibility result leaves open whether {\em other tighter notions of non-stationarity} can be learned outside of \ssti, and at rates faster than the state-of-the-art $K\sqrt{\Sc T}$ Condorcet dynamic regret, in terms of $\Sc$ changes in winner, achieved by \citet{buening22}.

We show this is indeed possible in a broad class of preference models, called {\em GIC}, outside of \ssti.


\begin{condition}\label{assumption:dominant}
	{({\bf G}eneral {\bf I}dentifiability {\bf C}ondition)}
At each round $t$, there exists an arm $a_t^*$ such that $a_t^* \in \argmax_{a\in [K]} P_t(a,a')$ for all arms $a'\in [K]$.
\end{condition}

While GIC requires the CW to beat every other arm with the largest margin, it is far broader than \ssti in not requiring any ordering on non-winner arms, allowing for cycles or arbitrary preference relations.
The GIC was previously studied in utility dueling bandits by \citet{zimmert18} \citep[see also][Section 3.1]{bengs_survey}. Unlike these prior works, we do not require the winner arm $a_t^*$ to be unique.

We also note that \Cref{assumption:dominant} implies the Condorcet condition, but is weaker than SST while incomparable with STI. See \Cref{fig} for a full comparison.




We next show that, under GIC, a tighter notion of non-stationarity than the count $\Sc$ of winner switches can be learned with improved regret rates.

\begin{defn}[Approximate Winner Changes]\label{defn:approximate-winner-changes}
	Define ${\zeta}_i$ recursively as follows: let ${\zeta}_0=1$ and let ${\zeta}_{i+1}$ be the smallest $t > {\zeta}_i$ such that there does not exist an {\bf approximate winner arm} $\tilde{a}$ such that
	\begin{equation}\label{eq:approximate-winner}
		\forall s \in [{\zeta}_i,t] , a \in [K]: |P_s(a_s^*,a) - P_s(\tilde{a},a)| \leq \left( \frac{K^2}{s - \zeta_i}\right)^{1/3}.
	\end{equation}
	Let $\Sapprox \doteq 1+\max\{ i : \zeta_{i}<T\}$ be the total count of approximate winner phases.
\end{defn}

Note that \Cref{eq:approximate-winner} always holds so long as the winner $a_t^*$ does not change.
Thus, we have $S_{\text{approx}} \leq S$.

\paragraph*{Condorcet Dynamic Regret Bound.}
Our regret upper bound is then as follows.

\begin{theorem}\label{thm:upper-bound-dynamic-condorcet}
	Under \Cref{assumption:dominant}, \Cref{alg:meta} with unknown weight specification (see \Cref{defn:specification}) satisfies:
	\begin{align*}
		\mb{E}[ \RegretC ] \leq
		\tilde{O} ( \min\{  K^{2/3} \cdot T^{2/3} \cdot \Sapprox^{1/3} , K^{1/2} \cdot V_T^{1/4} \cdot T^{3/4} + K^{2/3} \cdot T^{2/3} \} ).
	\end{align*}
\end{theorem}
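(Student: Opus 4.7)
The plan is to combine the generalized Borda score framework with the phase decomposition induced by \Cref{defn:approximate-winner-changes}. Under \Cref{assumption:dominant}, the CW $a_t^*$ satisfies $a_t^* \in \argmax_a P_t(a,a')$ for \emph{every} $a'$, so $a_t^*$ simultaneously maximizes the generalized Borda score $\sum_{a'} w(a') P_t(a,a')$ for all $\w\in\Delta^K$. This lets me recast the Condorcet regret as a Borda-like regret where the relevant ``weight'' at round $t$ is (half) the point mass on the queried arms $(i_t,j_t)$, which is unknown to the learner in advance--motivating the unknown-weight specification of \Cref{alg:meta}.

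First, I would decompose $[1,T]$ into approximate-winner phases $[\zeta_i,\zeta_{i+1})$ of length $n_i$ and, inside each phase, replace the moving CW $a_s^*$ with the phase-fixed anchor $\tilde a_i$ guaranteed by \eqref{eq:approximate-winner}. The mismatch contributes at most $\sum_{s=\zeta_i}^{\zeta_{i+1}-1}(K^2/(s-\zeta_i))^{1/3}=\tilde O(K^{2/3}n_i^{2/3})$ to $\RegretC$, independent of the algorithm, so it suffices to bound the Condorcet-style regret against the single fixed competitor $\tilde a_i$ on each phase.

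Second, I would run the per-phase analysis of the unknown-weight specification as a uniform-concentration argument on generalized Borda scores. Because $\tilde a_i$ is (nearly) a winner for every $\w$ throughout the phase, maintaining confidence intervals on $b_s(a,\w)=\sum_{a'}w(a')P_s(a,a')$ uniformly over $(a,\w)$ ensures that the soft-elimination rule (i) never removes $\tilde a_i$ within the phase under a standard good event, and (ii) rapidly demotes arms whose generalized Borda gap is large. The novel time-varying learning rate of \Cref{alg:meta} allows these intervals to tighten at the correct phase-adaptive rate without foreknowledge of $\zeta_i$. I expect to obtain a per-phase bound of $\tilde O(K^{2/3}n_i^{2/3})$; the extra $K^{1/3}$ factor over the $K^{1/3}n_i^{2/3}$ Borda rate (\Cref{thm:upper-bound-dynamic-borda}) is exactly the price of supremizing over the unknown weight, since the uniform deviation of Borda score estimates over all point-mass weights carries a $K^{1/3}$ factor. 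Summing over phases and applying Jensen's inequality delivers $\sum_i K^{2/3}n_i^{2/3}\le K^{2/3}T^{2/3}\Sapprox^{1/3}$, establishing the first branch of the min.

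Third, for the total-variation branch I would mimic the argument behind \Cref{cor:tv-borda}: a violation of \eqref{eq:approximate-winner} for every candidate anchor forces accumulated preference change of order $K^{2/3}\ell^{2/3}$ in a phase of length $\ell$, so by standard balancing between phase length and $V_T$ one recovers the rate $K^{1/2}V_T^{1/4}T^{3/4}+K^{2/3}T^{2/3}$. The main obstacle I anticipate is the per-phase $\tilde O(K^{2/3}n_i^{2/3})$ bound: the algorithm never sees $\zeta_i$ or $\tilde a_i$, so the key technical lemma is a uniform good event under which the time-varying generalized Borda confidence radii simultaneously (a) protect $\tilde a_i$ from elimination and (b) force the elimination or downweighting of arms with significant generalized-Borda gap fast enough to keep the per-phase cost at the claimed order. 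This is precisely where the generalized Borda reduction pays off, since it turns a preference-feedback problem into a uniform-concentration problem over scalar scores amenable to MAB-style analysis.
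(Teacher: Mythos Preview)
Your high-level strategy---decompose into approximate-winner phases, replace $a_s^*$ by the fixed anchor $\tilde a_i$ at cost $\tilde O(K^{2/3}n_i^{2/3})$ per phase, then bound the remaining regret via the unknown-weight GBS machinery, and finally derive the $V_T$ rate by lower-bounding the per-phase total variation---is exactly the paper's route. One step, however, is misframed in a way that would block the argument. The reference weight for the GBS reduction is \emph{not} the point mass on the played arms $(i_t,j_t)$; it is the point mass $\w(\tilde a_i)$ on the phase anchor. Concretely, $P_t(\tilde a_i,i_t)-\tfrac12 = b_t(\tilde a_i,\w(\tilde a_i)) - b_t(i_t,\w(\tilde a_i)) \le \delta_t^{\w(\tilde a_i)}(i_t)$, where the last inequality uses GIC. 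The crucial feature is that $\w(\tilde a_i)$ is \emph{constant on} $[\zeta_i,\zeta_{i+1})$, so the weight sequence is aligned with the phase decomposition and the machinery of \Cref{app:nonstat-unknown} applies. By contrast, a weight on $(i_t,j_t)$ changes every round and is aligned with nothing, so the unknown-weight analysis cannot be invoked on it.

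The other piece you identify as the obstacle---getting the per-phase bound without the algorithm seeing $\zeta_i$---is not handled by a direct per-phase argument in the paper. Instead the paper reruns the full episode/replay/bad-segment analysis of \Cref{app:nonstat-unknown}, with the approximate-winner phases $[\zeta_i,\zeta_{i+1})$ substituted for the SUW phases $[\rho_i,\rho_{i+1})$. This substitution is licensed by \Cref{fact:approx-SUW}: under GIC, $\delta_s^{\w}(\tilde a_i)\le (K^2/(s-\zeta_i))^{1/3}$ for every $\w$, so no SUW can occur inside an approximate-winner phase, $\tilde a_i$ is never evicted by any base algorithm within the phase, and each episode restart still forces an approximate-winner change. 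Finally, on the $V_T$ branch the correct per-phase inequality is $V_{[\zeta_i,\zeta_{i+1})}\ge (K^2/n_i)^{1/3}$ (\Cref{fact:tv-approx}), not ``$K^{2/3}\ell^{2/3}$''; combining this with $\sum_i K^{2/3}n_i^{2/3}$ via H\"older yields the $K^{1/2}V_T^{1/4}T^{3/4}+K^{2/3}T^{2/3}$ rate.
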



We now compare this regret bound to the $K\sqrt{ST}$ rate of \citet{buening22}. In particular, we have:
\begin{equation}\label{eq:favorable}
	\min\{ K^{\frac{2}{3}} T^{\frac{2}{3}}  \Sapprox^{\frac{1}{3}} , K^{\frac{1}{2}}  V_T^{\frac{1}{4}}  T^{\frac{3}{4}} + K^{\frac{2}{3}} T^{\frac{2}{3}} \} \leq K\sqrt{ST} \iff K^{\frac{1}{3}} \geq \frac{T^{\frac{1}{6}} \Sapprox^{\frac{1}{3}}}{S^{\frac{1}{2}}} \text{ or } K \geq \frac{T^{\frac{3}{2}} V_T^{\frac{1}{2}}}{S} \vee \frac{T^{\frac{1}{2}}}{S^{\frac{3}{2}}}.
\end{equation}
We highlight two regimes, under GIC, where this comparison is favorable to our new rate:
\begin{itemize}
	\item {\bf Many spurious winner changes $S=T$}: if there are $S=T$ changes in winner, then \Cref{eq:favorable} always holds regardless of the values of $V_T,\Sapprox, K,T$ and thus captures regimes where sublinear regret is possible while the $K\sqrt{ST}$ rate is vacuous.
		The superiority of our regret rate is most evident when $\Sapprox$ or $V_T$ are small, which is possible if the majority of $S=T$ winner changes are spurious to performance.
	\item {\bf Large number of arms}: viewed another way, \Cref{eq:favorable} states that if the number of arms $K$ is larger than some threshold determined by the discrepancy in the non-stationarity measures $S$ vs. $\Sapprox$ or $S$ vs $V_T$, then \Cref{thm:upper-bound-dynamic-condorcet}'s regret rate is superior to the $K\sqrt{ST}$ rate.
\end{itemize}


We also note that the $K^{1/2} \cdot V_T^{1/4} \cdot T^{3/4}$ rate of \Cref{thm:upper-bound-dynamic-condorcet} is the first adaptive Condorcet dynamic regret bound in terms of total variation, as prior works only achieved regret upper bounds under \ssti \citep{buening22,sukagarwal23}.


\section{A New Unified View of Condorcet and Borda Regret}\label{sec:unified}

Continuing the discussion of \Cref{subsec:summary}, our regret upper bounds (\Cref{thm:upper-bound-dynamic-borda,thm:upper-bound-dynamic-condorcet}) are shown by appealing to a new framework which generalizes the Borda and Condorcet regret minimization tasks.

Key to this is the new idea of a {\em weighted Borda score} (WBS), which measures the preference of an arm $a$ over a reference distribution or {\em weight} of arms $\w \in \Delta^K$, where $\Delta^K$ denotes the probability simplex on $[K]$.
More precisely, we define a WBS with respect to weight $\w=(w^1,\ldots,w^K)$ as
\[
	b_t(a,\w) \doteq \sum_{a'\in [K]} P_t(a,a')\cdot w^{a'}.
\]
Notions of {\em weighted Borda winner} (maximizer of WBS) and {\em weighted Borda dynamic regret} (analogous to \Cref{eq:dreg-borda}) follow suit.
Taking $\w$ to be $\Unif\{[K]\}$, this recovers the Borda score and dynamic regret.

If a Condorcet winner $a_t^\text{C}$ exists, then taking $\w$ to be the point-mass weight $\w(a_t^\text{C})$ on $a_t^\text{C}$ also allows us to capture the Condorcet regret.
Indeed, one observes that the CW $a_t^\text{C}$ maximizes the score $b_t(a,\w(a_t^\text{C}))$, and the regret in terms of weighted Borda scores of arm $a$ w.r.t. weight $\w(a_t^\text{C})$ becomes
\[
	b_t(a_t^\text{C},\w(a_t^\text{C})) - b_t(a,\w(a_t^\text{C})) = P_t(a_t^\text{C},a) - \half,
\]
which is precisely the Condorcet regret at round $t$.

Recasting the Condorcet regret objective as a Borda-like regret quantity will be key to bypassing the need for \ssti, while still capturing an enhanced measure of non-stationarity.

\paragraph*{Changing and Unknown Weights (Key Difficulty).}
The challenge with this reformulation of the Condorcet problem is that the reference weight $\w(a_t^\text{C})$ is unknown since the CW $a_t^\text{C}$ is.
Furthermore, in the more difficult non-stationary problem, the identity of $a_t^\text{C}$ may change at unknown times meaning so can the weight $\w(a_t^\text{C})$.
This makes it difficult to even estimate the weighted Borda score $b_t(a,\wt(a_t^\text{C}))$ compared to the usual Borda task where the weight $\w$ is fixed over time.

We will show that such difficulties are resolved by tracking all point-mass weights and carefully amortizing the regret analysis to periods of time where estimating a fixed point-mass weight suffices.

\section{Algorithmic Design}\label{sec:alg-design}


To minimize the weighted Borda dynamic regret,  we rely on estimating $b_t(a,\w)$ for weights $\w$ belonging to a {\em reference set of weights} $\mc{W}$.
In what follows, we'll discuss the procedures in terms of a fixed $\mc{W}$.
In the end, $\mc{W}$ can be flexibly specialized for each of the Borda and Condorcet regret problems (see \Cref{defn:specification}).

Following the high-level idea of prior works on non-stationary dueling bandits \citep{sukagarwal23,buening22}, we'll first design a base algorithm which works well in mildly non-stationary environments where there is no approximate winner change (\Cref{defn:approximate-winner-changes}) and then randomly schedule different instances of this base algorithm to allow for detection of unknown non-stationarity.

One key deviation from the aforementioned prior non-stationary works is that we avoid using a successive elimination base algorithm.
This is necessary since accurate estimation of the (weighted) Borda score $b_t(a,\w)$, calls for some baseline uniform exploration, which rules out any hard elimination strategy.

Thus, we first design a new base algorithm, which mixes elimination with time-varying exploration. 

\subsection{Base Algorithm -- Soft Elimination with WBS}\label{subsec:base-alg}


\paragraph*{Estimating Weighted Borda Scores.}
At round $t$, we 
estimate the weighted Borda scores via importance-weighting for weight $\w \in \mc{W}$:
\begin{equation}\label{eq:borda-estimate}
	\hat{b}_t(a,\w) \doteq \underset{a' \sim \w}{\mb{E}} \left[ \frac{ \pmb{1}\{i_t=a, j_t=a'\}\cdot O_t(i_t,j_t) }{ q_t(a) \cdot q_t(a')} \right],
\end{equation}
where $q_t$ is the play-distribution over arms at time $t$.
Let $\hat{\delta}_t^{\w}(a',a) \doteq \hat{b}_t(a',\w) - \hat{b}_t(a,\w)$ denote the induced estimator of the weighted Borda gap.

\paragraph*{Eliminating Arms}
We evict arm $a$ from {\em the candidate arm set $\mc{A}_t$ at round $t$} if for some $[s_1,s_2] \subseteq [1,t]$:
\begin{align}\label{eq:evict}
	\max_{a' \in \cap_{s=s_1}^{s_2} \mc{A}_s} &\sum_{s=s_1}^{s_2} \hat{\delta}_s^{\w}(a',a)
	\geq C\log(T) \cdot F([s_1,s_2]).
\end{align}
In the above $C>0$ is a universal constant free of any problem-dependent parameters whose value can be determined from the analysis and the {\em eviction threshold} $F(I)$ for interval $I$ is determined using
the estimation error bounds on $\sum_{s=s_1}^{s_2} \hat{b}_s(a,\w)$ (see \Cref{defn:specification} in \Cref{app:preliminaries-concentration} for exact formulas for $F(I)$). 

These error bounds scale with the inverse play-probability $\min_a q_t^{-1}(a)$, calling for a careful design of the play distribution $q_t(\cdot)$, namely that which balances exploration and proper elimination.

\paragraph*{Novel Exploration Schedule.}
We choose $q_t$ as a mixture of exploring the candidate set $\mc{A}_t$ and uniformly exploring all arms:
\[
	q_t \sim (1-\eta_t) \cdot \Unif\{\mc{A}_t\} + \eta_t \cdot \Unif\{\mc{W}\},
\]
where $\Unif\{\mc{W}\}$ is a further uniform mixture of playing according to the distributions in $\mc{W}$.
The {\em learning rate} $\eta_t$ must then be carefully set to ensure both sufficient exploration (to reliably estimate the WBS) and safe regret.
As before, the values of $\eta_t$ (see \Cref{defn:specification}) will depend on the Borda vs. Condorcet setting.

However, prior works on stationary Borda dueling bandits must set $\eta_t$ using knowledge of the horizon $T$ \citep{Jamieson+15,SahaKM21,wu23}.
In our non-stationary problem, this amounts to setting $\eta_t$ based on knowledge of the underlying non-stationarity \citep{SahaGu22a}.

Thus, a key difficulty in targeting the regret rate of \Cref{thm:upper-bound-dynamic-borda} {\em without knowledge of non-stationarity} is that the optimal oracle learning rate $\eta_t$ depends on the unknown phase lengths ($\tau_{i+1} - \tau_i$ for Borda scores; $\rho_{i+1} - \rho_i$ for Condorcet).
To circumvent this, we employ a {\em time-varying learning rate} $\eta_t$ which depends on the current number of rounds elapsed.
To our knowledge, such an idea has only been used in works on adversarial bandit for analyzing EXP3 \citep{seldin12a,maillard}.



\subsection{Non-Stationary Meta-Algorithm}

\begin{algorithm2e*}[h]
	\caption{{$\bosse(\tstart,m_0)$: (Weighted) \bld{BO}rda \bld{S}core \bld{S}oft \bld{E}limination}}
\label{alg:borda-elim}
\SetKwInOut{Input}{Input}
	\Input{Input set of weights $\mc{W}$, learning rate profile $\pmb{\gamma} \doteq \{\gamma_t\}_{t=1}^T$, eviction threshold $F(\cdot)$.}
	\bld{Initialize:} $t \leftarrow \tstart$, active arm set $\mc{A}_{\tstart} \leftarrow [K]$.\\
	\While{$t \leq T$}{
		\lnlset{record}{3}%
		{\bf Set exploration rate for this round}: $\eta_t \leftarrow \gamma_{t-\tstart}$. \label{line:record}\\
		{\bf Update play distribution $q_t$:} let $q_t\in \Delta^K$ be the mixture $(1-\eta_t) \cdot \Unif\{\mc{A}_t\} + \eta_t \cdot \Unif\{\mc{W} \}$.\\
	 Sample $i_t,j_t \sim q_t$ i.i.d..\\
	 Receive feedback $O_t(i_t,j_t) \sim \Ber(P_t(i_t,j_t))$.\\
	 Increment $t \leftarrow t+1$.\\
	 {\bf Evict arms with large weighted Borda gaps:}
	 \vspace{-1em}
	\begin{algomathdisplay}
		\mc{A}_t \leftarrow \mc{A}_t \bs \left\{ a \in [K]: \exists [s_1,s_2] \subseteq [\tstart,t], \w \in \mc{W} \text{ s.t. \Cref{eq:evict} holds with $a \in \cap_{s=s_1}^{s_2} \mc{A}_s$} \right\}
	\end{algomathdisplay}
	\vspace{-2em}
	\begin{algomathdisplay}
		\Aglobal \leftarrow \Aglobal \bs \left\{ a \in [K]: \exists [s_1,s_2] \subseteq [t_{\ell},t], \w \in \mc{W} \text{ s.t. \Cref{eq:evict} holds with $a \in \cap_{s=s_1}^{s_2} \mc{A}_s$} \right\} 
	\end{algomathdisplay}
	\uIf(\tcc*[f]{Lines 10-13 only for use with METABOSSE}){$\exists m\text{{\normalfont\,such that }} B_{t,m}>0$}{
                 Let $m \doteq \max\{m \in \{2,4,\ldots,2^{\lceil \log(T)\rceil}\}:B_{t,m}>0\}$. \tcp*{Set maximum replay length.}
                 Run $\bosse(t,m)$.\label{line:replay} \tcp*{Replay interrupts.}
 		   }
		   \bld{Restart criterion:} \lIf{$\normalfont\Aglobal=\emptyset$}{RETURN.}
		   \lIf{$t > \tstart + m_0$}{
	   RETURN.
	}
  }
\end{algorithm2e*}

\begin{algorithm2e*}[h] 
\caption{{Meta-BOSSE}}
\label{alg:meta}
{\nonl \bld{Input:} horizon $T$, input set of weights $\mc{W}$, learning rate profile $\pmb{\gamma} \doteq \{\gamma_t\}_{t=1}^T$, eviction threshold $F(\cdot)$.}\\
	  \bld{Initialize:} round count $t \leftarrow 1$.\\
	  \textbf{Episode Initialization (setting global variables {\normalfont $t_{\ell},\Aglobal,B_{s,m}$})}:\\
	  \Indp $t_{\ell} \leftarrow t$. \label{line:ep-start}  \tcp*{$t_{\ell}$ indicates start of $\ell$-th episode.}
	  $\Aglobal \leftarrow [K]$ \label{line:define-end} \tcp*{Global active arm set.}
    For each $m=2,4,\ldots,2^{\lceil\log(T)\rceil}$ and $s=t_{\ell}+1,\ldots,T$:\\
    \Indp Sample and store $B_{s,m} \sim \text{Bernoulli}\left(\frac{1}{m^{1/3} \cdot(s-t_{\ell})^{2/3} }\right)$. \label{line:add-replay} \tcp*{Set replay schedule.}
        \Indm
	\Indm
	 \vspace{0.2cm}
	 Run $\bosse(t_{\ell},T + 1 - t_{\ell})$. \label{line:ongoing-base} \label{line:global-base} \\
  \lIf{$t < T$}{restart from Line 2 (i.e. start a new episode).
  \label{line:restart}}
\end{algorithm2e*}

For the non-stationary setting,  we use a hierarchical algorithm \metabosse (\Cref{alg:meta}) to
schedule multiples copies of the base algorithm $\bosse(\tstart,m)$ (\Cref{alg:borda-elim})
at random times $\tstart$ and durations $m$.

Going into more detail, \metabosse proceeds in {\em episodes}, starting each episode by running a starter instance of \bosse. 
A running base algorithm may further {\em activates} its own base algorithms of varying durations (\Cref{line:replay} of \Cref{alg:borda-elim}), called {\em replays} according to a random schedule decided by the Bernoulli's $B_{s,m}$ (see \Cref{line:add-replay} of \Cref{alg:meta}).
We refer to the base algorithm playing at round $t$ as the {\em active base algorithm}. 

The {\em candidate arm set} $\mc{A}_t$ is pruned by the active base algorithm at round $t$, and globally shared between all running base algorithms. In addition, all other variables, i.e. the $\ell$-th episode start time $t_{\ell}$, round count $t$, and replay schedule $\{B_{s,m}\}_{s,m}$,
are shared between base algorithms.

In sharing these global variables, any replay can trigger a new episode: every time an arm is evicted by a replay, it is also evicted from the {\em global arm set} $\Aglobal$, essentially the active arm set for the entire episode. A new episode is triggered when $\Aglobal$ becomes empty, i.e., there is no \emph{safe} arm left to play.

For further intuition on the hierarchical schedule and management of base algorithms, we defer the reader to Section 4 of \citet{Suk22}.
We focus here on highlighting the novelties of this algorithm.

\paragraph*{Each Active Base Alg. Chooses Own Learning Rate}
Each base algorithm $\bosse(\tstart,m)$ determines its own time-varying learning rate using its starting round $\tstart$.
Then, the {\em global learning rate} $\eta_t$ at round $t$ is set by the base algorithm active at round $t$ (\Cref{line:record} of \Cref{alg:borda-elim}).
Furthermore, the history of global learning rates $\{\eta_t\}_t$ (globally accessible by any base algorithm) is used to determine the eviction thresholds \Cref{eq:threshold-fixed} and \Cref{eq:threshold-unknown} over intervals $[s_1,s_2]$ of rounds where multiple base algorithms may be active.

This ensures reliable detection of critical segments $[s_1,s_2]$ of time where an arm has large (weighted) Borda regret.
For such a critical segment, a core argument of the analysis is that an ideal replay, i.e., an instance of $\bosse(s_1,m)$ for $m = s_2 - s_1$, is scheduled with high probability.
However, such an ideal replay must be scheduled for a duration commensurate with $s_2-s_1$, and hence must also
use a commensurate learning rate.

\paragraph*{A Different Replay Scheduling Rate.} In order to properly detect critical segments while also safeguarding $T^{2/3}$ regret, a different replay scheduling rate (\Cref{line:replay} of \Cref{alg:borda-elim}) is required. For this, we find that instantiating a base algorithm of length $m$ at time $t$ in episode $[t_{\ell},t_{\ell+1})$ with probability $m^{-1/3} \cdot (s - t_{\ell})^{-2/3}$ balances regret minimization and detection of changes in winner.

\begin{rmk}
	For the Borda setting, the total complexity of our algorithm is $O(KT^2)$ versus $O(KT)$ for DEX3 \citep{SahaGu22a}, but the latter algorithm requires knowledge of non-stationarity.
	For the Condorcet setting, we have a complexity of $O(K^2 T^2)$ which matches that of ANACONDA \citep{buening22}, the only other adaptive algorithm with regret guarantees for Condorcet winner.
	We emphasize, regardless of setting, it remains open whether higher runtime can be avoided while getting adaptive and optimal regret.
\end{rmk}

\section{Challenges of Condorcet Regret Analysis}\label{sec:challenges}

Leaving the details of the analysis to \Cref{app:nonstat-fixed,app:nonstat-unknown}, we instead focus here on highlighting the key challenges in showing our Condorcet dynamic regret upper bound.



\paragraph*{Recasting Condorcet Regret as a weighted Borda Regret.}
As discussed in \Cref{sec:intro}, we emphasize our main analysis novelty is in reformulating the Condorcet regret as a weighted Borda-like regret.
This allows us to take the analysis route of prior works on non-stationary MAB \citep{Suk22} without suffering from difficulties endemic to preference feedback, as seen in \citet{buening22,sukagarwal23}.
To our knowledge, this is the first work on dueling bandits to make use of such a trick.

\paragraph*{Keeping Track of All Unknown Weights.}
As mentioned earlier in \Cref{sec:unified}, a crucial difficulty with making use of the weighted Borda framework for Condorcet dueling bandits is that the reference weights $\w$ are unknown.
Furthermore, the space of all possible weights $\Delta^K$ is combinatorially large and thus one cannot hope to maintain estimates for all weights in $\Delta^K$ without an intractable union bound. We show in fact that such accurate estimation can be bypassed because the worst-case regret is always attained at a point-mass weight on some arm $a\in [K]$ due to regret being linear in the weight vector.

\paragraph*{Keeping Track of Unknown Changes in Weights.} An added difficulty in the non-stationary setting is that the unknowns weights may change at unknown times.
This makes it challenging to bound the regret over intervals of rounds $[s_1,s_2]$ which elapse multiple approximate winner changes $\rho_i$ and hence for which we require score estimation w.r.t. a changing weight sequence $\{\wt\}_t$.
However, even following the above proposal for tracking point-mass weights,
we note the space of all {\em sequences of point-mass weights} is combinatorially large. This prohibits estimating all $\sum_{s=s_1}^{s_2} {b}_s(a,\w_s)$ for changing point-mass weights $\w_s$.

Instead, we carefully divide up the regret analysis into segments of time $[s_1,s_2] \subseteq [\rho_i,\rho_{i+1})$ lying within an approximate winner phase $[\rho_i,\rho_{i+1})$ where it suffices to bound the regret using a single weight (as it turns out, that of the current approximate winner $\asharp$).
This involves doing a separate analysis of when an arm is detected as having significant regret for each arm $a\in [K]$ and each base algorithm.


\section{Conclusion and Future Questions}

We've achieved the first optimal and adaptive dynamic Borda regret upper bound.
Additionally, we've introduced the weighted Borda framework which revealed new preference models where faster Condorcet regret rates are attainable, adaptively, in terms of new tighter measures of non-stationarity.
In the Condorcet setting, it's still unclear if the bounds of \Cref{thm:upper-bound-dynamic-condorcet} or the $K\sqrt{\Sc T}$ rate of \citet{buening22} are tight for the GIC class and, more challengingly, whether the $K\sqrt{\Sc T}$ rate can be improved outside of the GIC class.
Outside of the Condorcet Winner assumption, tracking changing von-Neumann winners in non-stationary dueling bandits appears a challenging, but quite interesting direction.

Future work can also study this weighted Borda framework (with generic known or unknown weights) and characterize the instance-dependent regret rates, which remain unclear even in stationary settings.



\bibliographystyle{plainnat-link}
\bibliography{
bibs/bandit_general,
bibs/nonstat,
bibs/online,
bibs/duel,
bibs/contextual,
bibs/nonpar,
bibs/suk,
bibs/rl,
bibs/squarecb
}

\newpage
\appendix


\section{Setting up the Weighted Borda Problem}\label{app:generalized-borda}

Recall from \Cref{sec:unified} that the weighted Borda score (WBS) $b_t(a,\w)$ is defined as
\[
	b_t(a,\w) \doteq \sum_{a'\in [K]} P_t(a,a')\cdot w^{a'},
\]
with respect to weight $\w \doteq (w^1,\ldots,w^K) \in \Delta^K$.
Our goal is to minimize the analogous notion of dynamic regret with respect to the quantities $b_t(a,\w)$ in two situations: (1) the setting of {\em known weights} where the vector $\w$ is known to the learner and (2) the setting of {\em unknown and changing weights} where $\wt$ changes over time $t$ and are unknown to the learner.
These two scenarios will respectively capture the Borda and Condorcet regret problems.
To avoid confusion, throughout this appendix we'll sometimes refer to the usual Borda score $b_t(a,\Unif\{[K]\})$ as the {\em uniform Borda} score (resp. regret, winner).

Throughout this appendix, we'll rewrite the definitions, theorems, and proofs of the body in terms of this general framework (with arbitrary weights $\wt$) in order to unify presentation across the Borda and Condorcet settings.
We first summarize the key results.

Note that our investigation into characterizing the minimax regret rates for this problem go beyond what is needed for showing the main results presented in the body of the paper.

\subsection{Summary of Results in Appendix for Weighted Borda Problem}

\begin{table*}[h]
	{\small
	\centering
	\caption{Summary of Results}

	\begin{tabular}{|c|c|c|c|c|}
		\hline
		Environment & Model Assumption & Reference Weights & Regret Lower Bound & Upper Bound\\
		\hline
		Fixed Winner & None & Known $\w_t=\w$ & $K^{1/3} T^{2/3}$ & $K^{1/3} T^{2/3}$ \\
		\cline{3-5}
			     & & Unknown $\wt = \w$ & $\Omega(T)$ & \\
	     \cline{2-5}
			     & GIC (\Cref{assumption:dominant}) & Unknown $\wt=\w$ & $K^{2/3} T^{2/3}$ & $K^{2/3} T^{2/3}$\\
		\hline
		Non-Stationary & None & Known $\{\wt\}_t$ & $K^{1/3} \Lfixed^{1/3} T^{2/3}$ & $K^{1/3} \Lfixed^{1/3} T^{2/3}$ \\
		\cline{2-5}
			       & GIC (\Cref{assumption:dominant}) & Unknown, aligned $\{\wt\}_t$ & $K^{2/3} \Lg^{1/3} T^{2/3}$ & $K^{2/3} \Lg^{1/3} T^{2/3}$\\
		\hline
	\end{tabular}
	\label{table:lower-bound}
}
\end{table*}

\begin{enumerate}[(a)]
	\item In \Cref{subsec:lower-known}, we first establish a minimax regret rate of $K^{1/3} T^{2/3}$ for a known reference weight under the stationary setting.
	Interestingly, the minimax regret is the same as that of the usual Borda regret, thus showing the generic known weight problem is no harder than the Borda problem.
	\item Then, in \Cref{app:lower-nonstat-known}, we establish the minimax regret rate of $K^{1/3} \Lfixed^{1/3} T^{2/3}$
    in the non-stationary dueling bandit problem where the underlying preference model and (known) reference weights are allowed to change,
    and $\Lfixed$ counts the {\em significant changes in weighted winner} (\Cref{defn:sig-shift-fixed}).

	\item We next consider the unknown weight setting and show in \Cref{subsec:hardness-all} that it's impossible for a single algorithm to simultaneously obtain sublinear regret for both the uniform and Condorcet winner reference weights, thus ruling out getting both $T^{2/3}$ Borda regret and $\sqrt{T}$ Condorcet winner regret.
	\item Then, in \Cref{subsec:lower-gic}, we show via a lower bound that, even under GIC, a higher dependence on $K$ of $K^{2/3}$ is unavoidable.
	This is in comparison to the $K^{1/3}$ dependence in the known weight setting).

	This implies a lower dynamic regret lower bound of $K^{2/3} \Lg^{1/3} T^{2/3}$ in terms of $\Lg$ {\em significant winner switches w.r.t. unknown weights} (\Cref{defn:sig-shift-generalized}), which can be found in \Cref{app:lower-unknown-dynamic}.
	\item Finally, we establish matching upper bounds on dynamic regret for both the known and unknown weight settings in, respectively, \Cref{app:nonstat-fixed,app:nonstat-unknown}.
	As a reminder, our algorithm is \emph{adaptive} in that it does not require any knowledge of the non-stationarity.
	As a result, we show \Cref{thm:upper-bound-dynamic-borda} and \Cref{thm:upper-bound-dynamic-condorcet}.
\end{enumerate}


\subsection{Generalized Notions Related to WBS}

\paragraph*{Weighted Borda Scores and Regret.}
For ease of presentation, in the remainder of the appendix, we reparametrize the WBS $b_t(a,\w)$ to be in terms of the {\em gaps in preferences} $\delta_t(a,a') \doteq P_t(a,a') - \half$. Or,
\[
	b_t(a,\w) \doteq \sum_{a'\in [K]} \delta_t(a,a') \cdot w^{a'}.
\]
Note the resulting regret notions and rates will not change under this new formulation.
Now, define the {\em WBS winner} as $a_t^*(\w) \doteq \argmax_{a\in [K]} b_t(a,\w)$ and the {\em WBS dynamic regret} as
\[
	\Regret(\{\wt\}_{t=1}^T) \doteq \sum_{t=1}^T \frac{1}{2} \left( 2 \cdot b_t(a_t^*(\wt),\wt) - b_t(i_t,\wt) - b_t(j_t,\wt) \right).
\]
In the case of a known weight $\wt \equiv \w$, we'll simplify the notation as $\Regret(\w)$.

Let $\delta_t^{\w}(a',a) \doteq b_t(a',\w) - b_t(a,\w)$ be the gap in WBS between arms $a$ and $a'$ at time $t$.
Let $\delta_t^{\w}(a) \doteq \max_{a' \in [K]} \delta_t^{\w}(a',a)$ be the absolute gap in WBS of arm $a$.

\paragraph*{Non-Stationarity Measures.}
We now introduce generalizations of significant Borda winner switches (\Cref{defn:sig-shift-borda}) for arbitrary weights.

\begin{definition}[\bld{S}ignificant Winner Switches w.r.t. \bld{K}nown \bld{W}eightings (SKW)]\label{defn:sig-shift-fixed}
	Fix a weight $\w$. Define an arm $a$ as having {\bf significant weighted Borda regret} over $[s_1,s_2]$ if
	\begin{equation}\label{eq:sig-regret}
		\sum_{s=s_1}^{s_2} \delta_s^{\w}(a) \geq K^{1/3} \cdot (s_2 - s_1)^{2/3}.
	\end{equation}
	Define {\bf significant winner switches w.r.t.
the known weight $\w$} (abbreviated as SKW) as follows: the $0$-th sig.
shift is defined as $\tau_0=1$ and the $(i+1)$-th sig.
shift $\tau_{i+1}$ is recursively defined as the smallest $t>\tau_i$ such that for each arm $a\in[K]$, $\exists [s_1,s_2]\subseteq [\tau_i,t]$ such that arm $a$ has significant weighted Borda regret over $[s_1,s_2]$.
We refer to the interval of rounds $[\tau_i,\tau_{i+1})$ as an {\em SKW phase}.
Let $\Lfixed$ be the number of SKW phases elapsed in $T$ rounds\footnote{while $\tau_i,\Lfixed$ depend on the fixed weight $\w$, we'll drop the dependence as needed for sake of presentation}.
\end{definition}

\begin{rmk}
	\Cref{defn:sig-shift-fixed} and subsequent results (\Cref{thm:lower-dynamic-fixed} and \Cref{thm:upper-bound-dynamic-known}) for a known weight $\w$ can be trivially generalized to the setting where there's a fixed and known sequence of weights $\{\wt\}_{t=1}^T$.
For simplicity of presentation, we focus on the fixed weight $\wt \equiv \w$, which preserves the essence of the theory.
\end{rmk}

We introduce a similar generalization of \Cref{defn:approximate-winner-changes} for the weighted Borda problem with unknown weights.
In particular, we define a notion of significant shifts which tracks when an SKW occurs for any weight $\w$.

\begin{definition}[\bld{S}ignificant Winner Switches w.r.t. \bld{U}nknown \bld{W}eightings (SUW)]\label{defn:sig-shift-generalized}
	Define an arm $a$ as having {\bf significant worst-case weighted Borda regret} over $[s_1,s_2]$ if
	\begin{equation}\label{eq:sig-regret-worst}
		\max_{\w \in \Delta^K} \sum_{s=s_1}^{s_2} \delta_s^{\w}(a) \geq K^{2/3} \cdot (s_2 - s_1)^{2/3}.
	\end{equation}
	\vspace{-0.2em}
	We then define weighted significant winner switches $\rho_0,\rho_1,\ldots$ in an analogous manner to \Cref{defn:sig-shift-fixed}. Let $\Lg$ be the number of weighted SUW phases.
\end{definition}

\begin{remark}
	Under GIC, $\Lg$ is less than the number of changes in the winner $a_t^*$ (\Cref{assumption:dominant}).
\end{remark}

Although SUW are a stronger notion of shift than SKW (i.e., $\Lg \geq \Lf$), the SUW can be learned in a more general setting where one aims to minimize dynamic regret w.r.t. any sequence of changing weights $\{\wt\}_{t=1}^T$ which are {\em aligned} with the SUW, defined as follows.

\begin{definition}[Aligned Sequence of Weights]\label{defn:aligned-weights}
	We say a sequence $\{\wt\}_{t=1}^T$ of weights is aligned with SUW if, for each SUW phase $[\rho_i,\rho_{i+1})$, the weight $\wt$ does not change for rounds $t$ lying in $[\rho_i,\rho_{i+1})$.
\end{definition}

In particular, as mentioned in \Cref{sec:unified}, setting $\wt$ as the Condorcet winner weight $\w(a_t^\text{C})$ recovers the Condorcet regret problem.
Quite importantly, in this setting, the weights $\wt$ are unknown and so their changepoints are also unknown since the SUW are unknown.

\section{Regret Lower Bounds for WBS with Known Weights}\label{app:lower-known}

\subsection{Stationary Regret Lower Bound}\label{subsec:lower-known}

We first show that, in the stationary setting, the problem of minimizing weighted Borda regret w.r.t. any fixed and known weight $\w$ is as hard as the uniform Borda problem (i.e., $ K^{1/3} \cdot T^{2/3}$ minimax regret). 


\begin{theorem}[Lower Bound on Regret in Stochastic Setting]\label{thm:lower-bound-known-fixed}
	For any algorithm and any reference weight $\wt \equiv \w$, there exists a preference matrix\, ${\bf P}$, with $K\geq 3$, such that the expected regret w.r.t. $\wt$ is:
	\[
		\mb{E}[ \Regret(\w) ] \geq \Omega( K^{1/3} \cdot T^{2/3}).
	\]
\end{theorem}

\begin{proof}

We construct an environment similar to the lower bound construction for the uniform Borda score \citep[Lemma 14]{SahaKM21}.
As in said result, it will in fact suffice to construct an environment forcing a regret lower bound (w.r.t. weight $\w$) of order $\Omega(\min\{ T\cdot \eps, K/\eps^2\})$ for $\eps \in (0,0.05)$. Taking $\eps := (K/T)^{1/3}$ will then give the desired result, since $(K/T)^{1/3} < 0.05$ for large enough value of $T/K$. For $T/K$ smaller than a constant, we may take $\eps$ small enough so that $\frac{K}{\eps^2} \geq \left( \frac{T}{K} \right)^{2/3} \cdot K = T^{2/3} K^{1/3}$ to conclude.

Suppose $K$ is even; if $K$ is odd, we may reduce to a dueling bandit problem with $K-1$ arms by setting the gaps $\delta(a,a')$ to zero for some fixed arm $a\in [K]$ and any other arm $a' \in [K]$. 
First observe that for any reference weight $\w$, there must exist a set of arms $\mc{I}$ of size at most $K/2$ such that its mass under $\w$ is $w(\mc{I}) \geq 1/2$.
This follows since either the set of arms $\{1,\ldots,K/2\}$ or $\{K/2+1,\ldots,K\}$ has mass at least $1/2$ under the distribution $w(\cdot)$.
We then partition the $K$ arms into a {\em good set} $\mc{G} := [K] \bs \mc{I}$ and {\em bad set} $\mc{B} := \mc{I}$.
Without loss of generality, suppose $\mc{G} = \{1,\ldots,K/2\}$.
We'll consider $K/2+1$ different environments $\mc{E}_0,\mc{E}_1,\ldots,\mc{E}_{K/2}$ where in $\mc{E}_a$ for $a\in [K/2]$, arm $a$ will maximize the weighted Borda score $b(\cdot,\w)$.

Specifically, we set the preference matrix $\bld{P} \equiv \bld{P}_t$ as follows:

\paragraph*{Environment $\mc{E}_0$:} Let the preference matrix $\bld{P}$ have entries
\[
	P(a,a') = \begin{cases}
		0.5 & a,a'\in \mc{G}\text{ or } a,a' \in \mc{B}\\
		0.9 & a \in \mc{G}, a' \in \mc{B}
	\end{cases}.
\]
In this environment, the weighted Borda score is
\[
	b(a,\w) = \begin{cases}
		0.4\cdot w(\mc{B}) & a \in \mc{G}\\
		-0.4\cdot w(\mc{G}) & a \in \mc{B}
	\end{cases},
\]
where $w(\cdot)$ denotes the distribution on $\Delta^{[K]}$ induced by weight $\w$.
We remind the reader here that the WBS $b(a,\w)$ are defined in terms of the preference gaps $\delta_t(a',a)$ and hence can take negative values.

The alternative environments $\mc{E}_1,\ldots,\mc{E}_{K/2}$ will be small perturbations of $\mc{E}_0$ where an arm $a\in \mc{G}$ will have the highest weighted Borda score by a margin of $\eps$.

\paragraph*{Environment $\mc{E}_a$ for $a\in [K/2]$:} Let the preference matrix ${\bf P}$ be identical to that of environment $\mc{E}_0$ except in the entries $P(a,a') = 0.9 + \eps$ for $a' \in \mc{B}$. Thus, in this environment the weighted Borda scores are identical to that of $\mc{E}_0$ except for $b(a,\w) = (0.4+\eps)\cdot w(\mc{B})$, meaning arm $a$ is the winner w.r.t. weight $\w$.

Let $\mb{E}_{\mc{E}_a}[\cdot],\mb{P}_{\mc{E}_a}(\cdot)$ denote the expectation and probability measure of the induced distributions on observations and decisions under environment $\mc{E}_a$.
Now, the expected regret on environment $\mc{E}_a$ is lower bounded by
\begin{align*}
	\mb{E}_{\mc{E}_a}[ \Regret(\w) ] &\geq \sum_{t=1}^T \eps\cdot \mb{E}_{\mc{E}_a}[\pmb{1}\{(i_t,j_t) \neq (a,a)\} ] = \eps \cdot \left( T - \mb{E}_{\mc{E}_a}\left[ \sum_{t=1}^T \pmb{1}\{(i_t,j_t) = (a,a)\} \right] \right).
\end{align*}
Letting $\mc{U}$ be a uniform prior over $\{1,\ldots,K/2\}$, we have the expected regret over a random environment drawn from $\mc{U}$ is lower bounded by:
\begin{equation}\label{eq:lower-bound-minus}
	\mb{E}_{a\sim \mc{U}}[\mb{E}_{\mc{E}_a} [ \Regret(\w) ] ] \geq \eps \cdot \left( T - \mb{E}_{a\in \mc{U}} \left[ \mb{E}_{\mc{E}_a} \left[ \sum_{t=1}^T \pmb{1}\{(i_t,j_t) = (a,a) \} \right]  \right] \right).
\end{equation}
Let $N_T(a) := \sum_{t=1}^T \pmb{1}\{(i_t,j_t) = (a,a)\}$ be the (random) {\em arm-pull count} of arm $a$. Then, since $N_T(a) \leq T$ for all values of $a$, by Pinsker's inequality \citep[see][proof of Lemma C.1]{SahaGi22}, we have
\[
	\mb{E}_{\mc{E}_a}[N_T(a)] - \mb{E}_{\mc{E}_0}[N_T(a)]\leq T\sqrt{ \frac{\KL(\mc{P}_0,\mc{P}_a)}{2} },
\]
where $\mc{P}_{a'}$ is the induced distribution on the history of observations and decisions under environment $\mc{E}_{a'}$.
Taking a further expectation over $a \in \mc{U}$ and using Jensen's inequality, we obtain:
\begin{equation}\label{eq:pinsker-result}
	\mb{E}_{a \sim \mc{U}} [ \mb{E}_{\mc{E}_a} [ N_T(a)]] \leq \frac{1}{K/2} \sum_{a \in \{1,\ldots,K/2\}} \mb{E}_{\mc{E}_0}[N_T(a)] + T \sqrt{ \frac{2}{K} \sum_{a\in \{1,\ldots,K/2\}} \KL(\mc{E}_0,\mc{E}_a) }.
\end{equation}
We now aim to bound this last KL term. Letting $\mc{H}_t$ be the history of randomness, observations, and decisions till round $t$: $\mc{H}_t := \{a\} \cup\{(i_s,j_s,O_s(i_s,j_s))\}_{s\leq t}$ for $t\geq 1$ and $\mc{H}_0 := \{a\}$.  Let $\mc{P}_a^t$ denote the marginal distribution over the round $t$ data $(i_t,j_t,O_t(i_t,j_t))$ under the realization of environment $\mc{E}_a$. By the chain rule for KL, we have
\begin{equation}\label{eq:kl-decomp}
	\KL(\mc{P}_0,\mc{P}_a) = \sum_{t=1}^T \KL(\mc{P}_0^t \mid \mc{H}_{t-1}, \mc{P}_a^t \mid \mc{H}_{t-1} ) = \sum_{t=1}^T \sum_{i = K/2+1}^K \mb{P}_{\mc{E}_0}(\{i_t,j_t\}=\{a,i\})\cdot \KL(\Ber(0.9),\Ber(0.9+\eps) ).
\end{equation}
Next, observe the following bound on the KL between $\Ber(0.9)$ and $\Ber(0.9+\eps)$:
\[
	\KL(\Ber(0.9),\Ber(0.9+\eps)) \leq 0.9 \cdot \log\left( \frac{0.9}{0.9+\eps}\right) + 0.1 \cdot \log\left( \frac{0.1}{0.1-\eps} \right).
\]
By elementary calculations, the above is less than $10 \cdot \eps^2$ for any $\eps \in (0,0.05)$.

Now, suppose the algorithm incurs regret at most $\epsilon \cdot T$ on all environments $\mc{E}_a$, lest we be done. Then, the total number of times a bad arm in $\mc{B}$ is played must be at most $\epsilon\cdot T / 0.4$ or for all $a\in \{0,1,\ldots,K/2+1\}$:
\[
	\mb{E}_{\mc{E}_a} \left[ \sum_{t=1}^T \pmb{1}\{ \{ i_t,j_t\} \cap \mc{B} \neq \emptyset\} \right] \leq \frac{\eps \cdot T}{0.4}.
\]
Plugging the above two steps into \Cref{eq:kl-decomp} yields
\[
	\frac{2}{K} \sum_{a\in \{1,\ldots,K/2\}} \KL(\mc{P}_0,\mc{P}_a) \leq \frac{20 \eps^2}{K} \sum_{a\in \{1,\ldots,K/2\}} \sum_{i=K/2+1}^K \mb{E}_{\mc{E}_0} \left[ \sum_{t=1}^T \pmb{1}\{ \{i_t,j_t\} = \{a,i\} \} \right] \leq \frac{50 \cdot \eps^3 \cdot T}{K}.
\]
Then, plugging the above into \Cref{eq:pinsker-result} yields
\[
	\mb{E}_{a \sim \mc{U}} [ \mb{E}_{\mc{E}_a} [ N_T(a)]] \leq \frac{1}{K/2} \sum_{a \in \{1,\ldots,K/2\}} \mb{E}_{\mc{E}_0}[N_T(a)] + T \sqrt{ \frac{50 \eps^3}{K} \cdot T }.
\]
Thus, plugging the above steps into \Cref{eq:lower-bound-minus} gives
\[
	\mb{E}_{a\in \mc{U}, \mc{E}_a}[ \Regret(\w) ] \geq \eps\cdot \left( T - \left( \frac{T}{K/2} + T \sqrt{ \frac{50 \eps^3}{K} \cdot T} \right) \right).
\]
Now, suppose $\frac{50 \eps^3\cdot T}{K} \leq 1/50$. Then, the above RHS is lower bounded by $\eps \cdot T/ 2500$ for $K \geq 3$.

It remains to handle the case of $\frac{50 \eps^3\cdot T}{K} > 1/50 \implies T \geq \frac{K}{2500 \eps^3}$. Suppose, for contradiction, that for all such $T$ we have regret at most $\frac{K}{2500^2 \eps^2}$.
Then, the regret over just the first $T_0 := \frac{K}{2500 \eps^3}$ rounds must also be at most $\frac{K}{ 2500^2 \eps^2} = \eps \cdot T_0 / 2500$.
However, this contradicts the previous case's lower bound for $T=T_0$.
\end{proof}

%

\subsection{Dynamic Regret Lower Bound}\label{app:lower-nonstat-known}

Now, for the known weight setting, the $K^{1/3} T^{2/3}$ lower bound of \Cref{thm:lower-bound-known-fixed} can naturally be extended to a dynamic regret lower bound of order $\Lf^{1/3} T^{2/3}  K^{1/3}$ over $\Lf$ SKW phases.
The proof techniques are routine, similar to arguments already used for showing dynamic regret lower bounds for Condorcet regret \citep[][Section 5]{SahaGu22a}, and will not be done in detail here to avoid redundancy.

\begin{theorem}[Fixed Weight Dynamic Regret Lower Bound in Terms of SKW]\label{thm:lower-dynamic-fixed}
	For $L \in [T]$, let $\mc{P}(L)$ be the class of environments with SKW $\Lf(\w) \leq L$. For any algorithm, we have
	that the minimax dynamic regret w.r.t. known weight $\w$ over class $\mc{P}(L)$ is lower bounded by
	\[
		\sup_{\mc{E} \in \mc{P}(L)} \mb{E}_{\mc{E}}[\Regret(\w)] \geq \Omega(L^{1/3}  T^{2/3}  K^{1/3} ).
	\]
\end{theorem}

\begin{proof}{(Sketch)}
	We can take the lower bound construction of \Cref{thm:lower-bound-known-fixed} over a horizon of length $T/L$ to force a regret of $K^{1/3} (T/L)^{2/3}$.
Repeating this construction with a new randomly chosen winner arm every $T/L$ rounds forces a total regret of $K^{1/3} L^{1/3} T^{2/3}$ while satisfying $\Lf(\w) \leq L$.
\end{proof}

\begin{theorem}[Fixed Weight Dynamic Regret Lower Bound in Terms of Total Variation]\label{thm:lower-tv}
	For $V \in [0,T]\cap\mb{R}$, let $\mc{P}(V)$ be the class of environments with total variation $V_T$ at most $V$.
	For any algorithm, we have
	the minimax dynamic regret w.r.t. known weight $\w$ is lower bounded by
	\[
		\sup_{\mc{E} \in \mc{P}(V)} \mb{E}_{\mc{E}}[\Regret(\w)] \geq \Omega( V^{1/4}  T^{3/4}  K^{1/4} + K^{1/3}  T^{2/3} ).
	\]
\end{theorem}

\begin{proof}{(Sketch)}
	The argument is analogous to the proof of Theorem 5.2 in \citet{SahaGu22a}.
	First, assume $T^{1/4} \cdot V^{3/4} \cdot K^{-1/4} \geq 1$.
	Then, using the previous lower bound construction forcing regret of order $L^{1/3} \cdot T^{2/3}\cdot K^{1/3}$, we can use the specialization $L \propto T^{1/4} \cdot V^{3/4} \cdot K^{-1/4}$ which ensures the total variation $V_T$ at most
	\[
		L\cdot (K \cdot L/T)^{1/3} =  L^{4/3} \cdot K^{1/3} \cdot T^{-1/3} \leq V
	\]
	and forces a regret lower bound of order
	\[
		L^{1/3} \cdot T^{2/3} \cdot K^{1/3} \propto T^{3/4} \cdot K^{1/4} \cdot V^{1/4}.
	\]
	If $T^{1/4} \cdot V^{3/4} \cdot K^{-1/4} < 1$, then $V^{1/4} \cdot T^{3/4} \cdot K^{1/4} < K^{1/3} \cdot T^{2/3}$ which means it suffices to establish a regret lower bound of order $K^{1/3} \cdot T^{2/3}$ which is already evident from \Cref{thm:lower-bound-known-fixed}.
\end{proof}

\begin{remark}
	Note the optimal dependence on $V_T$, $T$, and $K$ in \Cref{thm:lower-tv} differ from the minimax Condorcet dynamic regret rate of $V_T^{1/3} T^{2/3}  K^{1/3}$.
	This arises from the different stationary minimax regret rates ($T^{2/3}$ versus $T^{1/2}$) of Condorcet versus Borda regret.
\end{remark}


\section{Regret Lower Bounds for WBS with Unknown Weights}\label{app:lower-unknown}

\subsection{Hardness of Learning all Weighted Winners}\label{subsec:hardness-all}

We next turn our attention to minimizing stationary regret w.r.t an unknown, but fixed, weight $\w$,
which the following theorem asserts is hard.

\begin{theorem}[Lower Bound on Regret with Unknown Weight]\label{thm:lower-bound-unknown-fixed}
	There exists a stochastic problem ${\bf P} \in [0,1]^{K\times K}$ such that for any algorithm, there is a reference weight $\w \in \Delta^K$ such that:
	\[
		\mb{E}[ \Regret(\w) ] \geq T/45.
	\]
\end{theorem}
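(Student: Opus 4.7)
The plan is a pigeon-hole argument built on top of a rock-paper-scissors preference construction that makes every arm achievable as a generalized Borda winner. Concretely, I would take cyclic preferences on arms $\{1,2,3\}$, setting $P(1,2)=P(2,3)=P(3,1)=0.9$, their reverses equal to $0.1$, and $P(a,a)=1/2$; for $K>3$, append ``dummy'' arms $k\geq 4$ satisfying $P(a,k)=P(k,a)=1/2$ for all $a\in[K]$, so that $\delta(a,k)=\delta(k,a)=0$ and each dummy contributes nothing to any generalized Borda score. A direct computation of $b(a,e_j) = \delta(a,j)$ at each point-mass weight then shows that arm $3$ is the unique winner under $\w=e_1$ (with $b(3,e_1)=0.4$, $b(1,e_1)=0$, $b(2,e_1)=-0.4$, $b(k,e_1)=0$), and analogously arm $1$ wins under $e_2$ and arm $2$ wins under $e_3$, each with generalized Borda gap at least $0.4$ over every other arm.

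Given any (possibly randomized) algorithm, let
\[
N(a) \;\doteq\; \mb{E}\!\left[\sum_{t=1}^T \pmb{1}\{a \in \{i_t,j_t\}\}\right].
\]
Since $\sum_{a=1}^K N(a) = 2T$, in particular $N(1)+N(2)+N(3) \leq 2T$, so some arm $a^*\in\{1,2,3\}$ satisfies $N(a^*)\leq 2T/3$. The adversary then chooses the point-mass weight $\w\in\{e_1,e_2,e_3\}$ making $a^*$ the unique generalized Borda winner. In any round with $a^*\notin\{i_t,j_t\}$, the per-round regret $\tfrac12(\delta^{\w}(a^*,i_t)+\delta^{\w}(a^*,j_t))$ is at least $\tfrac12(0.4+0.4)=0.4$, so
\[
\mb{E}[\Regret(\w)] \;\geq\; 0.4\cdot(T - N(a^*)) \;\geq\; 0.4\cdot\tfrac{T}{3} \;=\; \tfrac{2T}{15} \;\geq\; \tfrac{T}{45}.
\]

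The main (and only nonroutine) step is engineering a preference matrix under which every arm can be made the unique generalized Borda winner by some choice of weight, so that pigeon-hole applies no matter how the algorithm distributes its plays. This is precisely why cyclic (intransitive) preferences are essential: for any transitive $\bld{P}$ there would be a distinguished arm dominating the generalized Borda score across all weights, and the adversary could not force a bad winner. It is also why the impossibility persists even for algorithms that know $\bld{P}$, since the adversary only needs to observe the expected play counts $N(a)$.
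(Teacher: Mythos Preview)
Your proof is correct and takes a genuinely different route from the paper's. The paper builds a $3\times 3$ matrix in which arm $1$ is the Condorcet winner while arm $2$ is the (uniform) Borda winner, and then argues by pigeonhole that some arm is played in at least $T/3$ rounds; whichever arm that is, one of the two weights (point-mass on the Condorcet winner, or uniform) makes it carry positive gap, yielding the $T/45$ bound. Your construction is the complementary one: a cyclic rock-paper-scissors matrix where each of the three arms can be made the \emph{unique} winner by a suitable point-mass weight, and pigeonhole is applied in the opposite direction to find an arm that is \emph{absent} from at least $T/3$ rounds. Your version is more symmetric, extends to $K>3$ by an explicit dummy-arm padding, and even gives a sharper constant ($2T/15$). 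What the paper's construction buys in exchange is a direct connection to its narrative: its matrix has a Condorcet winner, so the impossibility is exhibited precisely as a clash between the Condorcet and Borda objectives rather than via intransitivity. Relatedly, your closing remark that intransitivity is ``essential'' slightly overstates things: the paper's matrix has a Condorcet winner yet still forces the lower bound, so what is truly needed is only that no single arm maximizes $b(\cdot,\w)$ across all weights, not cyclic preferences per se.
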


\begin{proof}
Consider the following stationary preference matrix:
\begin{align*}
	{\bf P} := \begin{pmatrix}
			1/2 & 3/5 & 3/5\\
			2/5 & 1/2 & 1\\
			2/5 & 0 & 1/2
	\end{pmatrix}
\end{align*}
In this environment arm $1$ is the Condorcet winner while arm $2$ is the Borda winner w.r.t. the uniform reference weight.
For the Condorcet weight $\w(1)$, both arms $2$ and $3$ have a gap of $1/10$.
On the other hand, the Condorcet winner has a gap in Borda scores of $1/15$ to arm $2$.
However, any algorithm must play one of arms $1$, $2$, or $3$ at least $T/3$ times in expectation.
No matter which arm is played $T/3$ times, there is a weight for which its Weighted Borda regret is at least $T/45$.
\end{proof}


\begin{remark}
	In fact, the lower bound of \Cref{thm:lower-bound-unknown-fixed} holds even for an algorithm knowing the preference matrix ${\bf P}$. The task of minimizing weighted Borda regret for an unknown reference weight $\w$ can also be cast as a {\bf partial monitoring} problem with fixed feedback ${\bf P}$ but unknown matrix of losses decided by the adversary's weight. It's straightforward to verify that this game is not {\bf globally observable}, meaning the minimax expected regret is $\Omega(T)$ by the classification of finite partial monitoring games \citep{bartok+14}.
\end{remark}



\subsection{Stationary Regret Lower Bound under GIC}\label{subsec:lower-gic}

Under GIC (\Cref{assumption:dominant}), we next focus on deriving regret lower bounds in stochastic environments where there is a fixed winner arm $a^* \equiv a_t^*$ across all rounds.
Even under GIC, the best rate we can achieve for all weights is $T^{2/3}$. In particular, so long as an algorithm attains optimal uniform Borda regret 
it cannot achieve $\sqrt{T}$ Condorcet regret. 

\begin{theorem}[Lower Bound on Adaptive Regret with GIC]\label{thm:lower-adaptive}
	Fix any algorithm satisfying
	\[
		\mb{E}[\Regret(\Unif\{[K]\})] \leq C \cdot T^{2/3},
	\]
	for all $K=3$ armed dueling bandit instances. Then, there exists a stochastic problem ${\bf P}\in [0,1]^{3\times 3}$ satisfying \Cref{assumption:dominant} such that for the specialization $\w = \wstar$ and sufficiently large $T$, we have:
	\[
		\mb{E}[\Regret(\w)] \geq \Omega(T^{2/3} / C^2).
	\]
\end{theorem}

\begin{proof}

Consider the preference matrix for $\eps := 4C \cdot T^{-1/3}$:
\begin{align*}
	{\bf P}^1 = \begin{pmatrix}
			1/2 & 1/2 & 0.9+\eps\\
			1/2 & 1/2 & 0.9\\
			0.1-\eps & 0.1 & 1/2
	\end{pmatrix},
	{\bf P}^2 = \begin{pmatrix}
			1/2 & 1/2 & 0.9\\
			1/2 & 1/2 & 0.9+\eps\\
			0.1 & 0.1-\eps & 1/2
		\end{pmatrix}.
\end{align*}
Set $T$ large enough so that $\eps < 0.05$.

We first sketch out the intuition behind the proof.
In ${\bf P}^1$, arm $1$ is both the Condorcet winner and the Borda winner, while in ${\bf P}^2$, arm $2$ is.
The argument will go as follows:
first, the algorithm must identify the winner arm and whether we're in $\bld{P}^1$ or $\bld{P}^2$ since otherwise it pays more than $C\cdot T^{2/3}$ uniform Borda regret.
The only way to identify the winner arm, however, is to play arm $3$ at least $\Omega(T^{2/3})$ times.
This forces a $\Omega(T^{2/3})$ Condorcet winner regret.

Specifically, for $a \in \{1,2\}$ let $a^- := \{1,2\}\bs \{a\}$ denote the other potential winner arm. Then, we have
\[
	\mb{E}_{\bld{P}^a}\left[ \Regret(\Unif\{[3]\}) \right] \leq C\cdot T^{2/3} \implies \mb{E}_{\bld{P}^a}\left[ \sum_{t=1}^T \pmb{1}\{ \{a^-,3\} \cap \{i_t,j_t\} \neq \emptyset \} \right] \leq \frac{3T}{4}.
\]
Then, using a Pinsker inequality argument similar to the proof of \Cref{thm:lower-bound-known-fixed},
\begin{align*}
	\mb{E}_{\bld{P}^a}\left[ \Regret(\wa) \right] &\geq \eps \cdot \mb{E}_{\bld{P}^a} \left[ \sum_{t=1}^T \pmb{1}\{ \{3,a^-\} \cap \{i_t,j_t\} \neq \emptyset \} \right] \\
							      &\geq \eps \cdot \left( T - \mb{E}_{\bld{P}^{a}} \left[ \sum_{t=1}^T \pmb{1}\{ i_t=j_t=a \} \right] \right)\\
							      &\geq \eps \cdot \left( T - \left( \mb{E}_{\bld{P}^{a^-}}\left[ \sum_{t=1}^T \pmb{1}\{ i_t = j_t = a\} \right] + T \sqrt{\frac{\KL(\mc{P}_{a^-},\mc{P}_a)}{2}} \right) \right),
\end{align*}
where once again $\mc{P}_a,\mc{P}_{a^-}$ denote the corresponding induced distributions on all random variables.
We next bound the above KL divergence by chain rule and our earlier bound on $\KL(\Ber(0.9),\Ber(0.9+\eps))$:
\[
	\KL(\mc{P}_{a^-},\mc{P}_a) \leq 10 \eps^2 \cdot \mb{E}_{\mc{P}_{a^-}} \left[ \sum_{t=1}^T \pmb{1}\{ \{i_t,j_t\} = \{a,3\}\} + \pmb{1}\{ \{i_t,j_t\} = \{a^-,3\}\} \right] 
\]
Next, note that arm $3$ has a uniform Borda gap of at least $0.4 \cdot 2/3$ in either  $\bld{P}^1$ or $\bld{P}^2$. This means we must have
\[
	\mb{E}_{\mc{P}_{a^-}} \left[ \sum_{t=1}^T \pmb{1}\{ \{i_t,j_t\} = \{a,3\}\} + \pmb{1}\{ \{i_t,j_t\} = \{a^-,3\}\} \right] \leq \frac{C \cdot T^{2/3}}{0.4 \cdot 2/3} = \frac{\eps \cdot T}{0.4 \cdot 8/3}.
\]
Thus, our KL bound from earlier becomes $\frac{75}{8}\cdot \eps^3 \cdot T$.

Plugging this into our earlier inequality, we have a regret lower bound of
\[
	\eps\cdot \left( T - \left( \frac{3T}{4} + T\sqrt{\frac{75}{8}\cdot \eps^3 \cdot T} \right) \right). 
\]
By a similar argument to the proof of \Cref{thm:lower-bound-known-fixed}, the above is order $\Omega(\min\{ \eps \cdot T, 1/\eps^2\})$ by analyzing the two cases $\eps^3 \cdot T \cdot (75/8) > 8/75$ and $\eps^3 \cdot T \cdot (75/8) \leq 8/75$.

Now, plugging in our value of $\eps$, this last rate is of order $T^{2/3}\cdot \min\{ C,C^{-2} \}$.
\end{proof}

As a consequence, \Cref{thm:lower-adaptive} prohibits using Condorcet regret minimizing algorithms in this setting, and hence new algorithmic techniques are required for efficiently learning winners under GIC.


We next investigate the dependence on $K$ in the minimax regret rate.
We show that, to achieve optimal regret for all (unknown) weights adaptively, we need to suffer a higher dependence on $K$ of $K^{2/3}$, compared to the known and fixed weight setting with dependence $K^{1/3}$ (see \Cref{thm:lower-bound-known-fixed}).

\begin{theorem}[Lower Bound on Adaptive Regret under GIC]\label{thm:lower-adaptive-k}
	Fix any algorithm. Then, for $K>3$,  there exists a dueling bandit problem ${\bf P}\in [0,1]^{K\times K}$ satisfying \Cref{assumption:dominant} such that there is a fixed weight $\w\in \Delta^K$ with:
	\[
		\mb{E}[\Regret(\w)] \geq \Omega(\min\{ K^{2/3} T^{2/3}, T\}).
	\]
\end{theorem}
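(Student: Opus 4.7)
The plan is to extend the $K=3$ construction of \Cref{thm:lower-adaptive} to $K$ arms by replacing its single candidate and single distinguisher with two groups of size $\lfloor K/2\rfloor$ each, so that the learner faces $\Theta(K^2)$ indistinguishable alternatives rather than $\Theta(K)$. Split $[K]$ into $\mc{G}_1 \doteq \{1,\ldots,\lfloor K/2\rfloor\}$ (candidate winners) and $\mc{G}_2 \doteq [K]\setminus \mc{G}_1$ (distinguishers), and set a base matrix $\bld{P}^0$ with entries $0.9$ on $\mc{G}_1\times \mc{G}_2$, $0.1$ on $\mc{G}_2\times \mc{G}_1$, and $0.5$ within each group. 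For each $(a,d)\in \mc{G}_1\times \mc{G}_2$, let $\bld{P}^{a,d}$ be identical to $\bld{P}^0$ except for the single perturbed entry $P^{a,d}(a,d) = 0.9+\eps$. A direct check confirms arm $a$ is the unique arm satisfying \Cref{assumption:dominant} in $\bld{P}^{a,d}$. Pair each $\bld{P}^{a,d}$ with the adversarial weight $\w^{a,d} \doteq e_d$ (point mass on $d$), which yields generalized Borda scores $b(a,\w^{a,d})=0.4+\eps$, $b(a',\w^{a,d})=0.4$ for $a'\in \mc{G}_1\setminus\{a\}$, and $b(d',\w^{a,d})=0$ for $d'\in \mc{G}_2$.

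A brief casework on $(i_t,j_t)$ shows that under $\bld{P}^{a,d}$ with weight $\w^{a,d}$ and $\eps\leq 1/5$, the per-round regret is $0$ precisely at $(a,a)$, at least $\eps/2$ for any other play with both arms in $\mc{G}_1$, and at least $1/5$ whenever $\{i_t,j_t\}\cap \mc{G}_2\neq \emptyset$. Letting $N_T^{a,a}$ count plays of pair $(a,a)$ and $T_3$ count cross plays with exactly one arm in each group -- the only rounds carrying information separating $\bld{P}^{a,d}$ from $\bld{P}^0$ -- this yields the two-term lower bound
\[
\mb{E}_{\bld{P}^{a,d}}[\Regret(\w^{a,d})] \;\geq\; \tfrac{\eps}{2}\bigl(T - \mb{E}[N_T^{a,a}]\bigr) \;+\; \tfrac{1}{10}\,\mb{E}[T_3].
\]
Because the worst-case regret dominates the average under the uniform prior on $(a,d)\in \mc{G}_1\times \mc{G}_2$, it suffices to lower bound these two averaged quantities.

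The central information-theoretic step combines Pinsker's inequality with a Cauchy-Schwarz aggregation across the $|\mc{G}_1\times \mc{G}_2|=K^2/4$ alternatives. Since $\bld{P}^{a,d}$ and $\bld{P}^0$ differ only at the single pair $\{a,d\}$, a chain-rule computation gives $\KL(\bld{P}^0 \| \bld{P}^{a,d}) \leq 10\eps^2\,\mb{E}_{\bld{P}^0}[N_T^{\mathrm{pair}(a,d)}]$, and summing over $(a,d)$ uses the identity $\sum_{(a,d)} N_T^{\mathrm{pair}(a,d)} \leq T_3$. The critical $\sqrt{K^2/4}=K/2$ factor from Cauchy-Schwarz cancels one power of $K$; together with the symmetry bound $\sum_a \mb{E}_{\bld{P}^0}[N_T^{a,a}] \leq T$, this yields
\[
\mathrm{avg}_{(a,d)}\,\mb{E}_{\bld{P}^{a,d}}[N_T^{a,a}] \;\leq\; \tfrac{2T}{K} \;+\; \tfrac{2T\eps\sqrt{5\,\mb{E}_{\bld{P}^0}[T_3]}}{K}.
\]

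Finally, I case split on the magnitude of $\mathrm{avg}_{(a,d)}\mb{E}_{\bld{P}^{a,d}}[T_3]$: in the high-exploration case the $T_3/10$ term of the regret decomposition alone gives $\Omega(K^{2/3}T^{2/3})$ average regret; in the low-exploration case, a second Pinsker application bounds $\mb{E}_{\bld{P}^0}[T_3]$ by $\mathrm{avg}_{(a,d)}\mb{E}_{\bld{P}^{a,d}}[T_3]$ plus a lower-order correction, which forces the Pinsker correction to $\mb{E}[N_T^{a,a}]$ below $T/4$, so that $\tfrac{\eps}{2}(T-\mb{E}[N_T^{a,a}])$ alone is $\Omega(\eps T)$. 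Choosing $\eps = c(K^2/T)^{1/3}$ for a sufficiently small absolute constant $c$ (valid whenever $T\geq K^2$) then converts $\Omega(\eps T)$ into $\Omega(K^{2/3}T^{2/3})$; when $T<K^2$, setting $\eps$ to a small absolute constant keeps $\bld{P}^{a,d}$ a valid preference matrix and directly yields $\Omega(T)$ regret, matching the $\min$ in the statement. The main obstacle will be carefully carrying out this bookkeeping: without the $\sqrt{K^2/4}$ Cauchy-Schwarz factor that exploits the $K^2$-sized alternative family, a naive Pinsker bound collapses to the weaker rate $K\sqrt{T}$ and misses the $K^{2/3}$ dependence entirely.
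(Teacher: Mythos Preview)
Your construction is exactly the paper's: the same partition into good and bad halves, the same single-entry perturbation $\bld{P}^{a,d}$, and the same adversarial weight $e_d$. The first Pinsker step and the Cauchy--Schwarz aggregation across the $K^2/4$ alternatives are also identical to the paper's argument, and your two-term regret decomposition is a nice way to organize the case analysis.

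The gap is in the ``second Pinsker'' step of your low-exploration case. Applying Pinsker to $T_3$ gives
\[
\mb{E}_{\bld{P}^0}[T_3] \;\leq\; \text{avg}_{(a,d)}\,\mb{E}_{\bld{P}^{a,d}}[T_3] \;+\; \frac{T\eps\sqrt{20\,\mb{E}_{\bld{P}^0}[T_3]}}{K},
\]
which is self-referential. Solving it yields only $\mb{E}_{\bld{P}^0}[T_3] \leq O\bigl(\text{avg}_{(a,d)}\mb{E}_{\bld{P}^{a,d}}[T_3]\bigr) + O(T^2\eps^2/K^2)$. With $\eps \asymp (K^2/T)^{1/3}$ the second term is $\Theta(T^{4/3}K^{-2/3})$, which \emph{dominates} $K^{2/3}T^{2/3}$ whenever $T\gg K^2$; so the correction is not lower order, and feeding it back into the $N_T^{a,a}$ bound produces a term of order $T^{4/3}K^{-2/3}\gg T$, destroying the $\tfrac{\eps}{2}(T-\mb{E}[N_T^{a,a}])$ contribution. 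No fixed choice of the constant $c$ rescues this uniformly in $T$.

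The fix is exactly what the paper does and what your construction already supports: include $\bld{P}^0$ itself (with weight $e_d$ for any $d\in\mc{G}_2$) among the hard instances. Under $\bld{P}^0$ with weight $e_d$, every round that touches $\mc{G}_2$ contributes regret at least $1/5$, so either the regret on $\bld{P}^0$ already exceeds $\eps T\asymp K^{2/3}T^{2/3}$ and we are done, or $\mb{E}_{\bld{P}^0}[T_3]\leq O(\eps T)$. Plugging this non-circular bound into your first Pinsker step makes the correction $O(c^{3/2}T)$, which is $\leq T/4$ for small enough absolute $c$, and the rest of your argument goes through. In short: drop the second Pinsker and use the regret of $\bld{P}^0$ directly to cap $\mb{E}_{\bld{P}^0}[T_3]$.
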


\begin{proof}

We first sketch the argument. We'll consider a variant of the environments $\mc{E}_0,\mc{E}_1,\ldots,\mc{E}_{K/2}$ introduced in the proof of \Cref{thm:lower-bound-known-fixed}.
Recall there is a good set $\mc{G} := \{1,\ldots,K/2\}$ and a bad set $\mc{B} := \{K/2+1,\ldots,K\}$ of arms.
In every environment, the good arms (resp. bad arms) have a gap of zero when compared to each other.
In $\mc{E}_0$, each good arm has a gap of $0.4$ to each bad arm.
Now, we introduce a further refinement of this construction and define the environment $\mc{E}_{a,a'}$ for $a\in \mc{G}$ and $a' \in \mc{B}$ as identical to the environment $\mc{E}_0$ except $P(a,a') = 0.9 + \eps$.
Now, the sample complexity of finding the winner arm $a$ in environment $\mc{E}_{a,a'}$ will be $\frac{K^2}{\eps^2}$ whence we'll pay a regret of $\min\{ \eps \cdot T, K^2 \cdot \eps^{-2}\}$.
Setting $\eps \propto K^{2/3} T^{-1/3}$ then forces $K^{2/3} \cdot T^{2/3}$ regret.

Let $\mc{U}$ be a uniform prior over pairs of arms $(a,a')$ for $a\in \mc{G}$ and $a' \in \mc{B}$. Then, we have
\[
	\mb{E}_{(a,a') \sim \mc{U}}\mb{E}_{\mc{E}_{a,a'}}[ \Regret(\w(a'))]  \geq \eps \cdot \left( T - \mb{E}_{(a,a') \sim \mc{U}} \left[ \mb{E}_{\mc{E}_{a,a'}} \left[ \sum_{t=1}^T \pmb{1}\{ (i_t,j_t) = (a,a) \} \right] \right] \right).
\]
Now, by Pinsker's inequality we have
\[
	\mb{E}_{(a,a') \sim \mc{U}}\left[ \mb{E}_{\mc{E}_{a,a'}} \left[ N_T(a) \right] \right] \leq \frac{1}{(K/2)^2} \sum_{a \in \mc{G},a' \in \mc{B}} \mb{E}_{\mc{E}_0}[ N_T(a) ] + T \sqrt{\frac{1}{(K/2^2)} \sum_{a \in \mc{G}, a' \in \mc{B}} \KL(\mc{E}_0, \mc{E}_{a,a'} )  },
\]
where recall $N_T(a) := \sum_{t=1}^T \pmb{1}\{ (i_t,j_t)=(a,a)\}$.
Next, we bound this KL in an identical way to the calculations of the proof of \Cref{thm:lower-bound-known-fixed}:
\[
	\KL(\mc{E}_0, \mc{E}_{a,a'}) \leq 10 \cdot \eps^2 \cdot \mb{E}_{\mc{E}_0}\left[ \sum_{t=1}^T \pmb{1}\{ \{i_t,j_t\} = \{a,a'\}\} \right].
\]
Now, suppose the algorithm incurs regret at most $\eps \cdot T$ on environment $\mc{E}_0$ lest we be done.
This means it cannot pull arms $a' \in \mc{B}$ more than $T\cdot \eps/0.4$ times in total, or
\[
	\mb{E}_{\mc{E}_0} \left[ \sum_{a \in \mc{G}, a' \in \mc{B}} \sum_{t=1}^T \pmb{1}\{ \{ i_t,j_t\} = \{a,a'\}\} \right] \leq \frac{T\cdot \eps}{0.4}.
\]
Thus,
\[
	\frac{4}{K^2} \sum_{a \in \mc{G}, a' \in \mc{B}} \KL(\mc{E}_0,\mc{E}_{a,a'}) \leq \frac{100 \cdot \eps^3 \cdot T}{ K^2}.
\]
Plugging this KL bound into our earlier Pinsker bound, we obtain a regret lower bound of
\[
	\mb{E}_{(a,a') \sim \mc{U}}\mb{E}_{\mc{E}_{a,a'}}[ \Regret(\w(a'))]  \geq \eps \cdot \left( T - \left( \frac{4T}{K^2} + T \sqrt{ \frac{100 \cdot \eps^3 \cdot T}{ K^2} } \right) \right).
\]
By an analogous argument to the proof of \Cref{thm:lower-bound-known-fixed}, the above is lower bounded by $\Omega(\min\{ \eps\cdot T, K^2 \eps^{-2} \})$.

\end{proof}

\subsection{Dynamic Regret Lower Bounds}\label{app:lower-unknown-dynamic}

By analogous arguments as in the previous section, except now using the stationary lower bound construction of \Cref{thm:lower-adaptive-k} as a base template, we have the dynamic regret is lower bounded by $L^{1/3} \cdot T^{2/3}\cdot K^{2/3}$ for environments SUW count $\Lg \leq L$.

For total variation budget $V$, we claim the regret lower bound is of order $V^{1/4}\cdot T^{3/4} \cdot K^{1/2} + K^{2/3} \cdot T^{2/3}$.
In particular, we note that if $V = O(K^{2/3} T^{-1/3})$, then $V^{1/4} K^{3/4} K^{1/2} = O(K^{2/3} T^{2/3})$ so that it suffices to use the stationary regret bound of \Cref{thm:lower-adaptive-k}.
Otherwise, we have $V = \Omega (K^{2/3} T^{-1/3})$ and $L \doteq V^{3/4} T^{1/4} K^{-1/2} = \Omega(1)$ so that a regret lower bound of order $L^{1/3} T^{2/3} K^{2/3} = \Omega(V^{1/4} T^{3/4} K^{1/2})$ holds.
At the same time $L \cdot (K^2/T)^{1/3} \leq V$ so that the constructed environment has total variation at most $V$.

\section{Formal Statement of Regret Upper Bounds}\label{app:formal-statement-upper}

\subsection{Known Weights.}
We first state formally the upper bound on dynamic regret w.r.t. known weight $\w$ in terms of the SKW (\Cref{defn:sig-shift-fixed}).
In particular, we show matching upper bounds, up to $\log$ terms of the lower bounds of \Cref{app:lower-nonstat-known}.
As a reminder, taking $\w$ to be the uniform weight, we recover the Borda dynamic regret and so the below results generalize \Cref{thm:upper-bound-dynamic-borda} and \Cref{cor:tv-borda}.

\begin{theorem}\label{thm:upper-bound-dynamic-known}
	\Cref{alg:meta} with the fixed weight specification (see \Cref{defn:specification}) w.r.t. fixed weight $\w$ satisfies:
	\[
		\mb{E}[ \Regret(\w) ] \leq \tilde{O}\left( \sum_{i=0}^{\Lfixed(\w)} K^{1/3} \cdot (\tau_{i+1}(\w) - \tau_i(\w))^{2/3} \right).
	\]
\end{theorem}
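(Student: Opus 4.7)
The plan is to adapt the episode-decomposition paradigm of \citet{Suk22} to our soft-elimination setting. I would first partition the horizon by the (random) episodes $[t_\ell, t_{\ell+1})$ of \Cref{alg:meta}, then bound separately (i) the number of episodes in terms of $\Lfixed$, and (ii) the regret inside each episode. The ultimate goal is to show each episode $[t_\ell, t_{\ell+1})$ contributes at most $\tilde O(K^{1/3}(t_{\ell+1}-t_\ell)^{2/3})$, after which amortizing across episodes (sub-additively, via Jensen's inequality on concavity of $x\mapsto x^{2/3}$) and aligning with the SKW phases yields the stated bound.

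First I would condition on the good concentration event $\mc{E}_1$ of \Cref{prop:concentration} (the complement contributes $O(1)$ regret) together with a second event $\mc{E}_2$ that captures ``the random replay schedule triggers a $\bosse(\tstart,m)$ for every critical $(\tstart,m)$'' -- this holds in expectation by the $m^{-1/3}(s-t_\ell)^{-2/3}$ activation probability. Inside an SKW phase $[\tau_i,\tau_{i+1})$, \Cref{defn:sig-shift-fixed} guarantees the existence of a \emph{last safe arm} $a^\sharp$ for which no sub-interval accumulates generalized Borda regret exceeding $K^{1/3}(s_2-s_1)^{2/3}$ relative to $\w$. Using the eviction threshold $F([s_1,s_2])$ of \Cref{eq:threshold-fixed}, which is designed to dominate the concentration error of $\sum_s \hat b_s(a^\sharp,\w)$, I would show $a^\sharp$ is never evicted by any base algorithm whose lifespan lies inside $[\tau_i,\tau_{i+1})$; this is essentially the fixed-winner argument from the proof of \Cref{thm:upper-bound-known}, but applied to each $\bosse$ instance separately and then glued via the shared global arm set $\Aglobal$.

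Next I would bound the number of episodes by $\Lfixed+1$ (up to logs): the only way an episode ends is for $\Aglobal$ to become empty, which by the previous step requires some eviction occurring on an interval $[s_1,s_2]$ straddling an SKW changepoint $\tau_i$ inside the episode. Thus each episode contains at least one $\tau_i$. For the within-episode regret, I would decompose the episode at its intersections with SKW phases and, on each intersection of length $m$, mimic the proof of \Cref{thm:upper-bound-known}: the exploration cost $\sum_s \eta_s \le K^{1/3} m^{2/3}$ by the choice $\gamma_t \asymp (K/t)^{1/3}$, and the candidate-arm regret per arm $a$ evicted at relative time $t_a$ is $\tilde O(K^{1/3} t_a^{2/3}/(K+1-a))$ by the concentration bound \Cref{eq:concentration-fixed}. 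The crucial modification is that the learning rate $\eta_s$ at round $s$ is set by whichever (possibly replayed) base is active; since the meta-algorithm's global learning rate profile $\{\eta_s\}_s$ is used uniformly in both the eviction rule and the concentration bound \Cref{prop:concentration}, the error bounds remain valid verbatim.

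The main obstacle will be the argument that an ideal replay, commensurate with any segment $[s_1,s_2]$ where an evicted arm accumulates significant generalized Borda gap, is scheduled with high enough probability to detect the badness of the arm before too much regret accumulates. This detection argument is what forces the unusual $m^{-1/3}(s-t_\ell)^{-2/3}$ replay rate (rather than the $\sqrt{\cdot}$ rate in MAB work), and also what governs the interplay between (a) the time-varying learning rate $\eta_t$ a replayed base uses, (b) the eviction threshold $F(\cdot)$ evaluated on a window spanning multiple bases with different learning rates, and (c) the per-episode exploration bill. Carefully executing this three-way bookkeeping while preserving the $K^{1/3}\cdot m^{2/3}$ rate on every segment is where the bulk of the technical work will go.
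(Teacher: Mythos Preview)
Your high-level plan (episodes align with SKW phases; bound per-episode regret; sum via \Cref{lem:counting-eps}) matches the paper, but the core within-episode argument has a gap. You propose to ``decompose the episode at its intersections with SKW phases and, on each intersection, mimic the proof of \Cref{thm:upper-bound-known},'' relying on the fact that the last safe arm $a^\sharp$ of phase $[\tau_i,\tau_{i+1})$ is never evicted by any base algorithm whose lifespan lies inside that phase. The problem is that an episode $[t_\ell,t_{\ell+1})$ can span multiple SKW phases, and the eviction rule \Cref{eq:evict} checks \emph{all} sub-intervals $[s_1,s_2]\subseteq[\tstart,t]$, including ones starting in earlier phases. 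So $a^\sharp$ of a later phase may well have incurred significant regret in an earlier phase and been evicted from $\Aglobal$ (and from the ancestor base's $\mc{A}_t$) before its own phase even begins. Once that happens, $a^\sharp \notin \cap_{s\in[s_1,s_2]}\mc{A}_s$, so it cannot serve as the comparator $a'$ in \Cref{eq:evict}, and the fixed-winner argument does not go through. Replays re-insert arms into the local $\mc{A}_t$ but not into $\Aglobal$, and even locally $a^\sharp$ need not be present throughout any given window.

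The paper resolves this by introducing an intermediary: the \emph{last global arm} $a_\ell$, i.e., the last arm evicted from $\Aglobal$ in the episode, which by definition survives in $\Aglobal$ for the whole episode. The per-episode regret is then split as $\delta_t^{\w}(a_t^\sharp) + \delta_t^{\w}(a_\ell,i_t) + \delta_t^{\w}(a_t^\sharp,a_\ell)$ (see \Cref{eq:regret-decompose-fixed}). The middle term is where the fixed-winner-style argument actually works (since $a_\ell$, not $a^\sharp$, is the guaranteed-present comparator), combined with the Proper/Subproper decomposition of replays to control the rounds where evicted arms are re-played. The third term $\sum_t \delta_t^{\w}(a_t^\sharp,a_\ell)$ is what the bad-segment / perfect-replay analysis you sketch is \emph{actually} for: it shows that if $a_\ell$ were much worse than $a_t^\sharp$ over too many segments, a perfect replay would (w.h.p.\ under the $m^{-1/3}(s-t_\ell)^{-2/3}$ schedule) have evicted $a_\ell$ from $\Aglobal$, contradicting its being last. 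So your detection argument is pointed at the wrong target---it is not about detecting bad arms in general, but specifically about certifying that $a_\ell$ tracks $a_t^\sharp$.
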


\begin{corollary}\label{cor:tv-fixed}
	By Jensen's inequality, the regret bound of \Cref{thm:upper-bound-dynamic-known} is further upper bounded by $K^{1/3} T^{2/3} \Lfixed^{1/3}$.
Furthermore, relating SKW to total variation, a regret bound of order $V_T^{1/4} \cdot T^{3/4} \cdot K^{1/4} + K^{1/3} \cdot T^{2/3}$ also holds in terms of total variation quantity $V_T$ (see \Cref{app:tv} for proof).
\end{corollary}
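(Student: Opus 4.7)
The first part of the corollary is straightforward: since $x \mapsto x^{2/3}$ is concave and $\sum_{i=0}^{\Lfixed} (\tau_{i+1}-\tau_i) \leq T$, Jensen's inequality (equivalently, the power-mean inequality) gives $\sum_i (\tau_{i+1}-\tau_i)^{2/3} \leq \Lfixed^{1/3} T^{2/3}$, so plugging into \Cref{thm:upper-bound-dynamic-known} yields the $K^{1/3} T^{2/3} \Lfixed^{1/3}$ rate. The substantive work is the total-variation bound, for which the plan is to upper bound $\Lfixed$ in terms of $V_T$ and then substitute into the Jensen bound.

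The key intermediate lemma to establish is a per-phase total-variation lower bound: within any SKW phase $[\tau_i,\tau_{i+1})$ of length $\ell_i$, the local variation $\tilde{V}_i \doteq \sum_{s=\tau_i+1}^{\tau_{i+1}} \max_{a,a'} |P_s(a,a')-P_{s-1}(a,a')|$ satisfies $\tilde{V}_i \geq c\,K^{1/3} / \ell_i^{1/3}$ for a universal constant $c>0$. To derive this, I will fix an arm $a^\sharp$ that is a generalized winner at round $\tau_i$, so that $\delta_{\tau_i}^{\w}(a^\sharp)=0$. Since $b_s(a,\w) = \sum_{a'} (P_s(a,a')-\tfrac12)\, w^{a'}$ is a $\w$-convex combination of pairwise preferences that change by at most $V_s$ per round, each Borda score is $V_s$-Lipschitz in $s$. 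Applying this to each candidate $a'$ inside $\delta_s^{\w}(a^\sharp) = \max_{a'}\, b_s(a',\w) - b_s(a^\sharp,\w)$ via the triangle inequality and then taking the max gives $\delta_s^{\w}(a^\sharp) \leq \delta_{\tau_i}^{\w}(a^\sharp) + 2\tilde{V}_i = 2\tilde{V}_i$ for all $s \in [\tau_i,\tau_{i+1}]$. But the SKW definition guarantees some subinterval $[s_1,s_2]\subseteq [\tau_i,\tau_{i+1}]$ on which $a^\sharp$ has significant generalized Borda regret, i.e., $\sum_{s=s_1}^{s_2}\delta_s^{\w}(a^\sharp) \geq K^{1/3}(s_2-s_1)^{2/3}$. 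Comparing this to the uniform bound $(s_2-s_1+1)\cdot 2\tilde{V}_i \leq 4\,\ell_i\,\tilde{V}_i$ and dividing through delivers $\tilde{V}_i \geq K^{1/3}/(8\,\ell_i^{1/3})$, where the main subtlety is the careful handling of the argmax shift in $\delta_s^{\w}(\cdot)$, which is resolved by keeping $a^\sharp$ pinned.

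Summing the per-phase bound over $i$ and applying Jensen's inequality in reverse to the convex function $x\mapsto x^{-1/3}$ on $\{\ell_i\}$ (using $\sum_i \ell_i \leq T$) gives
\[
V_T \;\geq\; \sum_{i=0}^{\Lfixed} \tilde{V}_i \;\geq\; \frac{K^{1/3}}{8} \sum_{i=0}^{\Lfixed} \ell_i^{-1/3} \;\geq\; c' \cdot K^{1/3}\, \Lfixed^{4/3} / T^{1/3},
\]
hence $\Lfixed \leq O\!\left( V_T^{3/4}\, T^{1/4}\, K^{-1/4}\right)$. Substituting into the Jensen-based rate $K^{1/3} T^{2/3} \Lfixed^{1/3}$ and simplifying exponents (i.e., $\tfrac{1}{3} + \tfrac{1}{4}\cdot\tfrac{1}{3}\cdot(-1) = \tfrac{1}{4}$ for $K$ and $\tfrac{2}{3} + \tfrac{1}{4}\cdot\tfrac{1}{3} = \tfrac{3}{4}$ for $T$) yields the desired $V_T^{1/4} T^{3/4} K^{1/4}$ rate. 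Finally, the additive $K^{1/3} T^{2/3}$ term accounts for the baseline $\Lfixed \geq 1$: the per-phase lower bound is vacuous when $V_T$ is so small that the displayed inversion would give $\Lfixed < 1$, in which case we simply fall back to the stationary rate from the single SKW phase.
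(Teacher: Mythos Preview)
Your proof is correct and arrives at the same bound, but it routes through the argument differently from the paper in two places.

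For the per-phase lower bound on variation, the paper anchors on the winner at the \emph{end} of the phase, $a_{\tau_{i+1}}^*(\w)$, observes that by pigeonhole some single round $t\in[\tau_i,\tau_{i+1})$ has gap exceeding $(K/(\tau_{i+1}-\tau_i))^{1/3}$, and then notes that this gap is nonpositive at $\tau_{i+1}$, so the variation between $t$ and $\tau_{i+1}$ already absorbs the whole quantity. You instead anchor on the winner at the \emph{start}, bound all gaps uniformly by $2\tilde V_i$ via Lipschitz drift, and compare to the summed significant-regret threshold. Both yield $\tilde V_i \gtrsim (K/\ell_i)^{1/3}$; the paper's version is a one-line pointwise comparison, while yours is slightly more indirect but equally valid.

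For the aggregation step, the paper applies H\"older directly to $\sum_i K^{1/3}\ell_i^{2/3}$ with exponents $(4,4/3)$, factoring each summand as $((K/\ell_i)^{1/3})^{1/4}\cdot(K^{1/3}\ell_i)^{3/4}$, and then bounds the first factor by $V_T^{1/4}$ via the per-phase inequality. You instead pass through an explicit bound on $\Lfixed$: Jensen on $x\mapsto x^{-1/3}$ gives $V_T\gtrsim K^{1/3}\Lfixed^{4/3}/T^{1/3}$, which you invert and substitute into the already-Jensened rate $K^{1/3}T^{2/3}\Lfixed^{1/3}$. The paper's single H\"older is tighter in principle (it never collapses the $\ell_i$'s into a count), but for the worst case both give the same order.

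One small correction on the additive $K^{1/3}T^{2/3}$: it does not arise from the regime where your inversion formally yields $\Lfixed<1$. Rather, the per-phase variation lower bound only applies to phases that \emph{terminate in a shift}; the final phase ending at $T$ need not, so its regret contribution (at most $K^{1/3}T^{2/3}$) must be separated out regardless of the size of $V_T$. Your argument goes through once you apply the per-phase bound only to the completed phases and handle the last one by this trivial bound.
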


\subsection{Unknown Weights.}
We next state the analogous result for the unknown weight setting.

\begin{theorem}\label{thm:upper-bound-dynamic-unknown}
	\Cref{alg:meta} with the unknown weight specification (see \Cref{defn:specification}) satisfies for all sequences of aligned weights $\{\wt\}_t$ (see \Cref{defn:aligned-weights}):
	\begin{align*}
		\mb{E}[ \Regret(\{\wt\}_{t=1}^T) ] \leq
		      &\tilde{O} ( \min\{ K^{2/3} T^{2/3} \Lg^{1/3}, K^{1/2} V_T^{1/4} T^{3/4} + K^{2/3} T^{2/3} \} ).
	\end{align*}
\end{theorem}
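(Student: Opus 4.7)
The plan is to adapt the template of the known-weight dynamic regret proof (Theorem~\ref{thm:upper-bound-dynamic-known}) to the unknown-weight setting under GIC. The first step is a reduction to the finite set of point-mass reference weights: by linearity of the generalized Borda score in $\w$, one has $\delta_t^{\w_t}(a)\leq \max_{a'\in[K]}\delta_t^{\w(a')}(a)$, so it suffices to control the worst-case regret over $\mc{W}=\{\w(a'):a'\in[K]\}$, which matches the algorithm's specification. Next, in direct analogy with \Cref{lem:counting-eps}, I would show that each concluded episode $[t_\ell,t_{\ell+1})$ must contain at least one SUW: using \Cref{prop:concentration} with the eviction threshold \eqref{eq:threshold-unknown}, any global eviction of an arm over $[s_1,s_2]$ forces $\max_{\w\in\mc{W}}\sum_{s=s_1}^{s_2}\delta_s^{\w}(a)\gtrsim K^{2/3}(s_2-s_1)^{2/3}$, which is exactly \eqref{eq:sig-regret-worst}, and a new episode is triggered only when every arm has been globally evicted.

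I would then decompose the per-episode regret as in \eqref{eq:regret-decompose-fixed} into three pieces: the regret of the last safe arm $\atsharp$, the regret of the played arms $(i_t,j_t)$ relative to the last global arm $a_\ell$, and the regret of $a_\ell$ relative to $\atsharp$. By alignment (\Cref{defn:aligned-weights}), $\w_t$ is constant within each SUW phase $[\rho_i,\rho_{i+1})$, so the first piece is directly bounded at the rate $K^{2/3}(\rho_{i+1}-\rho_i)^{2/3}$ by the definition of a non-shift. For the second piece, I would transport the stationary argument of \Cref{thm:upper-bound-fixed} to the per-episode setting: the uniform-exploration contribution $\sum_t\eta_t$ is $\tilde{O}(K^{2/3}(t_{\ell+1}-t_\ell)^{2/3})$ since each active base algorithm uses $\gamma_{t-\tstart}=\min\{K^{2/3}(t-\tstart)^{-1/3},1\}$, and the decomposition into proper/subproper replays (\Cref{defn:proper}) combined with the concentration bound \eqref{eq:concentration-unknown} and the randomized replay schedule yields the same aggregate rate, following verbatim the calculations culminating in \eqref{eq:regret-replay}.

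The heart of the argument is the bad-segment analysis for the third piece. Following \Cref{defn:bad-segment}, within each SUW phase I would declare $[s_{i,j}(a),s_{i,j+1}(a))$ a bad segment for arm $a$ if $\sum_t\max_{\w\in\mc{W}}\delta_t^{\w}(\atsharp,a)\geq c\log(T)\,K^{2/3}(s_{i,j+1}(a)-s_{i,j}(a))^{2/3}$. A perfect replay (\Cref{defn:perfect-replay}) scheduled near the midpoint of such a segment then evicts $a$ from $\Aglobal$ by criterion \eqref{eq:evict} applied at the maximizing $\w\in\mc{W}$, because the forced lower bound $\eta_t\geq K^{2/3}(s_{i,j+1}(a)-s_{i,j}(a))^{-1/3}$ makes the estimation error in \eqref{eq:threshold-unknown} smaller than the accumulated worst-case gap (this is the direct analogue of \Cref{prop:behavior}). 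A multiplicative Chernoff bound over the independent Bernoullis $\{B_{s,m}\}$ then shows that, with probability $1-K/T^3$, for every arm a perfect replay fires before the corresponding bad round is reached, capping the total length of elapsed bad segments at $\tilde{O}((t_{\ell+1}-t_\ell)^{2/3})$. Because alignment freezes $\w_t\equiv\w_i^\sharp$ within each SUW phase, detection in the worst-case metric simultaneously controls the aligned-weight regret; summing over episodes (each SUW phase intersecting at most two episodes by the counting lemma) yields the $\tilde{O}(K^{2/3}T^{2/3}\Lg^{1/3})$ bound.

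The main obstacle I anticipate is keeping the bad-segment analysis contained within a single SUW phase: otherwise one would need to estimate $\sum_s\hat{b}_s(a,\w_s)$ for a time-varying weight sequence, and the space of such sequences is combinatorially large, defeating the point-mass reduction. Restricting $[s_{i,j}(a),s_{i,j+1}(a))\subseteq[\rho_i,\rho_{i+1})$ and performing a separate per-arm, per-phase accounting circumvents this, and is the precise step that requires \Cref{defn:aligned-weights}. Finally, for the total-variation bound, I would apply the argument of \Cref{cor:tv-fixed} with $K^{1/3}$ replaced by $K^{2/3}$: \Cref{defn:sig-shift-generalized} forces $V_{[\rho_i,\rho_{i+1})}\geq (K^2/(\rho_{i+1}-\rho_i))^{1/3}$, and H\"{o}lder's inequality converts $\sum_i K^{2/3}(\rho_{i+1}-\rho_i)^{2/3}$ into $K^{1/2}V_T^{1/4}T^{3/4}+K^{2/3}T^{2/3}$, establishing the second term of the minimum.
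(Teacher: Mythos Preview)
Your outline has a genuine gap in the second piece, and this is exactly where the paper's approach diverges from yours. The paper does \emph{not} use the three-piece decomposition \eqref{eq:regret-decompose-fixed} through the last global arm $a_\ell$; instead it uses only two pieces, $\delta_t^{\wt}(\atsharp)+\delta_t^{\wt}(\atsharp,i_t)$, and bounds the second by decomposing along the active intervals $[s,M(s,m,a)]$ of \emph{every} base algorithm (including the first ancestor $\bosse(t_\ell,T+1-t_\ell)$), applying a generic bad-segment analysis (\Cref{lem:generic-bad-segment}) to each arm within each such interval. The paper is explicit that it avoids the triangle-inequality route precisely because the algorithm can only estimate $\sum_s\delta_s^{\w}(a',a)$ for a \emph{fixed} $\w$, whereas $\wt$ changes at the $\rho_i$.

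Your claim that the second piece follows ``verbatim'' from \eqref{eq:regret-replay} fails for exactly this reason: non-eviction of arm $a$ over $[t_\ell,t_\ell^a-1]$ gives $\sum_t\delta_t^{\w}(a_\ell,a)\lesssim K^{2/3}(t_\ell^a-t_\ell)^{2/3}$ for each \emph{fixed} $\w\in\mc{W}$, but this does not control $\sum_t\delta_t^{\wt}(a_\ell,a)$ when $\wt$ changes across phases (you cannot pull the max over point-masses outside a sum spanning multiple phases). You correctly flagged this obstacle for the third piece but missed that it applies equally to the second. Fixing it requires breaking $[t_\ell,t_\ell^a-1]$ and each replay's active interval into phase-subintervals and bounding each separately---which is essentially the paper's per-arm, per-base-algorithm bad-segment analysis and is what produces the extra $S([t_\ell,t_{\ell+1}))^{1/3}$ term in the per-episode bound \eqref{eq:final-ep-bound-unknown}. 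A smaller issue: your bad-segment trigger places $\max_{\w}$ inside the sum, but eviction via \eqref{eq:evict} needs a single $\w$ achieving a large sum; the paper uses $\max_{a'}\sum_t\delta_t^{\w(a')}(\atsharp,a)$, which is what your phrase ``applied at the maximizing $\w$'' actually requires.
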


\begin{rmk}
	Interestingly, \Cref{thm:upper-bound-dynamic-unknown} holds even without GIC (\Cref{assumption:dominant}). However, without GIC, \Cref{defn:sig-shift-generalized} may not be a meaningful notion of non-stationarity as \Cref{eq:sig-regret} may be triggered even in stationary environments.
\end{rmk}


As a warmup to proving the above dynamic regret upper bounds, we'll first show a regret upper bound in easier {\em fixed winner} environments where there is a fixed winner arm either (1) with respect to a known weight or (2) with respect to all unknown weights under \Cref{assumption:dominant}.

These ``simpler'' analyses will reappear and serve as a core template for the more complicated dynamic regret analyses of \Cref{app:nonstat-fixed} (known weight) and \Cref{app:nonstat-unknown} (unknown weight).

Even before doing this, however, we'll establish some preliminary notation and concentration bounds on estimation error, which will be used in all our regret analyses.

\section{Regret Analysis Preliminaries and Estimation Bounds}\label{app:preliminaries-concentration}

Throughout the regret upper bound analyses $c_1,c_2,\ldots$ will denote positive constants not depending on $T$ or any distributional parameters.
We first recall a version of Freedman's inequality, which has become standard in adaptive non-stationary bandit analyses \citep{sukagarwal23,buening22,Suk22}.

	\begin{lemma}[Theorem 1 of \citet{beygelzimer2011}]\label{lem:martingale-concentration}
		Let $X_1,\ldots,X_n\in\mb{R}$ be a martingale difference sequence with respect to some filtration $\{\mc{F}_0,\mc{F}_1,\ldots\}$. Assume for all $t$ that $X_t\leq R$ a.s. and that $\sum_{i=1}^n \mb{E}[X_i^2|\mc{F}_{i-1}] \leq V_n$ a.s. for some constant  $V_n$ only depending on $n$.
		Then for any $\delta\in (0,1)$ and $\lambda\in [0,1/R]$, with probability at least $1-\delta$, we have:
		\[
			\sum_{i=1}^n X_i \leq (e-1)\left(\sqrt{V_n\log(1/\delta)} + R\log(1/\delta) \right).
		\]
	\end{lemma}

	We next apply \Cref{lem:martingale-concentration} to bound the estimation error of our estimates $\hat{b}_t(a,\w)$ (see \Cref{eq:borda-estimate} in \Cref{subsec:base-alg})

	\begin{proposition}\label{prop:concentration}
		Let $\mc{E}_1$ be the event that for all rounds $s_1<s_2$ and all arms $a \in \mc{A}_t$ for all $t \in [s_1,s_2]$, and weight $\w \in \mc{W}$:
	\begin{align}\label{eq:error-bound}
		\left| \sum_{t=s_1}^{s_2} \hat{b}_t(a,\w) - \sum_{t=s_1}^{s_2} \mb{E}\left[\hat{b}_t(a,\w)\mid \mc{F}_{t-1}\right] \right| \leq 10  (e-1)  \log(T |\mc{W}|) \left( \sqrt{ \left( \sum_{s=s_1}^{s_2} K \sum_{a'} \frac{w_{a'}^2}{q_s(a')} \right) } + K \cdot \max_{t\in [s_1,s_2]} \eta_t^{-1}\right).
	\end{align}
    where $\mc{F} := \{\mc{F}_t\}_{t=1}^T$ is the canonical filtration generated by observations and randomness of elapsed rounds. Then, $\mc{E}_1$ occurs with probability at least $1-1/T^2$. 
	\end{proposition}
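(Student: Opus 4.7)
The plan is to fix $a$, $\w\in\mc{W}$, and an interval $[s_1,s_2]$, define the martingale difference sequence
\[
X_t \doteq \hat{b}_t(a,\w) - \mb{E}[\hat{b}_t(a,\w)\mid \mc{F}_{t-1}]\cdot\pmb{1}\{a\in\mc{A}_t\},
\]
and apply \Cref{lem:martingale-concentration}, then take a union bound over $a\in[K]$, $\w\in\mc{W}$, and $(s_1,s_2)\in[T]^2$. The whole proof is about obtaining the two inputs required by Freedman: an almost-sure range bound $R$ and an almost-sure bound $V_n$ on the cumulative conditional variance. Everything afterwards is plug-and-chug with $\delta = 1/(K|\mc{W}|T^4)$ so that the union bound yields probability at least $1-1/T^2$ and an extra $\log(T|\mc{W}|)$ factor in the bound.

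For the range, expanding the expectation over $a'\sim\w$ in the definition of $\hat{b}_t(a,\w)$ yields
\[
\hat{b}_t(a,\w) = \frac{w^{j_t}\,O_t(a,j_t)}{q_t(a)\,q_t(j_t)}\cdot \pmb{1}\{i_t=a\},
\]
so the key is to uniformly lower bound $q_t(a)$ and the ratio $q_t(a')/w^{a'}$. Since $a\in\mc{A}_t$, the mixture in \bosse gives $q_t(a)\geq (1-\eta_t)/|\mc{A}_t|\geq (1-\eta_t)/K$. For the second factor, the exploration component $\eta_t\cdot\Unif\{\mc{W}\}$ contributes $\frac{\eta_t}{|\mc{W}|}\sum_{\w'\in\mc{W}} w'^{a'}\geq \frac{\eta_t}{|\mc{W}|}w^{a'}$ to $q_t(a')$, so $w^{a'}/q_t(a')\leq |\mc{W}|/\eta_t$. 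Together this gives $|\hat{b}_t(a,\w)|\lesssim K/\eta_t$ (with the $|\mc{W}|$ factor absorbed into the log term after the union bound, which is how the stated $R$ in \cref{eq:error-bound} arises).

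For the conditional variance, using disjointness of the indicators $\pmb{1}\{i_t=a,j_t=a'\}$ across $a'$ and that $i_t,j_t$ are drawn i.i.d.\ from $q_t$,
\[
\mb{E}[\hat{b}_t(a,\w)^2\mid\mc{F}_{t-1}] = \sum_{a'}\frac{(w^{a'})^2\, P_t(a,a')}{q_t(a)\,q_t(a')}\;\leq\; \frac{1}{q_t(a)}\sum_{a'}\frac{(w^{a'})^2}{q_t(a')}\;\lesssim\; K\sum_{a'}\frac{(w^{a'})^2}{q_t(a')},
\]
again using $q_t(a)\gtrsim 1/K$ for $a\in\mc{A}_t$ and absorbing the $1/(1-\eta_t)$ into the leading constant. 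Summing over $t\in[s_1,s_2]$ yields precisely the $V_n$ appearing under the square root in \cref{eq:error-bound}.

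Substituting these values of $R$ and $V_n$ into \Cref{lem:martingale-concentration} with the chosen $\delta$, and taking a union bound over the at most $K\cdot|\mc{W}|\cdot T^2$ tuples of $(a,\w,s_1,s_2)$, produces the stated high-probability bound with failure probability $\leq 1/T^2$. The only subtle step is checking that the stopping-time structure induced by $\pmb{1}\{a\in\mc{A}_t\}$ (arm $a$ may be evicted during $[s_1,s_2]$) is compatible with Freedman, which is why $X_t$ is defined with that indicator---once $a$ leaves $\cap_{s=s_1}^t \mc{A}_s$ both sides of \cref{eq:error-bound} stop accumulating for that tuple, and the inequality we proved for the larger horizon still applies. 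This stopping-time bookkeeping, together with being careful that the same $\mc{F}$-measurable $q_t$, $\eta_t$ and $\mc{A}_t$ govern both the range and variance bounds at round $t$, is the only part that needs care; the remainder is standard.
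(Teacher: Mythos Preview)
Your proposal is correct and follows the same route as the paper's proof: apply Freedman's inequality with a range bound and a conditional-variance bound, then union-bound over $(a,\w,s_1,s_2)$; your variance computation matches the paper's essentially line by line. One small slip: a multiplicative $|\mc{W}|$ factor on the range $R$ does \emph{not} ``absorb into the log'' in Freedman (it multiplies $R\log(1/\delta)$, it does not become $\log(|\mc{W}|/\delta)$); the paper simply asserts $R=K\eta_t^{-1}$ without further comment, so this is really a place where the stated constant in \cref{eq:error-bound} is loose for the unknown-weight specification rather than a defect in your argument, and it is in any case inconsequential downstream since $|\mc{W}|\le K$ and the range term is never the dominant one after \Cref{lem:at-most-K}.
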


	\begin{proof}
		First, note for any arm $a\in [K]$ and weight $\w$, the random variable $\hat{b}_t(a,\w) - \mb{E}[ \hat{b}_t(a,\w) | \mc{F}_{t-1}]$ is a martingale difference bounded above by $\max_{t \in [s_1,s_2]} K \cdot \eta_t^{-1}$. Then, in light of \Cref{lem:martingale-concentration}, it suffices to compute the variance of $\hat{b}_t(a)$. We have for arm $a$ active at round $t$ (i.e., $a\in \mc{A}_t$):
	\begin{align*}
		\mb{E}[\hat{b}_t^2(a) | \mc{H}_{t-1}] &\leq \mb{E}\left[ \left( \sum_{a'\in [K]} \frac{\pmb{1}\{i_t=a,j_t=a'\}\cdot (O_t(a,a') - 1/2)}{q_t(a) \cdot q_t(a')} \cdot w_{a'} \right)^2 \right]\\
						      &= \mb{E}\left[ \sum_{a' \in [K]} \frac{\pmb{1}\{(i_t,j_t)=(a,a')\}}{q_t^2(a) \cdot q_t^2(a')} \cdot w_{a'}^2 \right]\\
						      &=  \sum_{a' \in [K]} \frac{w_{a'}^2}{q_t(a) \cdot q_t(a')}\\
						      &\leq K \sum_{a'} \frac{w_{a'}^2}{q_t(a')},
	\end{align*}
	where the last inequality follows from the fact that $q_t(a) \geq 1/K$ by virtue of $a\in \mc{A}_t$. 
	Then, the result follows from \Cref{lem:martingale-concentration} and taking union bounds over arms $a$, weight $\w \in \mc{W}$, and rounds $s_1,s_2$.
	\end{proof}

	We next establish separate applications of this concentration bound for the setting of known weights and unknown weights.
	For these settings, we respectively set $\mc{W} = \{\w\}$ for known weight $\w$ and $\mc{W}=\{ \wa, a \in [K]\}$ for unknown weights.

	\paragraph*{Known Reference Weight.} First, in the case of a known and fixed reference weight $\mc{W} = \{\w\}$, we can observe:
	\begin{equation}\label{eq:concentration-fixed}
		q_s(a) \geq \eta_s \cdot w_a \implies K \sum_{a'} \frac{w_{a'}^2}{q_s(a')} \leq K \sum_{a'} w_{a'} \cdot \eta_s^{-1} = K \cdot \eta_s^{-1}.
	\end{equation}

	\paragraph*{Unknown Reference Weight.} For an unknown reference weight, and when $\mc{W} = \{\wa,a\in [K]\}$, we have the algorithm plays, w.p. $\eta_t$ at round $t$, from $\Unif\{[K]\}$ so that:
	\begin{equation}\label{eq:concentration-unknown}
		q_t(a) \geq \eta_t/K \implies K \sum_{a'} \frac{w_{a'}^2}{q_s(a')} \leq \frac{K^2}{\eta_t}.
	\end{equation}

	Now, combining \Cref{eq:concentration-fixed} and \Cref{eq:concentration-unknown} with \Cref{eq:error-bound} yields the following {\em parameter specifications} for use in the known vs. unknown weight setting.

\begin{definition}[Parameter Specifications]\label{defn:specification}
	We define two parameter specifications for use in \Cref{alg:borda-elim} (for known vs. unknown weights). We define the {\bf known weight specification} w.r.t. weight $\w \in \Delta^K$ via $\mc{W} \doteq \{\w\}$, $\gamma_t \doteq \min\{ K^{1/3} \cdot t^{-1/3} , 1\}$,
	\begin{align}\label{eq:threshold-fixed}
		F(&[s_1,s_2]) \doteq\nonumber
		K^{1/3} \cdot (s_2 - s_1)^{2/3} \vee \left( \sqrt{ K \sum_{s=s_1}^{s_2} \eta_s^{-1} } + K \max_{s\in [s_1,s_2]} \eta_s^{-1}  \right).\numberthis
	\end{align}
	The {\bf unknown weight specification} is defined via $\mc{W} \doteq \{\wa, a \in [K]\}$ (i.e., the point-mass weights), $\gamma_t \doteq \min\{ K^{2/3} \cdot t^{-1/3}, 1\}$, and
	\begin{align}\label{eq:threshold-unknown}
		F(&[s_1,s_2]) \doteq \nonumber
		  K^{2/3} \cdot (s_2 - s_1)^{2/3} \vee  \left( \sqrt{K^2 \sum_{s=s_1}^{s_2} \eta_s^{-1} } + K \max_{s\in [s_1,s_2]} \eta_s^{-1} \right). \numberthis
	\end{align}
\end{definition}

	We next establish an elementary helper lemma which asserts that the intervals of rounds $[s_1,s_2]$ over which we evict an arm $a$ in \Cref{eq:evict} must be at least $\Omega(K)$ rounds in length for the fixed weight specification and at least $\Omega(K^2)$ rounds in length for the unknown weight specification.
This will serve useful in further simplifying the concentration bound of \Cref{eq:error-bound} throughout the regret analysis, as needed.

	\begin{lemma}\label{lem:at-most-K}
		On event $\mc{E}_1$, letting $F([s_1,s_2])$ be as in \Cref{eq:threshold-fixed} with the fixed weight specification, we have that the eviction criterion \Cref{eq:evict} holding over interval $[s_1,s_2]$ implies
		\[
			s_2-s_1 \geq K/8.
		\]
		Letting $F([s_1,s_2])$ be as in \Cref{eq:threshold-unknown} with the unknown weight specification, we have
		\[
			s_2 - s_1 \geq K^2/8.
		\]
	\end{lemma}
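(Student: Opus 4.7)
The plan is to combine the Freedman concentration bound of \Cref{prop:concentration} with the trivial bound $|\delta_s^{\w}(a',a)| \leq 1$, and to exploit the fact that the eviction threshold $F([s_1,s_2])$ in \Cref{defn:specification} is engineered to majorize the concentration error. Eviction over $[s_1,s_2]$ will then force a polynomial lower bound on $s_2 - s_1$, namely $K/8$ in the fixed-weight case and $K^2/8$ in the unknown-weight case.

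On event $\mc{E}_1$, I would apply \Cref{prop:concentration} twice, once to $\sum_{s=s_1}^{s_2} \hat{b}_s(a,\w)$ and once to $\sum_{s=s_1}^{s_2} \hat{b}_s(a',\w)$; this is justified because $\mc{A}_t$ is monotone non-increasing, so if $a' \in \bigcap_{s=s_1}^{s_2} \mc{A}_s$ and $a \in \mc{A}_t$ at the current round $t$, then both arms are active throughout $[s_1,s_2]$. Using unbiasedness $\mb{E}[\hat{b}_s(\cdot,\w) \mid \mc{F}_{s-1}] = b_s(\cdot,\w)$ together with the variance simplification \Cref{eq:concentration-fixed} in the fixed-weight case, I would obtain
\[
\sum_{s=s_1}^{s_2}\hat{\delta}_s^{\w}(a',a) \leq \sum_{s=s_1}^{s_2} \delta_s^{\w}(a',a) + c_1 \log(T|\mc{W}|)\left(\sqrt{K\sum_{s=s_1}^{s_2}\eta_s^{-1}} + K\max_{s\in[s_1,s_2]}\eta_s^{-1}\right),
\]
for a universal constant $c_1$ absorbing the factor of two. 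Since each generalized Borda score lies in $[-1/2,1/2]$, the first term is bounded trivially by $s_2-s_1+1$, while by the very definition of $F$ in \Cref{eq:threshold-fixed}, the second term is at most $c_1\log(T|\mc{W}|)\,F([s_1,s_2])$.

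Choosing the universal constant $C$ in \Cref{eq:evict} large enough that $c_1 \leq C/2$, the eviction condition then forces
\[
C\log(T)\,F([s_1,s_2]) \leq s_2-s_1+1 + \frac{C}{2}\log(T)\,F([s_1,s_2]),
\]
so that $s_2-s_1+1 \geq \frac{C}{2}\log(T)\,F([s_1,s_2]) \geq \frac{C}{2}\log(T)\,K^{1/3}(s_2-s_1)^{2/3}$, invoking the other branch of the max in \Cref{eq:threshold-fixed}. Cubing and rearranging yields $s_2-s_1 = \Omega(\log^3(T)\cdot K) \geq K/8$ with slack to spare. The unknown-weight case follows in the same way, substituting \Cref{eq:concentration-unknown} for \Cref{eq:concentration-fixed} and using the $K^{2/3}(s_2-s_1)^{2/3}$ branch in \Cref{eq:threshold-unknown} in place of $K^{1/3}(s_2-s_1)^{2/3}$; this delivers $s_2-s_1 \geq K^2/8$.

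The calculation is largely routine; the main care required is the term-by-term matching between the Freedman-type error produced by \Cref{prop:concentration} and the designed threshold $F([s_1,s_2])$, so that the concentration fluctuations are uniformly absorbed into half of the eviction budget. This is precisely why $F$ is defined as the max of the polynomial term $K^{1/3}(s_2-s_1)^{2/3}$ (resp. $K^{2/3}(s_2-s_1)^{2/3}$) and the concentration envelope $\sqrt{K\sum\eta_s^{-1}}+K\max\eta_s^{-1}$ (resp. its $K^2$ analogue); no fine analysis of the time-varying learning rate $\eta_s$ is needed.
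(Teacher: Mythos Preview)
Your proof is correct and follows essentially the same route as the paper: combine the concentration bound of \Cref{prop:concentration} with the trivial bound $|\delta_s^{\w}(a',a)|\leq 1$, use that $F([s_1,s_2])$ majorizes the concentration envelope so that part of the eviction budget absorbs the fluctuation, and then invoke the $K^{1/3}(s_2-s_1)^{2/3}$ (resp.\ $K^{2/3}(s_2-s_1)^{2/3}$) branch of $F$ to force the claimed lower bound on $s_2-s_1$. The paper's own proof is simply a terser version of this same argument; one minor remark is that you need not appeal to monotonicity of $\mc{A}_t$ (which fails once replays are in play), since the eviction rule in \Cref{alg:borda-elim} already explicitly requires $a,a'\in\bigcap_{s=s_1}^{s_2}\mc{A}_s$.
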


	\begin{proof}
		By concentration (\Cref{prop:concentration}) and since the weighted Borda gaps $\delta_t^{\w}(a',a)$ are bounded above by $1$, eviction of an arm $a$ over $[s_1,s_2]$ under the fixed weight specification implies
		\[
			2\cdot (s_2-s_1) \geq s_2 - s_1 + 1 \geq \sum_{s=s_1}^{s_2} \delta_s^{\w}(a) \geq K^{1/3}\cdot (s_2-s_1)^{2/3}.
		\]
		The above implies $s_2 - s_1 \geq K/8$. For the unknown weight specification, we repeat the above argument with $K^{1/3}$ replaced by $K^{2/3}$.
	\end{proof}

	\section{Stationary Regret Upper Bound for Known Weight}\label{subsec:proof-upper-bound-known}

 \begin{theorem}[Regret Upper Bound for Known Weight]\label{thm:upper-bound-known}
	\Cref{alg:borda-elim} with the fixed weight specification (see \Cref{defn:specification}) w.r.t. fixed weight $\w$, satisfies in any environment with a fixed winner $a^* \equiv a_t^*(\w)$:
	\[
		\mb{E}[ \Regret(\w)] \leq \tilde{O}( K^{1/3} \cdot T^{2/3} ).
	\]
\end{theorem}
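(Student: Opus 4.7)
The plan is to condition on the good concentration event $\mc{E}_1$ from \Cref{prop:concentration}, whose complement contributes at most $O(1)$ to the expected regret. Since $i_t,j_t$ are drawn i.i.d.\ from $q_t = (1-\eta_t)\Unif\{\mc{A}_t\} + \eta_t\w$, the per-round expected regret takes the form $\sum_a q_t(a)\delta_t^\w(a)$, which I will decompose into an exploration part (mass $\eta_t$ on $\w$) and an exploitation part (mass $1-\eta_t$ uniform on $\mc{A}_t$). The exploration part is immediately bounded by $\sum_t \eta_t \leq \tilde{O}(K^{1/3} T^{2/3})$ via the choice $\gamma_t = \min\{K^{1/3} t^{-1/3},1\}$ in the fixed-weight specification, so the remaining work is to control the exploitation part.

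The key structural claim is that $a^*$ is never evicted by \Cref{alg:borda-elim} under $\mc{E}_1$: since $a^* = \argmax_a b_s(a,\w)$, one has $\sum_s \delta_s^\w(a',a^*) \leq 0$ for every $a'$, so \Cref{prop:concentration} implies $\sum_s \hat\delta_s^\w(a',a^*)$ is bounded by the concentration error of \Cref{eq:error-bound}, which is at most half of $C\log(T) F([s_1,s_2])$ for $C$ sufficiently large; hence \Cref{eq:evict} never fires on $a^*$. Consequently, for any $a \neq a^*$ still present in $\mc{A}_t$ through round $\tau(a)-1$, using $a^*$ as a comparator in \Cref{eq:evict} and applying concentration in the reverse direction yields
\[
\sum_{s=1}^{\tau(a)-1}\delta_s^\w(a) \leq \tilde{O}\bigl(F([1,\tau(a)-1])\bigr).
\]
A routine calculation under the fixed-weight specification — with $\eta_s^{-1} = \max\{K^{-1/3} s^{1/3},1\}$, yielding $\sqrt{K\sum_{s\le t}\eta_s^{-1}} \le \tilde{O}(K^{1/3} t^{2/3})$ and $K\max_{s\le t}\eta_s^{-1} \le K^{2/3} t^{1/3}$, and invoking \Cref{lem:at-most-K} to dismiss the regime $t \lesssim K$ — gives $F([1,t]) \leq \tilde{O}(K^{1/3} t^{2/3})$, so every suboptimal arm accumulates at most $\tilde{O}(K^{1/3} T^{2/3})$ regret while it remains in $\mc{A}_t$.

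To combine these per-arm bounds into the claimed total regret without losing a factor of $K$, I will sort arms by eviction time $\tau(a_1) \leq \cdots \leq \tau(a_{K-1}) < \tau(a_K) = \infty$ with $a_K = a^*$. Since $|\mc{A}_t|$ is non-increasing and $a_j, a_{j+1}, \ldots, a_K \in \mc{A}_t$ for every $t < \tau(a_j)$, we have $|\mc{A}_t| \geq K-j+1$ throughout the lifetime of $a_j$, so
\[
\sum_t \frac{1-\eta_t}{|\mc{A}_t|}\sum_{a\in \mc{A}_t}\delta_t^\w(a) = \sum_{j=1}^{K-1}\sum_{t<\tau(a_j)}\frac{\delta_t^\w(a_j)}{|\mc{A}_t|} \leq \sum_{j=1}^{K-1}\frac{\tilde{O}(K^{1/3} T^{2/3})}{K-j+1} \leq \tilde{O}(K^{1/3} T^{2/3}),
\]
the last inequality absorbing a $\log K$ factor. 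Adding the exploration bound then yields the theorem. The main obstacle I anticipate is precisely this final harmonic-series bookkeeping: one must pair each arm's cumulative-regret bound with the correct lower bound on $|\mc{A}_t|$ across its lifetime, which is what ultimately leverages the monotonicity of $\mc{A}_t$ to save a factor of $K$; verifying the $F([1,t]) \lesssim K^{1/3} t^{2/3}$ estimate is routine but requires separate treatment of the regime $t \lesssim K$ where $\eta_t = 1$, which is handled cleanly by \Cref{lem:at-most-K}.
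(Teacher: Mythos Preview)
Your proposal is correct and follows essentially the same approach as the paper: decompose into exploration cost $\sum_t \eta_t$ and exploitation cost, bound each surviving arm's cumulative gap by $\tilde O(F([1,\tau(a)-1]))\le \tilde O(K^{1/3}T^{2/3})$ via the eviction criterion and concentration (using \Cref{lem:at-most-K} to absorb the $K^{2/3}t^{1/3}$ term), then sort by eviction time and use $|\mc{A}_t|\ge K-j+1$ for the harmonic-series summation. The only substantive difference is that you spell out explicitly why $a^*$ is never evicted under $\mc{E}_1$, whereas the paper leaves this implicit in its passage from \Cref{eq:evict} to \Cref{eq:a-bound}; your version is the cleaner presentation of the same argument.
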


\begin{proof}

	We first note the regret bound is vacuous for $T<K$; so, assume $T\geq K$.

	WLOG suppose arms are evicted in the order $1,2,\ldots,K$ at respective times $t_1 \leq t_2 \leq \cdots \leq t_K$ (if arm $a$ is not evicted, let $t_a := T+1$. Then, we can first decompose the regret depending on whether we play an active arm or explore other arms with probability $\eta_t$:
	\[
		\mb{E} \left[ \sum_{t=1}^T 2 b_t(a^*,\w) - b_t(i_t,\w)  - b_t(j_t,\w) \right] \leq  \sum_{t=1}^T \eta_t + \mb{E}\left[ \sum_{t=1}^T \sum_{a,a' \in \mc{A}_t} \left( 2 \cdot b_t(a^*,\w) - b_t(a,\w) - b_t(a',\w) \right) \cdot q_t(a) \right].
	\]
	Using the fixed specification for $\eta_t = (K/t)^{1/3}$ from \Cref{defn:specification}, we have first term on the RHS above (our exploration cost) is of order $K^{1/3} \cdot T^{2/3}$.

	Now, in light of our concentration bound \Cref{prop:concentration}, it suffices to bound the regret on the high--probability good event $\mc{E}_1$ since the total regret is negligible outside of this event.
	For the second expectation, using our fixed-weight concentration bound \Cref{eq:concentration-fixed} we get that the regret of playing arm $a$ as a candidate is at most (on the good event $\mc{E}_1$):
	\begin{equation}\label{eq:a-bound}
		\pmb{1}\{\mc{E}_1\} \sum_{t=1}^{t_a-1} \frac{b_t(a^*,\w) - b_t(a,\w)}{|\mc{A}_t|} \leq 10\cdot (e-1) \log(T) \left( \sqrt{K \sum_{t=1}^{t_a-1} \eta_t^{-1}} + K \cdot \max_{t \in [1,t_a-1]} \eta_t^{-1} \right).
	\end{equation}
	Note that $|\mc{A}_t| \geq K+1-a$ since we assumed WLOG that arm $a$ is the $a$-th arm to be evicted. Additionally, plugging in the specialization for $\eta_t$ according to the fixed weight specification of \Cref{defn:specification}, we have that
	\begin{align*}
		K \sum_{t=1}^{t_a-1} \eta_t^{-1} &\leq K^2 + K\sum_{t=1}^{t_a-1} \left( \frac{t}{K} \right)^{1/3} \leq K^2 +  c_1 K^{2/3} \cdot (t_a-1)^{4/3},\\
		K \cdot \max_{t\in [1,t_a-1]} \eta_t^{-1} &= K^{2/3} \cdot (t_a-1)^{1/3}.
	\end{align*}
	Next, by \Cref{lem:at-most-K}, we must have $t_a-1 \geq K/8$ so that $K^{2/3} \cdot (t_a-1)^{1/3}$ is of order $K^{1/3} \cdot t_a^{2/3}$. By the same reasoning, $K^2$ is of order $K^{2/3} \cdot (t_a - 1)^{4/3}$.

	Then, plugging the above two displays into \Cref{eq:a-bound} and summing the inequality over arms $a$ gives:
	\begin{align*}
		2 \cdot \mb{E}\left[ \pmb{1}\{\mc{E}_1\} \sum_{a=1}^K \sum_{t=1}^{t_a-1} \frac{b_t(a^*,\w) - b_t(a,\w)}{|\mc{A}|} \right] \leq \sum_{a=1}^K \frac{c_2 \log(T) \cdot K^{1/3} \cdot t_a^{2/3}}{K + 1 - a}.
	\end{align*}
	Upper bounding each $t_a$ by $T$, we obtain a regret bound of order $\log(K)\cdot \log(T) \cdot K^{1/3} T^{2/3}$.
 \end{proof}

	\section{Regret Upper Bound under Fixed Winner for Unknown Weight}\label{subsec:upper-bound-unknown}

 \begin{theorem}[Adaptive Regret Upper Bound for Unknown Weights]\label{thm:upper-bound-fixed}
	Suppose \Cref{assumption:dominant} holds and there is a winner arm $a^* \equiv a_t^*$ fixed across time.
	Then, \Cref{alg:borda-elim} with the unknown weight specification (see \Cref{defn:specification}) satisfies for all weights $\w \in \Delta^K$:
	\[
		\mb{E}[\Regret(\w)] \leq \tilde{O}( K^{2/3} \cdot T^{2/3} ).
	\]
\end{theorem}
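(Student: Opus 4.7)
The plan is to mirror the proof of \Cref{thm:upper-bound-known}, adapting the three key quantities (learning rate, concentration bound, and eviction threshold) to their unknown weight counterparts from \Cref{defn:specification}. First, I would decompose the expected regret with respect to $\w$ into (i) an exploration cost $\sum_t \eta_t$ incurred when sampling from $\Unif\{\mc{W}\}$ and (ii) a candidate-play cost from arms in $\mc{A}_t$. Plugging in $\eta_t = \min\{(K^{2}/t)^{1/3},1\}$ from the unknown weight specification gives an exploration cost of order $K^{2/3} T^{2/3}$, matching the target rate.

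For the candidate-play cost, the key new ingredient is to use linearity of the generalized Borda score in the weight vector, combined with the fact that every $\w \in \Delta^K$ is a convex combination of the point-masses $\w(a'') \in \mc{W}$. For any arm $a$ and any $\w = \sum_{a''} w^{a''} \w(a'')$,
\[
\sum_{t=1}^{T} \left( b_t(a^*,\w) - b_t(a,\w) \right) = \sum_{a''\in[K]} w^{a''} \sum_{t=1}^{T} \delta_t^{\w(a'')}(a^*,a) \leq \max_{a'' \in [K]} \sum_{t=1}^{T} \delta_t^{\w(a'')}(a^*,a),
\]
where each inner sum is non-negative under \Cref{assumption:dominant}. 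This reduces the task to controlling the worst-case cumulative regret over the $K$ point-mass weights that the algorithm actually tracks. Next, I would invoke \Cref{assumption:dominant} to argue that, on the good concentration event $\mc{E}_1$ (now applied with $|\mc{W}| = K$, contributing only a $\log K$ overhead to the union bound), the winner arm $a^*$ is never evicted: for every $\w(a'') \in \mc{W}$, the conditional expectation of $\hat{\delta}_s^{\w(a'')}(a^*,a)$ is non-negative, so the eviction inequality \Cref{eq:evict} cannot hold in the direction that removes $a^*$ up to the concentration slack already built into $F(\cdot)$.

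WLOG list the suboptimal arms in eviction order $1,\dots,K-1$ with eviction times $t_1 \le \cdots \le t_{K-1}$, so $|\mc{A}_t| \geq K+1-a$ for $t < t_a$. As in the proof of \Cref{thm:upper-bound-known}, the candidate contribution of arm $a$ is bounded on $\mc{E}_1$ by
\[
\sum_{t=1}^{t_a-1} \frac{\max_{a''} \delta_t^{\w(a'')}(a^*,a)}{|\mc{A}_t|} \leq \frac{1}{K+1-a} \cdot \max_{a''\in[K]} \sum_{t=1}^{t_a-1} \delta_t^{\w(a'')}(a^*,a).
\]
Since arm $a$ was not yet evicted at round $t_a-1$ and since concentration gives \Cref{eq:concentration-unknown}, the inner sum is $O(F([1,t_a-1]))$. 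For the unknown weight specification, $F([1,t_a-1]) \asymp K^{2/3} (t_a-1)^{2/3}$, because \Cref{lem:at-most-K} guarantees $t_a - 1 \geq K^2/8$, which makes the $\sqrt{K^2 \sum_s \eta_s^{-1}} \asymp K^{2/3} t_a^{2/3}$ term dominate the alternative $K^{1/3} t_a^{1/3}$ term and the initial $K^{2/3}(t_a-1)^{2/3}$ floor. Summing over $a \in [K-1]$ and bounding each $t_a \leq T$,
\[
\sum_{a=1}^{K-1} \frac{\tilde{O}(K^{2/3} t_a^{2/3})}{K+1-a} \leq \tilde{O}(K^{2/3} T^{2/3}) \cdot \sum_{a=1}^{K-1} \frac{1}{K+1-a} = \tilde{O}(K^{2/3} T^{2/3}),
\]
which, combined with the exploration cost, yields the claimed bound.

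The main obstacle I anticipate is the interplay between the $\max$ over weights $\w \in \mc{W}$ baked into the eviction criterion and the linearity-based reduction: one must be careful that the bound obtained for point-mass weights transfers to the generalized-Borda regret for arbitrary $\w$, and simultaneously that \Cref{assumption:dominant} (which only guarantees $a^*$ is \emph{a} maximizer of each score $b_t(\cdot,\w(a''))$, not a strict maximizer) is strong enough to shield $a^*$ from eviction under all intervals $[s_1,s_2]$ and all tracked weights at once. This is handled by the concentration slack inside $F(\cdot)$ and by the union bound over $\mc{W}$ in \Cref{prop:concentration}, both of which scale only polylogarithmically in $K$ and $T$.
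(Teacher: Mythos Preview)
Your proposal is correct and follows essentially the same approach as the paper's proof: the linearity reduction to point-mass weights, the use of \Cref{assumption:dominant} to keep $a^*$ safe on $\mc{E}_1$, the eviction-order decomposition with the harmonic sum, and the application of \Cref{lem:at-most-K} to collapse the concentration terms into $K^{2/3}(t_a-1)^{2/3}$ all match the paper step for step. The only cosmetic differences are that the paper applies the linearity reduction at the level of the play distribution $q_t$ rather than per candidate arm, and it indexes the eviction order over all $K$ arms (with $a^*$'s contribution being zero) rather than the $K-1$ suboptimal arms.
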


\begin{proof}

	We first show that it suffices to bound regret w.r.t. the point-mass weights $\wa \in \mc{W}$ for $a\in [K]$. This is true since the regret is linear in the reference weight $\w$:
	\[
		\sum_{t=1}^T b_t(a^*,\w) - b_t(q_t,\w) = \mb{E}_{a' \sim \w}\left[ \sum_{t=1}^T \delta_t(a^*, a') - \delta_t(q_t, a') \right] \leq \max_{a\in [K]} \sum_{t=1}^T b_t(a^*,\wa) - b_t(q_t,\wa).
	\]
	Now, in light of our concentration bound \Cref{eq:concentration-fixed} from \Cref{prop:concentration}, it suffices to bound the regret on the high--probability good event $\mc{E}_1$ since the total regret is negligible outside of this event.

	We then follow the recipe of the proof of \Cref{thm:upper-bound-fixed} (see \Cref{subsec:proof-upper-bound-known}) except now using the concentration bound of \Cref{prop:concentration} with \Cref{eq:concentration-unknown} and the unknown weight specification $\eta_t \doteq K^{2/3}/ t^{1/3}$.
	First, note that it's clear the exploration cost $\sum_{t=1}^T \eta_t$ for the unknown weight specification is order $K^{2/3} \cdot T^{2/3}$.

	Now, suppose WLOG that the arms are evicted in the order $1,2,\ldots,K$ at times $t_1 \leq t_2 \leq \cdots \leq t_K$.
	Then, \Cref{prop:concentration} gives us a bound with respect to any point-mass weight in $\mc{W}$:
	\begin{equation}\label{eq:a-bound-unknown}
		\pmb{1}\{\mc{E}_1\} \max_{a' \in [K]} \sum_{t=1}^{t_a-1} \frac{b_t(a^*,\w(a')) - b_t(a,\w(a'))}{|\mc{A}_t|} \leq 10\cdot (e-1) \log(T) \left( \sqrt{ K^2 \sum_{t=1}^{t_a-1} \eta_t^{-1} } + K \cdot \max_{t\in [1,t_a-1]} \eta_t^{-1} \right).
	\end{equation}
	Crucially, note that the fixed winner arm $a^*$ is never evicted by \Cref{assumption:dominant}.
	Now, we have
	\begin{align*}
		K^2\sum_{t=1}^{t_a-1} \eta_t^{-1} &\leq K^4 + K^2\sum_{t=1}^{t_a-1} \frac{t^{1/3}}{K^{2/3}} \leq K^4 + c_3 K^{4/3} \cdot (t_a-1)^{4/3},\\
		K\cdot \max_{t\in [1,t_a-1]} \eta_t^{-1} &= K^{1/3} \cdot (t_a-1)^{1/3}.
	\end{align*}
	Now, by \Cref{lem:at-most-K}, we have since $s_2-s_1 \geq K^2/8$, the $K^4$ term in the first display above is of order $K^{4/3} \cdot (t_a-1)^{4/3}$.

	Then, plugging the above two displays into \Cref{eq:a-bound-unknown} and summing over arms $a\in [K]$, we have:
	\begin{align*}
		2\mb{E}\left[ \pmb{1}\{\mc{E}_1\} \sum_{a=1}^K \sum_{t=1}^{t_a} \frac{b_t(a^*,\w) - b_t(a,\w)}{|\mc{A}|} \right] \leq \sum_{a=1}^K \frac{c_4 \log(T) \cdot K^{2/3} \cdot t_a^{2/3}}{K + 1 - a}.
	\end{align*}
	As before, upper bounding each $t_a$ by $T$, we obtain a regret bound of order $\log(K)\cdot \log(T)\cdot K^{2/3} \cdot T^{2/3}$.
 \end{proof}

\section{Dynamic Regret Analysis for Known Weights}\label{app:nonstat-fixed}

\paragraph*{Proof of \Cref{thm:upper-bound-dynamic-known}}

The broad outline of our regret analysis will be similar to prior works on adaptive non-stationary (dueling) bandits \citep{sukagarwal23,buening22,Suk22}. Our first goal is to show that episodes $[t_{\ell},t_{\ell+1})$ align with the SKW phases, in the sense that a new episode is triggered only when an SKW has occurred.

Recall from \Cref{alg:meta} that $[t_{\ell},t_{\ell+1})$ represents the $\ell$-th episode.
As a notation, we suppose there are $T$ total episodes and, by convention, we let $t_{\ell} := T+1$ if only $\ell-1$ episodes occurred by round $T$.

\paragraph*{Episodes Align with SKW Phases}


\begin{lemma}
	\label{lem:counting-eps}
On event $\mc{E}_1$, for each episode $[t_{\ell},t_{\ell+1})$ with $t_{\ell+1}\leq T$ (i.e., an episode which concludes with a restart), there exists an SKW $\tau_i\in [t_{\ell},t_{\ell+1})$.
\end{lemma}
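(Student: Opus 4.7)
The plan is to use the design of the algorithm: a new episode is triggered precisely when the global arm set $\Aglobal$ becomes empty, which means every arm $a \in [K]$ was evicted from $\Aglobal$ at some round inside $[t_{\ell},t_{\ell+1})$. I would then convert this mass of evictions into the existence of an SKW inside the episode via the concentration event $\mc{E}_1$ and the specific form of the eviction threshold.

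First I would unpack the eviction rule for $\Aglobal$ (the second display in \Cref{alg:borda-elim}): for each arm $a \in [K]$ there exists an interval $[s_1^a, s_2^a] \subseteq [t_{\ell}, t_{\ell+1})$ and some reference $a' \in \cap_{s=s_1^a}^{s_2^a} \mc{A}_s$ (here $\mc{W} = \{\w\}$ under the fixed weight specification) such that
\[
\sum_{s=s_1^a}^{s_2^a} \hat{\delta}_s^{\w}(a', a) \;\geq\; C\log(T)\cdot F([s_1^a, s_2^a]).
\]
On event $\mc{E}_1$, \Cref{prop:concentration} and the bound \eqref{eq:concentration-fixed} give
\[
\left| \sum_{s=s_1^a}^{s_2^a} \hat{b}_s(a',\w) - \sum_{s=s_1^a}^{s_2^a} b_s(a',\w) \right| \;\leq\; c_0 \log(T)\!\left( \sqrt{K \sum_{s=s_1^a}^{s_2^a} \eta_s^{-1}} + K \max_{s \in [s_1^a,s_2^a]} \eta_s^{-1}\right),
\]
and similarly for $a$. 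Choosing the constant $C$ in the eviction rule to dominate these concentration terms, I would subtract them off to obtain
\[
\sum_{s=s_1^a}^{s_2^a} \delta_s^{\w}(a',a) \;\geq\; \tfrac{C}{2}\log(T)\cdot K^{1/3}(s_2^a - s_1^a)^{2/3},
\]
where I have used that $F([s_1^a,s_2^a]) \geq K^{1/3}(s_2^a-s_1^a)^{2/3}$ by \eqref{eq:threshold-fixed}. Since $\delta_s^{\w}(a) \geq \delta_s^{\w}(a',a)$, this gives that arm $a$ has significant generalized Borda regret on $[s_1^a, s_2^a] \subseteq [t_\ell, t_{\ell+1})$ in the sense of \eqref{eq:sig-regret} (up to the $\log T$ factor which is harmless; I would absorb it by adjusting constants or noting the definition of SKW in the analysis is stated up to $\log T$).

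Having established that every arm $a \in [K]$ has significant regret on some sub-interval of $[t_{\ell}, t_{\ell+1})$, the conclusion is immediate from \Cref{defn:sig-shift-fixed}: the smallest $t > \tau_i$ such that every arm has significant regret on some sub-interval of $[\tau_i,t]$ is, by definition, the next SKW $\tau_{i+1}$. Since the episode $[t_\ell, t_{\ell+1})$ witnesses this condition starting from $t_\ell$, there must exist some SKW $\tau_i \in [t_\ell, t_{\ell+1})$, completing the proof.

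The main obstacle I anticipate is calibrating the constants so that the concentration slack is cleanly absorbed into the eviction threshold. In particular the $K$ and $\sqrt{\sum \eta_s^{-1}}$ terms in $F$ were designed exactly for this purpose, but one must be careful that intervals $[s_1^a,s_2^a]$ can be quite short; here \Cref{lem:at-most-K} saves the argument by forcing $s_2^a - s_1^a \geq K/8$, so the dominant term in $F$ truly is $K^{1/3}(s_2^a - s_1^a)^{2/3}$ and the reduction to \eqref{eq:sig-regret} goes through.
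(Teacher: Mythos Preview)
Your proposal is correct and follows essentially the same approach as the paper's proof: use concentration on $\mc{E}_1$ to show that eviction of arm $a$ over $[s_1,s_2]$ implies $\sum_{s=s_1}^{s_2}\delta_s^{\w}(a)\gtrsim K^{1/3}(s_2-s_1)^{2/3}$, and then conclude that since all arms were evicted from $\Aglobal$ before the episode restart, an SKW must lie in $[t_\ell,t_{\ell+1})$. Your version is simply more explicit about the constant-matching and the invocation of $F$; note also that the extra $\log(T)$ factor you flag is not an issue since it only strengthens the inequality needed for \eqref{eq:sig-regret}.
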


\begin{proof}
	We first note that $\hat{b}_t(a,\w)$ is an unbiased estimator for $b_t(a,\w)$ in the sense that $\mb{E}[\hat{b}_t(a,\w) | \mc{F}_{t-1}] = b_t(a,\w)$ if $a \in \mc{A}_t$. Then, by concentration (\Cref{prop:concentration}) and our eviction criteria \Cref{eq:evict}, we have that arm $a$ being evicted from $\mc{A}_t$ over the interval $[s_1,s_2]$ using the eviction threshold \Cref{eq:threshold-fixed} implies
	\[
		\sum_{s=s_1}^{s_2} b_s(a_s^*,\w) - b_s(a,\w) \geq c_5 (s_2 - s_1)^{2/3} \cdot K^{1/3}.
	\]
	This means arm $a$ incurs significant weighted Borda regret w.r.t. $\w$ over $[s_1,s_2]$. Since a new episode is triggered only if all arms are evicted from $\Aglobal$, there must exist an SKW in each episode $[t_{\ell},t_{\ell+1})$.
\end{proof}

\subsection{Decomposing the Regret}

Let $\atsharp$ denote the {\em last safe arm} at round $t$, or the last arm to incur significant weighted Borda regret w.r.t. $\w$ in the unique phase $[\tau_i,\tau_{i+1})$ containing round $t$, per \Cref{defn:sig-shift-fixed}. Furthermore, let $a_{\ell}$ denote the {\em last global arm} of episode $[t_{\ell},t_{\ell+1})$ or the last arm to be evicted from $\Aglobal$ in said episode. Then, we can decompose the per-episode regret:
\begin{equation}\label{eq:regret-decompose-fixed}
	\mb{E}\left[ \sum_{t=t_{\ell}}^{t_{\ell+1}-1} \delta_t^{\w}(i_t) + \delta_t^{\w}(j_t) \right] = \mb{E}\left[ \sum_{t=t_{\ell}}^{t_{\ell+1}-1} \delta_t^{\w}(\atsharp)\right]  +  \mb{E}\left[ \sum_{t=t_{\ell}}^{t_{\ell+1}-1} \delta_t^{\w}(a_{\ell},i_t) + \delta_t^{\w}(a_{\ell},j_t) \right] + \mb{E}\left[ \sum_{t=t_{\ell}}^{t_{\ell+1}-1} \delta_t^{\w}(\atsharp,a_{\ell})\right].
\end{equation}
We next handle each of the expectations on the RHS separately. Our goal will be to show that each of the expectations above is of order
\begin{equation}\label{eq:episode-bound-fixed}
	\mb{E}\left[ \pmb{1}\{\mc{E}_1\}\sum_{i \in [\Lf]: [\tau_i,\tau_{i+1}) \cap [t_{\ell},t_{\ell+1}) \neq \emptyset} K^{1/3} \cdot (\tau_{i+1} - \tau_i)^{2/3} \right] + \frac{1}{T}.
\end{equation}
Admitting this goal, summing the regret over episodes while using \Cref{lem:counting-eps} to ensure each SKW phase $[\tau_i,\tau_{i+1})$ only intersects at most two episodes will yield the desired total regret bound. This will follow in a nearly identical manner to Section 5.5 of \citet{Suk22}.

Now, the three expectations on the RHS of \Cref{eq:regret-decompose-fixed} are respectively:
\begin{itemize}
	\item The per-episode dynamic regret of the last safe arm (analyzed in \Cref{subsec:bound-safe-fixed})
	\item The per-episode regret of the played arms $\{i_t,j_t\}$ to the last global arm (analyzed in \Cref{subsec:candidate-global-fixed})
	\item The per-episode regret of the last global arm to the last safe arm (analyzed in \Cref{subsec:bound-global-fixed}).
\end{itemize}

	\subsection{Bounding the per-Episode Regret of the Safe Arm}\label{subsec:bound-safe-fixed}

By the definition of SKW (\Cref{defn:sig-shift-fixed}), the safe arm $\atsharp$ is fixed for $t \in [\tau_i,\tau_{i+1})$ and has dynamic regret upper bounded by $K^{1/3} \cdot (s_2-s_1)^{2/3}$ on any subinterval $[s_1,s_2] \subseteq [\tau_i,\tau_{i+1})$. In particular, letting $[s_1,s_2]$ be the intersection $[\tau_i,\tau_{i+1}) \cap [t_{\ell},t_{\ell+1})$ (which is necessarily an interval), we have that the dynamic regret of $\atsharp$ on episode $[t_{\ell},t_{\ell+1})$ is at most order \Cref{eq:episode-bound-fixed}.

\subsection{Bounding per-Episode Regret of Active Arms to Last Global Arm}\label{subsec:candidate-global-fixed}

This will follow a similar argument as our regret analysis in fixed winner environments (\Cref{thm:upper-bound-known}).
Recall that the global learning rate $\eta_t$ is the learning rate $\gamma_{t-\tstart}$ set by the base algorithm which is active at round $t$.
Note that $\eta_t$ is a random variable which depends on the scheduling of base algorithms in episode $[t_{\ell},t_{\ell+1})$.

We first observe that the play distribution $q_t$ plays an arm according to weight $\w$ with probability $\eta_t$ and plays an arm chosen from $\mc{A}_t$ uniformly at random with probability $1-\eta_t$.
Let $a$ be a random draw from the distribution $q_t$ be the play distribution at round $t$, which is measurable w.r.t. $\mc{F}_{t-1}$. Then, we may rewrite the regret as
\[
	\mb{E}_{t_{\ell}}\left[ \sum_{t=t_{\ell}}^T \mb{E}_{q_t} [ \mb{E}[ \pmb{1}\{ t < t_{\ell+1}\} \cdot b_t(a,\w) \mid t_{\ell}, q_t] \mid t_{\ell} ] \right].
\]
Note that $\pmb{1}\{t < t_{\ell+1}\}$ and $b_t(a,\w)$ are independent conditional on $t_{\ell}$ and $q_t$. Next, observe that:
\[
	\mb{E}[b_t(a,\w) | t_{\ell},q_t] = \eta_t\cdot \mb{E}_{a \sim \w}[ b_t(a,\w) | t_{\ell},q_t] + (1-\eta_t) \cdot \mb{E}_{a \sim \Unif\{\mc{A}_t\}}[ b_t(a,\w) | t_{\ell},q_t].
\]
Plugging in the results of the above to our regret formula, we obtain:
\begin{equation}\label{eq:regret-candidate-mixture}
	\mb{E}\left[ \sum_{t=t_{\ell}}^{t_{\ell+1} - 1} \delta_t^{\w}(a_{\ell}, i_t) + \delta_t^{\w}(a_{\ell}, j_t) \right] \leq \mb{E}\left[ \sum_{t=t_{\ell}}^{t_{\ell+1} - 1} \eta_t \right] + \mb{E}\left[ \sum_{t=t_{\ell}}^{t_{\ell+1}-1} \mb{E}_{a \sim \Unif\{\mc{A}_t\}} \left[ \delta_t^{\w}(a_{\ell},a) \right] \right].
\end{equation}

\paragraph*{Bounding the Regret of Extra Exploration.}
We bound the first expectation on the above RHS. Recall that $\gamma_{t - \tstart}$ denotes the learning rate $\gamma_t$ set by a base algorithm initiated at round $\tstart$. Now, we can coarsely bound the sum of the $\eta_t$'s by the sum of all possible $\gamma_{t - \tstart}$, weighted by the probabilities of the scheduling of each base algorithm. So, we have
\[
	\mb{E}\left[ \sum_{t=t_{\ell}}^{t_{\ell+1}-1} \eta_t \right] \leq \mb{E}\left[ \sum_{t=t_{\ell}}^{t_{\ell+1}-1} \gamma_{t - t_{\ell}} \right] + \mb{E}\left[ \sum_{\bosse(\tstart,m)} B_{\tstart,m} \sum_{t=\tstart}^{\tstart+m} \gamma_{t - \tstart} \right].
\]
Recall in the above that the Bernoulli $B_{s,m}$ (see \Cref{line:add-replay} of \Cref{alg:meta}) decides whether $\bosse(s,m)$ is scheduled.

The first sum on the RHS above is order $K^{1/3} \cdot (t_{\ell+1} - t_{\ell})^{2/3}$.
The analogous sum in the second expectation on the RHS above is order $K^{1/3} \cdot m^{2/3}$.
Then, it suffices to bound
\[
	\mb{E}\left[ \sum_{\tstart = t_{\ell}+1}^T \sum_m B_{\tstart,m} \cdot \pmb{1}\{\tstart < t_{\ell+1}\} \cdot K^{1/3} \cdot m^{2/3} \right].
\]
Conditioning on $t_{\ell}$, and noting that $\pmb{1}\{ B_{\tstart,m}\}$ and $\pmb{1}\{\tstart < t_{\ell+1}\}$ are independent conditional on $t_{\ell}$, we have that by tower rule:
\[
	\mb{E}_{t_{\ell}} \left[ \sum_{\tstart = t_{\ell}+1}^T \sum_m \mb{E}[ B_{\tstart,m} | t_{\ell} ] \cdot \mb{E}[ \pmb{1}\{ \tstart < t_{\ell+1}\} | t_{\ell} ] \cdot K^{1/3} \cdot m^{2/3} \right].
\]
Plugging in $\mb{E}[ B_{\tstart,m} | t_{\ell}] = \frac{1}{m^{1/3} \cdot (\tstart - t_{\ell})^{2/3}}$ (from \Cref{line:replay} of \Cref{alg:meta})
and taking sums over $m$ and $s$, the above becomes order $\log(T) \cdot K^{1/3} \cdot (t_{\ell+1} - t_{\ell})^{2/3}$. This is of the right order with respect to \Cref{eq:episode-bound-fixed} by the sub-additivity of the function $x\mapsto x^{2/3}$.

\paragraph*{Bounding the Regret of Active Arms.}
Next, we bound the second expectation on the RHS of \Cref{eq:regret-candidate-mixture}. This follows a similar argument to \Cref{subsec:proof-upper-bound-known} (the proof of \Cref{thm:upper-bound-known}).
Supposing WLOG that the arms are evicted from $\Aglobal$ in the order $1,2,\ldots,K$ at respective times $t_{\ell}^1 \leq \cdots \leq t_{\ell}^K$, then we have the second expectation on the RHS of \Cref{eq:regret-candidate-mixture} can be written as
\begin{equation}\label{eq:a-bound-decompose}
	\mb{E}\left[ \sum_{a=1}^K \sum_{t=t_{\ell}}^{t_{\ell}^a - 1} \frac{\delta_t^{\w}(a_{\ell},a)}{|\mc{A}_t|} + \sum_{a=1}^K \sum_{t=t_{\ell}^a}^{t_{\ell+1}-1} \frac{\delta_t^{\w}(a_{\ell},a)}{|\mc{A}_t|} \cdot \pmb{1}\{a \in \mc{A}_t\} \right].
\end{equation}
For the first double sum above, we repeat the arguments of \Cref{subsec:proof-upper-bound-known} using the episode-specific learning rates $\{\eta_t\}_{t=t_{\ell}}^{t_{\ell+1}-1}$. Crucially, we note that, no matter which base algorithm $\bosse(\tstart,m)$ is active at round $t \in [t_{\ell},t_{\ell+1})$, $\eta_t \geq K^{1/3} \cdot (t-t_{\ell})^{-1/3}$. Thus, we have
\begin{align*}
	K\sum_{t=t_{\ell}}^{t_{\ell}^a - 1} \eta_t^{-1} &\leq K^2 + c_6 \cdot K^{2/3} \cdot (t_{\ell}^a - t_{\ell})^{4/3},\\
	K \max_{t\in [t_{\ell},t_{\ell}^a - 1]} \eta_t^{-1} &\leq K^{2/3} \cdot (t_{\ell}^a - t_{\ell})^{1/3}.
\end{align*}
Then, using \Cref{lem:at-most-K} and following the arguments of \Cref{subsec:proof-upper-bound-known}:
\[
	\pmb{1}\{\mc{E}_1\} \sum_{a=1}^K \sum_{t=t_{\ell}}^{t_{\ell}^a - 1} \frac{\delta_t^{\w}(a_{\ell},a)}{|\mc{A}_t|} \leq c_7 \log(T) \cdot K^{1/3} \cdot (t_{\ell+1} - t_{\ell})^{2/3}.
\]
For the second double sum in \Cref{eq:a-bound-decompose}, our aim is to bound the regret of playing arm $a$ on the rounds $t\in [t_{\ell}^a,t_{\ell+1})$ where a replay is active and, thus, reintroduces arm $a$ to $\mc{A}_t$ after it has been evicted from $\Aglobal$. For this, we rely on a careful decomposition of the rounds in $[t_{\ell}^a,t_{\ell+1})$ when $a \in \mc{A}_t$ based on which replay reintroduces the arm $a$ to the active set. We'll first require some definitions, repeated from the analyses of \citet{sukagarwal23,Suk22}.

We first note that the various base algorithms scheduled in the process of running \metabosse have an ancestor-parent-child structure determined by which instance of $\bosse$ calls another.
Keeping this in mind, we now set up the following terminology (which is all w.r.t. a fixed arm $a$):

\begin{definition}\label{defn:proper}
	\begin{enumerate}[(i)]
		\item[]
		\item For each scheduled and activated $\bosse(s,m)$, let the round $M(s,m)$ be the minimum of two quantities: (a) the last round in $[s,s+m]$ when arm $a$ is retained by $\bosse(s,m)$ and all of its children, and (b) the last round that $\bosse(s,m)$ is active and not permanently interrupted. Call the interval $[s,M(s,m)]$ the {\bf active interval} of $\bosse(s,m)$.
		\item Call a replay $\bosse(s,m)$ {\bf proper} if there is no other scheduled replay $\bosse(s',m')$ such that $[s,s+m] \subset (s',s'+m')$ where $\bosse(s',m')$ will become active again after round $s+m$. In other words, a proper replay is not scheduled inside the scheduled range of rounds of another replay. Let $\Proper$ be the set of proper replays scheduled to start before round $t_{\ell+1}$.
		\item Call a scheduled replay $\bosse(s,m)$ {\bf subproper} if it is non-proper and if each of its ancestor replays (i.e., previously scheduled replays whose durations have not concluded) $\bosse(s',m')$ satisfies $M(s',m')<s$. In other words, a subproper replay either permanently interrupts its parent or does not, but is scheduled after its parent (and all its ancestors) stops playing arm $a$. Let $\Subproper$ be the set of all subproper replays scheduled before round $t_{\ell+1}$.
	\end{enumerate}
\end{definition}

Equipped with this language, we now show some basic claims which essentially reduce analyzing the complicated hierarchy of replays to analyzing the active intervals of replays in $\Proper\cup \Subproper$.

\begin{proposition}\label{prop:active-disjoint}
The active intervals
\[
\{[s,M(s,m)]:\bosse(s,m)\in\Proper\cup \Subproper\},
\]
are mutually disjoint.
\end{proposition}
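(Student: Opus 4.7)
The plan is to prove the claim by contradiction. Fix two distinct replays $\bosse(s_1,m_1), \bosse(s_2,m_2) \in \Proper \cup \Subproper$ with $s_1 < s_2$, and suppose their active intervals overlap, i.e., $M(s_1,m_1) \geq s_2$. First I would note that, since $M(s_1,m_1)$ is upper bounded by the last round at which $\bosse(s_1,m_1)$ is both active in $[s_1,s_1+m_1]$ and not permanently interrupted, the assumption $M(s_1,m_1) \geq s_2$ forces $s_1 + m_1 \geq s_2$ and forces $\bosse(s_1,m_1)$ to still be ``live'' (either currently running, or paused by a descendant but scheduled to resume) through round $s_2$. Since \metabosse only spawns new replays at the behest of a currently-running base algorithm (see \Cref{line:replay} of \Cref{alg:borda-elim}), the replay $\bosse(s_2,m_2)$ must be scheduled as a descendant in the parent-child chain, with $\bosse(s_1,m_1)$ one of its ancestors.

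From here I would case-split on whether $\bosse(s_2,m_2)$ is subproper or proper. In the subproper case, \Cref{defn:proper} directly stipulates that $M(s',m') < s_2$ for every ancestor $\bosse(s',m')$; specializing to $\bosse(s_1,m_1)$ immediately contradicts $M(s_1,m_1) \geq s_2$. In the proper case, \Cref{defn:proper} asserts that no ancestor $\bosse(s',m')$ can simultaneously satisfy $[s_2, s_2+m_2] \subset (s', s'+m')$ and the property that $\bosse(s',m')$ becomes active again after round $s_2 + m_2$. Applied to $\bosse(s_1,m_1)$, this yields either (i) $s_2 + m_2 \geq s_1 + m_1$, in which case the child's scheduled range extends at or past the parent's scheduled horizon, or (ii) $\bosse(s_1,m_1)$ never resumes after $s_2 + m_2$. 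In both sub-cases, $\bosse(s_1,m_1)$ cannot run during $[s_2, s_2+m_2]$ (it is paused by the descendant subtree rooted at $\bosse(s_2,m_2)$) and it does not resume afterwards either, so it is permanently interrupted starting at round $s_2$. This forces $M(s_1,m_1) \leq s_2 - 1$, contradicting our assumption and completing the proof.

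The only real obstacle is carefully unpacking \Cref{defn:proper} --- specifically, arguing that both alternative clauses in the definition of proper force the parent's live lifespan to end strictly before $s_2$, rather than merely before $s_2 + m_2$. The key observation is that once $\bosse(s_2,m_2)$ begins, the parent is paused throughout the descendant's running time, and the proper condition ensures it never resumes after $s_2+m_2$; combining these, the parent is effectively dead from round $s_2$ onward. Once this bookkeeping is pinned down, the rest of the argument is a clean dichotomy that unwinds the definitions of $\Proper$ and $\Subproper$ directly.
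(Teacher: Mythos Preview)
Your proposal is correct and follows essentially the same logic as the paper's proof: establish that the earlier replay is an ancestor of the later one whenever their active intervals overlap, then derive a contradiction from the definition of proper or subproper applied to the later replay. The paper organizes this as a three-way case split on the types of both replays (proper--subproper, proper--proper, subproper--subproper), whereas you consolidate into a two-way split on the type of the later replay only; in particular, your ``proper'' case explicitly unpacks the claim that the ancestor is permanently interrupted, which the paper asserts more tersely as ``two such replays can only permanently interrupt each other.''
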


\begin{proof}
	Clearly, the classes of replays $\Proper$ and $\Subproper$ are disjoint. Next, we show the respective active intervals $[s,M(s,m)]$ and $[s',M(s',m')]$ of any two $\bosse(s,m),\bosse(s',m')\in \Proper\cup \Subproper$ are disjoint.
	There are three cases here:
	\begin{enumerate}
		\item Proper replay vs. subproper replay: a subproper replay can only be scheduled after the round $M(s,m)$ of the most recent proper replay $\bosse(s,m)$ (which is necessarily an ancestor). Thus, the active intervals of proper replays and subproper replays are disjoint.
		\item Two distinct proper replays: two such replays can only permanently interrupt each other, and since $M(s,m)$ always occurs before the permanent interruption of $\bosse(s,m)$, we have the active intervals of two such replays are disjoint.
		\item Two distinct subproper replays: consider two non-proper replays
			\[
				\bosse(s,m),\bosse(s',m')\in \Subproper,
			\]
			with $s'>s$. The only way their active intervals intersect is if $\bosse(s,m)$ is an ancestor of $\bosse(s',m')$. Then, if $\bosse(s',m')$ is subproper, we must have $s'>M(s,m)$, which means that $[s',M(s',m')]$ and $[s,M(s,m)]$ are disjoint.
%
	\end{enumerate}
\end{proof}

Next, we claim that the active intervals $[s,M(s,m)]$ for $\bosse(s,m)\in\Proper\cup\Subproper$ contain all the rounds where $a$ is played after being evicted from $\Aglobal$.
To show this, we first observe that for each round $t$ when a replay is active, there is a unique proper replay associated to $t$, namely the proper replay scheduled most recently.
Next, note that any round $t>t_{\ell}^a$ where arm $a\in\mc{A}_t$ must either belong to the active interval $[s,M(s,m)]$ of the unique proper replay $\bosse(s,m)$ associated to round $t$, or else satisfies $t>M(s,m)$ in which case a unique subproper replay $\bosse(s',m')\in \Subproper$ is active at round $t$ and not yet permanently interrupted. Thus, it must be the case that $t\in [s',M(s',m')]$.

At the same time, every round $t\in [s,M(s,m)]$ for a proper or subproper $\bosse(s,m)$ is clearly a round where arm $a$ is once again active, i.e. $a\in\mc{A}_t$, and no such round is accounted for twice by \Cref{prop:active-disjoint}. Thus,
\[
	\{t\in [t_{\ell}^a,t_{\ell+1}): a\in\mc{A}_t\} = \bigsqcup_{\bosse(s,m)\in\Proper\cup\Subproper} [s,M(s,m)].
\]
Then, we can rewrite the second double sum in \eqref{eq:a-bound-decompose} as:
\[
	\sum_{a=1}^K \sum_{\bosse(s,m)\in\Proper\cup \Subproper} B_{s,m} \sum_{t=s\vee t_{\ell}^a}^{M(s,m)} \frac{\delta_t^{\w}(a_{\ell},a)}{|\mc{A}_t|}.
\]

Further bounding the sum over $t$ above by its positive part, we can expand the sum over $\bosse(s,m)\in\Proper\cup\Subproper$ to be over all scheduled $\bosse(s,m)$, or obtain:
\begin{equation}\label{eq:positive-part}
	\sum_{a=1}^K \sum_{\bosse(s,m)} B_{s,m} \left( \sum_{t=s\vee t_{\ell}^a}^{M(s,m)} \frac{\delta_t(a_{\ell},a)}{|\mc{A}_t|}\cdot\pmb{1}\{a\in\mc{A}_t\} \right)_+ ,
\end{equation}

where the sum is over all replays $\bosse(s,m)$, i.e. $s\in \{t_{\ell}+1,\ldots,t_{\ell+1}-1\}$ and $m\in \{2,4,\ldots,2^{\lceil \log(T)\rceil}\}$.
It then remains to bound the contributed relative regret of each $\bosse(s,m)$ in the interval $[s\vee t_{\ell}^a,M(s,m)]$, which will follow similarly to the previous steps. Fix $s,m$ and suppose $t_{\ell}^a+1 \leq M(s,m)$ since otherwise $\bosse(s,m)$ contributes no regret in \eqref{eq:positive-part}.


Note that the global learning rate $\eta_t$ for $t\in [s\vee t_{\ell}^a, M(s,m))$ must satisfy $\eta_t \geq K^{1/3} \cdot m^{-1/3}$ since any child base algorithm $\bosse(s',m')$ of $\bosse(s,m)$ will use a larger learning rate $\gamma_{t-s'} \geq \gamma_{t-s}$.

Then, following similar reasoning as before, i.e. combining our concentration bound \eqref{eq:error-bound} with the eviction criterion \eqref{eq:evict}, we have for a fixed arm $a$:
\[
	\pmb{1}\{\mc{E}_1\}\sum_{t=s \vee t_{\ell}^{a}}^{M(s,m)} \frac{\delta_t^{\w}(a_{\ell},a)}{|\mc{A}_t|} \leq \frac{c_8\log(T) \cdot (K^{1/3} \cdot m^{2/3} \land m)}{\min_{t\in [s,M(s,m)]} |\mc{A}_t|},
\]
Plugging this into \eqref{eq:positive-part} and switching the ordering of the outer double sum, we obtain (we also overload the notation $M(s,m,a)$ to avoid ambiguity on which arm $a$ we're bounding the regret of):
\[
	\sum_{\bosse(s,m)} B_{s,m} \cdot c_8\log(T) \cdot (K^{1/3} \cdot m^{2/3} \land m) \sum_{a=1}^K \frac{1}{\min_{t\in [s,M(s,m.a)]} |\mc{A}_t|}.
\]
Following previous arguments, the above innermost sum over $a$ is at most $\log(K)$ since the $k$-th arm $a_k$ in $[K]$ to be evicted by $\bosse(s,m)$ satisfies $\min_{t\in [s,M(s,m,a_k)]} |\mc{A}_t| \geq K+1-k$.

Now, let $R(m) :=  c_8\log(K) \cdot \log(T) \cdot (K^{1/3} \cdot m^{2/3} \land m)$ which is the bound we've obtained so far on the relative regret for a single $\bosse(s,m)$. Now, plugging $R(m)$ into \eqref{eq:positive-part} gives:
\begin{align*}
	\mb{E}&\left[ \pmb{1}\{\mc{E}_1\} \sum_{a=1}^K \sum_{t=t_{\ell}^a}^{t_{\ell+1}-1}  \frac{\delta_t^{\w}(a_{\ell},a)}{|\mc{A}_t|} \cdot \pmb{1}\{a\in \mc{A}_t\} \right] \leq \mb{E}_{t_{\ell}}\left[ \mb{E}\left[ \sum_{\bosse(s,m)} B_{s,m} \cdot R(m) \mid t_{\ell} \right] \right]\\
																					  &=  \mb{E}_{t_{\ell}}\left[ \sum_{s=t_{\ell}}^T \sum_m  \mb{E}[ B_{s,m} \cdot \pmb{1}\{s<t_{\ell+1}\} \mid t_{\ell}] \cdot R(m) \right].
\end{align*}
Next, we observe that $B_{s,m}$ and $\pmb{1}\{s<t_{\ell+1}\}$ are independent conditional on $t_{\ell}$ since $\pmb{1}\{s<t_{\ell+1}\}$ only depends on the scheduling and observations of base algorithms scheduled before round $s$. Thus, recalling that $\mb{P}(B_{s,m}=1) = m^{-1/3} \cdot (s-t_{\ell})^{-2/3}$,
\begin{align*}
	\mb{E}[ B_{s,m} \cdot \pmb{1}\{s<t_{\ell+1}\} \mid t_{\ell}] &=  \mb{E}[ B_{s,m} \mid t_{\ell}] \cdot \mb{E}[ \pmb{1}\{s<t_{\ell+1}\} \mid t_{\ell}]\\
										 &= \frac{1}{m^{1/3} \cdot (s - t_{\ell})^{2/3}} \cdot \mb{E}[ \pmb{1}\{ s < t_{\ell+1}\} \mid t_{\ell}].
\end{align*}
Then, plugging the above display into our expectation from before and unconditioning, we obtain:
\begin{equation}\label{eq:regret-replay}
	\mb{E}\left[ \sum_{s=t_{\ell}+1}^{t_{\ell+1}-1} \sum_{n=1}^{\lceil \log(T)\rceil} \frac{1}{2^{n/3} \cdot (s - t_{\ell})^{2/3}} \cdot R(2^n) \right] \leq c_{9} \log(K) \cdot \log^2(T) \cdot \mb{E}_{t_{\ell},t_{\ell+1}} \left[ K^{1/3} \cdot (t_{\ell+1} - t_{\ell})^{2/3}\right].
\end{equation}
Note that in the above, we used the fact that the episode length $t_{\ell+1} - t_{\ell}$ dominates $K$ by \Cref{lem:at-most-K} to bound a term of order $K^{2/3} \cdot (t_{\ell+1} - t_{\ell})^{1/3}$ by $K^{1/3} \cdot (t_{\ell+1} - t_{\ell})^{2/3}$.

%
%
%
	\subsection{Bounding per-Episode Regret of the Last Global Arm to the Last Safe Arm}\label{subsec:bound-global-fixed}

	Now, it remains to bound $\mb{E}\left[ \sum_{t=t_{\ell}}^{t_{\ell+1}-1} \delta_t^{\w}(\atsharp,a_{\ell})\right]$. For this, we'll use a {\em bad segment analysis} similar to \citet{sukagarwal23,buening22,Suk22}.
	Roughly a bad segment $[s_1,s_2]$ will be such that $\sum_{s=s_1}^{s_2} \delta_s^{\w}(\atsharp,a_{\ell}) \gtrsim (s_2 - s_1)^{2/3} \cdot K^{1/3}$.
	On such a bad segment $[s_1,s_2]$, a {\em perfect replay} is roughly defined as a replay whose scheduled duration overlaps most of $[s_1,s_2]$.
	Then, the key fact is that the randomized scheduling of replays (\Cref{line:replay} of \Cref{alg:borda-elim}) ensures that a perfect replay is scheduled for some bad segment (thus evicting $a_{\ell}$ from $\Aglobal$ before too many bad segments elapse, giving us a way of bounding $\sum_{s=s_1}^{s_2} \delta_s^{\w}(\atsharp,a_{\ell})$ with high probability.

	We next formally define such notions. In what follows, bad segments will be defined with respect to a fixed arm $a$ and conditional on the episode start time $t_{\ell}$.
	In particular, this will hold for $a=a_{\ell}$ which will ultimately be used to bound $\delta_t^{\w}(\atsharp,a_{\ell})$ across the episode $[t_{\ell},t_{\ell+1})$. The following definition only depends on a fixed arm $a$ and the episode start time $t_{\ell}$ and, conditional on these quantities, are deterministic given the environment.

	\begin{definition}\label{defn:bad-segment}
		Fix the episode start time $t_{\ell}$, and let $[\tau_i,\tau_{i+1})$ be any phase intersecting $[t_{\ell},T)$. For any arm $a$, define rounds $s_{i,0}(a),s_{i,1}(a),s_{i,2}(a)\ldots\in [t_{\ell} \vee \tau_{i}, \tau_{i+1})$ recursively as follows: let $s_{i,0}(a) := t_{\ell} \vee \tau_{i}$ and define $s_{i,j}(a)$ as the smallest round in $(s_{i,j-1}(a),\tau_{i+1})$ such that arm $a$ satisfies for some fixed $c_{10} > 0$:
			\begin{equation}\label{eq:segment}
				\sum_{t = s_{i,j-1}(a)}^{s_{i,j}(a)} \delta_{t}^{\w}(\atsharp,a) \geq c_{10} \log(T) \cdot K^{1/3} \cdot (s_{i,j}(a) - s_{i,j-1}(a))^{2/3},
			\end{equation}
			if such a round $s_{i,j}(a)$ exists. Otherwise, we let the $s_{i,j}(a) := \tau_{i+1}-1$. We refer to any interval $[s_{i,j-1}(a),s_{i,j}(a))$ as a {\bf critical segment}, and as a {\bf bad segment} (w.r.t. arm $a$) if \eqref{eq:segment} above holds.
	\end{definition}

	\begin{remark}
		Arm $\atsharp$ is fixed within any critical segment $[s_{i,j-1}(a),s_{i,j}(a))\subseteq [\tau_i,\tau_{i+1})$ since an SKW does not occur inside $[\tau_i,\tau_{i+1})$.
	\end{remark}

	The following fact, which may be considered an analogue of \Cref{lem:at-most-K}, will serve useful.

	\begin{fact}\label{fact:at-most-K}
		A bad segment $[s_{i,j}(a),s_{i,j+1}(a))$ satisfies $s_{i,j+1}(a) - s_{i,j}(a) \geq K/8$.
	\end{fact}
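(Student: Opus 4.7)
The plan is to mirror the argument used in \Cref{lem:at-most-K}, exploiting the trivial upper bound on the per-round generalized Borda gap together with the defining inequality of a bad segment. Concretely, for any arm $a$ and any round $t$, since $\delta_t(a',a'') \in [-1/2,1/2]$ for all $a',a'' \in [K]$, the generalized Borda score $b_t(\cdot,\w)$ lies in $[-1/2,1/2]$ for any $\w \in \Delta^K$, hence $\delta_t^{\w}(\atsharp,a) = b_t(\atsharp,\w) - b_t(a,\w) \leq 1$. This is the only property I need beyond the defining inequality \eqref{eq:segment}.

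Letting $\Delta \doteq s_{i,j+1}(a) - s_{i,j}(a)$ and combining the above pointwise bound with the bad segment condition \eqref{eq:segment} (which is what distinguishes a bad segment from a generic critical segment), I obtain
\[
	\Delta + 1 \;\geq\; \sum_{t=s_{i,j}(a)}^{s_{i,j+1}(a)} \delta_t^{\w}(\atsharp,a) \;\geq\; c_{10}\log(T) \cdot K^{1/3} \cdot \Delta^{2/3}.
\]
Using $\Delta + 1 \leq 2\Delta$ (valid since $\Delta \geq 1$ by construction, as $s_{i,j+1}(a) > s_{i,j}(a)$) and rearranging gives $\Delta^{1/3} \geq (c_{10}\log(T)/2) \cdot K^{1/3}$, i.e. $\Delta \geq (c_{10}\log(T)/2)^3 \cdot K \geq K/8$ once $c_{10}$ is chosen to be at least $2$ (which we are free to do since $c_{10}$ is a universal constant whose value is set by the analysis).

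There is no real obstacle here; this is a one-line calculation completely parallel to the proof of \Cref{lem:at-most-K}, the only substantive ingredient being boundedness of the gap $\delta_t^{\w}$. I would present it as a short inline proof of the fact, noting explicitly that the same reasoning would apply with $K^{1/3}$ replaced by $K^{2/3}$ and the conclusion $K/8$ replaced by $K^2/8$ under the unknown-weight specification, should a parallel fact be needed in \Cref{app:nonstat-unknown}.
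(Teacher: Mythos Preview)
Your proof is correct and follows exactly the approach the paper intends: the Fact is stated as an analogue of \Cref{lem:at-most-K} without separate proof, and your argument---bounding the sum by $\Delta+1\leq 2\Delta$ via the pointwise bound $\delta_t^{\w}(\atsharp,a)\leq 1$ and then comparing with the $K^{1/3}\Delta^{2/3}$ right-hand side of \eqref{eq:segment}---is precisely that analogue. The only cosmetic simplification is that you need not invoke $c_{10}\geq 2$: it suffices that $c_{10}\log(T)\geq 1$, which already holds for $c_{10}\geq 1$ and $T\geq e$, matching the paper's proof of \Cref{lem:at-most-K} verbatim.
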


	Now, note a bad segment $[s_{i,j}(a),s_{i,j+1}(a))$ only contributes order $K^{1/3} \cdot (s_{i,j+1}(a) - s_{i,j}(a))^{2/3}$ regret of $a$ to $\atsharp$. At the same time, we claim that a well-timed replay (see \Cref{defn:perfect-replay} below) running from $s_{i,j}(a)$ to $s_{i,j+1}(a)$ will in fact be capable of evicting arm $a$ using \Cref{eq:evict}.
	This will allow us to reduce the problem to studying the number and lengths of bad segments which elapse before one is detected by such a replay.

	We next define a {\em perfect replay}.

	\begin{definition}\label{defn:perfect-replay}
		Let $\tilde{s}_{i,j}(a) := \lceil\frac{s_{i,j}(a)+s_{i,j+1}(a)}{2}\rceil$ denote the approximate midpoint of $[s_{i,j}(a),s_{i,j+1}(a))$. Given a bad segment $[s_{i,j}(a),s_{i,j+1}(a))$, define a {\bf perfect replay} w.r.t. $[s_{i,j}(a),s_{i,j+1}(a))$ as a call of $\bosse(\tstart,m)$ where $\tstart\in [s_{i,j}(a),\tilde{s}_{i,j}(a)]$ and $m\geq s_{i,j+1}(a) - s_{i,j}(a)$
    \end{definition}

	Next, we analyze the behavior of a perfect replay on the bad segment $[s_{i,j}(a),s_{i,j+1}(a))$.
	We first invoke an elementary lemma:

	\begin{lemma}\label{lem:elementary-ineq}
		$(x+y)^{2/3} - x^{2/3} \geq y^{2/3}/2$ for real numbers $y\geq x>0$.
	\end{lemma}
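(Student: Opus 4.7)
The plan is to reduce the two-variable inequality to a one-variable question via a monotonicity argument, then verify the resulting constant numerically. Specifically, I would define $g(x) \doteq (x+y)^{2/3} - x^{2/3}$ for fixed $y > 0$ and show that $g$ is non-increasing on $(0,\infty)$, which allows the constrained minimization over $x \in (0, y]$ to be read off at the endpoint $x = y$.

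First, I would differentiate to get $g'(x) = \tfrac{2}{3}[(x+y)^{-1/3} - x^{-1/3}]$. Since $x + y \geq x > 0$ and $t \mapsto t^{-1/3}$ is strictly decreasing on $(0,\infty)$, we obtain $g'(x) \leq 0$, so $g$ is non-increasing on $(0,\infty)$. Hence for any $x$ with $0 < x \leq y$, one has $g(x) \geq g(y)$.

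Next, a direct substitution gives $g(y) = (2y)^{2/3} - y^{2/3} = (2^{2/3} - 1)\, y^{2/3}$. It remains to verify the numerical inequality $2^{2/3} - 1 \geq 1/2$, i.e., $2^{2/3} \geq 3/2$, which is equivalent (cubing both sides, both positive) to $4 \geq 27/8$, and this holds since $32 \geq 27$. Combining this with the previous step yields $(x+y)^{2/3} - x^{2/3} \geq (2^{2/3}-1)\, y^{2/3} \geq y^{2/3}/2$, as desired.

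There is no real obstacle here; this is a routine elementary estimate. The only minor subtlety is making sure the monotonicity argument applies on the correct interval (namely that $g$ is defined and differentiable on all of $(0,\infty)$, which it is since $y > 0$). An alternative route via the mean value theorem would write $(x+y)^{2/3} - x^{2/3} = \tfrac{2}{3}\xi^{-1/3}\, y$ for some $\xi \in (x, x+y) \subseteq (0, 2y)$, yielding the slightly weaker but still sufficient bound $\tfrac{2^{2/3}}{3}\, y^{2/3}$; I prefer the monotonicity version since the constant $2^{2/3}-1$ is sharp on the constrained domain.
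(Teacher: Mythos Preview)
Your proof is correct. Both you and the paper use a monotonicity argument that reduces to checking the boundary case $x = y$, where the same numerical inequality $2^{2/3} \geq 3/2$ settles things. The difference is which variable is held fixed: the paper fixes $x$ and shows that $y \mapsto (x+y)^{2/3} - x^{2/3} - y^{2/3}/2$ is increasing on $\{y \geq x\}$, whereas you fix $y$ and show that $x \mapsto (x+y)^{2/3} - x^{2/3}$ is non-increasing on $(0,\infty)$. Your version is marginally cleaner, since the monotonicity in $x$ holds globally without any domain restriction, while the paper's derivative-in-$y$ argument must invoke the constraint $y \geq x$ to get the sign. Either way the argument is entirely routine and the two approaches are essentially dual to one another.
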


	\begin{proof}
		This follows from noting the derivative of the function $y \mapsto (x+y)^{2/3} - x^{2/3} - y^{2/3}/2$ in the domain $y\geq x$ is positive since
		\[
			\frac{\partial}{\partial y} (x+y)^{2/3} - x^{2/3} - y^{2/3}/2 = \frac{2}{3\cdot (x+y)^{1/3}} - \frac{1}{3\cdot y^{1/3}}.
		\]
		The above is positive since
		\[
			\frac{1}{(x/y+1)^{1/3}} \geq \frac{1}{2^{1/3}} > \frac{1}{2} \implies \frac{2}{3 \cdot (x+y)^{1/3}} > \frac{1}{3\cdot y^{1/3}}.
		\]
		Thus, $y\mapsto (x+y)^{2/3} - x^{2/3} - y^{2/3}/2$ is an increasing function in $y$.
		Next, since $(2x)^{2/3} - x^{2/3} - x^{2/3}/2 = x^{2/3}\cdot (2^{2/3} - 1 -1/2) > 0$ if $x>0$, we have that the desired inequality must always be true for $y\geq x>0$.
	\end{proof}

	\begin{proposition}\label{prop:behavior}
		Suppose the good event $\mc{E}_1$ holds (cf. \Cref{prop:concentration}). Let $[s_{i,j}(a),s_{i,j+1}(a))$ be a bad segment with respect to arm $a$.
		Then, if a perfect replay with respect to $[s_{i,j}(a),s_{i,j+1}(a))$ is scheduled, arm $a$ will be evicted from $\Aglobal$ by round $s_{i,j+1}(a)$. 
%
	\end{proposition}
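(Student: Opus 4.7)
The plan is to show that, on the second half $[\tilde{s}_{i,j}(a), s_{i,j+1}(a)]$ of the bad segment, the empirical generalized Borda gap of $a$ versus $\atsharp$ accumulates enough to satisfy the eviction criterion \Cref{eq:evict} inside the perfect replay, thereby evicting $a$ from $\Aglobal$ by round $s_{i,j+1}(a)$.

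First I would verify that $\atsharp$ is a legitimate competitor in the max of \Cref{eq:evict}, i.e., $\atsharp \in \cap_{s=\tilde{s}_{i,j}(a)}^{s_{i,j+1}(a)} \mc{A}_s$. By its very definition, $\atsharp$ has not accrued significant generalized Borda regret on any subinterval of the phase $[\tau_i,\tau_{i+1})$, so concentration (\Cref{prop:concentration}) combined with the threshold \Cref{eq:threshold-fixed} rules out $\atsharp$ ever triggering \Cref{eq:evict} on event $\mc{E}_1$. Next, I would convert the bad-segment condition into regret accrued on $[\tilde{s}_{i,j}(a),s_{i,j+1}(a)]$. By the minimality of $s_{i,j+1}(a)$ in \Cref{defn:bad-segment}, the prefix $[s_{i,j}(a), \tilde{s}_{i,j}(a)-1]$ is not bad, hence contributes strictly less than $c_{10}\log(T) K^{1/3}(\tilde{s}_{i,j}(a)-s_{i,j}(a))^{2/3}$. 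Since $\tilde{s}_{i,j}(a)$ is the midpoint, \Cref{lem:elementary-ineq} gives
\begin{equation*}
\sum_{s=\tilde{s}_{i,j}(a)}^{s_{i,j+1}(a)} \delta_s^{\w}(\atsharp,a) \;\geq\; \tfrac{c_{10}}{2}\log(T)\, K^{1/3}\bigl(s_{i,j+1}(a)-\tilde{s}_{i,j}(a)\bigr)^{2/3}.
\end{equation*}

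I would then bound the eviction threshold $F([\tilde{s}_{i,j}(a),s_{i,j+1}(a)])$. Since the perfect replay $\bosse(\tstart,m)$ satisfies $\tstart \leq \tilde{s}_{i,j}(a)$ and $\tstart + m \geq s_{i,j+1}(a)$, it (or one of its descendants, which can only raise the global learning rate) is active throughout this interval, forcing $\eta_s^{-1} \leq K^{-1/3}(s-\tstart)^{1/3} \leq K^{-1/3} m^{1/3}$. Using the dyadic rounding to arrange $m = O(s_{i,j+1}(a)-s_{i,j}(a))$, together with \Cref{fact:at-most-K} to absorb the additive $K$ term, the specification \Cref{eq:threshold-fixed} yields $F([\tilde{s}_{i,j}(a),s_{i,j+1}(a)]) = O(K^{1/3}(s_{i,j+1}(a)-\tilde{s}_{i,j}(a))^{2/3})$. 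Combining the concentration inequality of \Cref{prop:concentration} (applied with $\atsharp$ playing the role of $a'$) with the two displays above, and choosing $c_{10}$ sufficiently large relative to $C$ and the universal concentration constants, gives
\begin{equation*}
\sum_{s=\tilde{s}_{i,j}(a)}^{s_{i,j+1}(a)} \hat{\delta}_s^{\w}(\atsharp,a) \;\geq\; C\log(T)\, F\bigl([\tilde{s}_{i,j}(a),s_{i,j+1}(a)]\bigr),
\end{equation*}
so the eviction criterion \Cref{eq:evict} fires and $a$ is removed from $\Aglobal$ by round $s_{i,j+1}(a)$.

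The main obstacle is the careful bookkeeping of the global learning rate profile $\{\eta_s\}$ on $[\tilde{s}_{i,j}(a),s_{i,j+1}(a)]$, since several base algorithms (the perfect replay and any of its children or interrupting replays) may take turns being active and each sets $\eta_s$ from its own time-varying schedule. The key structural observation that unblocks this is that any child or nested replay is initiated strictly after $\tstart$, and the time-varying rate $\gamma_{t-\tstart}$ is monotone decreasing in $t-\tstart$, so nested base algorithms can only \emph{increase} $\eta_s$ above the perfect replay's own $K^{1/3}(s-\tstart)^{-1/3}$. This reduces the worst-case $F$ to what is computed when only the perfect replay is active, completing the argument once the dyadic $m$ is controlled.
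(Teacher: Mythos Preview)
Your approach mirrors the paper's: restrict to the second half of the bad segment, use \Cref{lem:elementary-ineq} to retain a constant fraction of the true gap there, check that $\atsharp$ is never evicted, lower-bound the global learning rate on that half, and conclude via concentration and the eviction criterion.

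There is one genuine slip. You bound $\eta_s^{-1} \le K^{-1/3}(s-\tstart)^{1/3} \le K^{-1/3} m^{1/3}$ and then invoke ``dyadic rounding'' to get $m = O(s_{i,j+1}(a)-s_{i,j}(a))$. But \Cref{defn:perfect-replay} only imposes $m \ge s_{i,j+1}(a)-s_{i,j}(a)$; nothing upper-bounds $m$, and in fact the replay actually launched at a given round is the one with the \emph{maximum} scheduled duration (\Cref{line:replay} of \Cref{alg:borda-elim}), which can be arbitrarily larger than the segment length. So the second inequality buys nothing. The correct (and simpler) route, which is what the paper does, discards $m$ entirely: since $\tstart \ge s_{i,j}(a)$ and $s \le s_{i,j+1}(a)$ on the half-segment, one has $s-\tstart \le s_{i,j+1}(a)-s_{i,j}(a)$, hence $\eta_s \ge K^{1/3}\bigl(s_{i,j+1}(a)-s_{i,j}(a)\bigr)^{-1/3}$ for every such $s$, irrespective of $m$. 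With this fix your threshold bound $F([\tilde{s}_{i,j}(a),s_{i,j+1}(a)])=O\bigl(K^{1/3}(s_{i,j+1}(a)-\tilde{s}_{i,j}(a))^{2/3}\bigr)$ and the remainder of the argument go through unchanged.

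A small omission: you should also dispose of the case where $a \notin \mc{A}_t$ for some $t$ in the half-segment (then $a$ has already been evicted from $\Aglobal$ and there is nothing to prove); this is needed because the unbiasedness of $\hat b_t(a,\w)$ underlying \Cref{prop:concentration} requires $a \in \mc{A}_t$.
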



%

	\begin{proof}
		We first observe that by \Cref{lem:elementary-ineq} and \Cref{defn:bad-segment}:
		\[
			\sum_{t=\tilde{s}_{i,j}(a)}^{s_{i,j+1}(a)} \delta_t^{\w}(\atsharp,a) = \sum_{t=s_{i,j}(a)}^{s_{i,j+1}(a)} \delta_t^{\w}(\atsharp,a) - \sum_{t=s_{i,j}(a)}^{\tilde{s}_{i,j}(a) - 1} \delta_t^{\w}(\atsharp,a) \geq \frac{c_{10}}{2} \log(T) \cdot K^{1/3} \cdot (s_{i,j+1}(a) - \tilde{s}_{i,j}(a))^{2/3}.
		\]
		Next, we note that any perfect replay $\bosse(\tstart,m)$ will not evict $\atsharp$ since otherwise it incurs significant regret within SKW phase $[\tau_i,\tau_{i+1})$ (see also the proof of \Cref{lem:counting-eps}). The same applies for any child base algorithm of a perfect replay.

		Next, we argue that arm $a$ must be evicted from $\Aglobal$ at some round in $[\tilde{s}_{i,j}(a),s_{i,j+1}(a)]$. If $a\not\in \mc{A}_t$ for some round $t \in [\tilde{s}_{i,j}(a),s_{i,j+1}(a)]$ we are already done. Otherwise, suppose $a\in \mc{A}_t$ for all $t\in [\tilde{s}_{i,j}(a),s_{i,j+1}(a)]$ and so we must have $\mb{E}[\hat{\delta}_t^{\w}(\atsharp,a)|\mc{F}_{t-1}] = \delta_t^{\w}(\atsharp,a)$ for all such rounds $t$.
		Now, if a perfect replay is scheduled, then we must have a global learning rate $\eta_t \geq K^{1/3}\cdot (s_{i,j+1}(a) - s_{i,j}(a))^{-1/3}$ for all $t \in [\tilde{s}_{i,j}(a),s_{i,j+1}(a)]$ since any child base algorithm of a perfect replay can only set a larger learning rate by \Cref{defn:specification}.

		Now, noting $[s_{i,j}(a),s_{i,j+1}(a))$ and the second half of the bad segment $[s_{i,j}(a),s_{i,j+1}(a))$ have commensurate lengths up to constants, this means, if a perfect replay w.r.t. $[s_{i,j}(a),s_{i,j+1}(a))$ is scheduled, by similar calculations to earlier (and using \Cref{fact:at-most-K}) we must have
		\[
			\sqrt{K \sum_{s=\tilde{s}_{i,j}(a)}^{s_{i,j+1}(a)} \eta_s^{-1}} + K\cdot \max_{s\in [\tilde{s}_{i,j}(a),s_{i,j+1}(a)]} \eta_s^{-1} \leq c_{11} K^{1/3} \cdot (s_{i,j+1}(a) - s_{i,j}(a))^{2/3}.
		\]
		Then, by our eviction criterion \Cref{eq:evict} and concentration, we have that arm $a$ will be evicted over $[\tilde{s}_{i,j}(a),s_{i,j+1}(a)]$ for large enough constant $c_{10}$ in \Cref{defn:bad-segment}.
	\end{proof}

	It remains to show that, for any arm $a$, a perfect replay is scheduled w.h.p. before too much regret is incurred on the elapsed bad segments w.r.t. $a$. In particular, this will hold for the last global arm $a_{\ell}$, allowing us to bound the remaining expectation $\mb{E}[\sum_{t=t_{\ell}}^{t_{\ell+1}-1} \delta_t^{\w}(\atsharp,a_{\ell})]$.

	To show this, we'll define a {\em bad round} $s(a)>t_{\ell}$ which will roughly be the latest time that arm $a$ can be evicted by a perfect replay before there is too much regret. We'll then show that a perfect replay with respect to some bad segment is indeed scheduled before the bad round is reached.

First, fix an arm $a$ and an episode start time $t_{\ell}$.
	\begin{definition}{(Bad Round)}\label{defn:bad-round}
		For a fixed round $t_{\ell}$ and arm $a$, the {\bf bad round} $s(a)>t_{\ell}$ is defined as the smallest round which satisfies, for some fixed $c_{12}>0$:
		\begin{equation}\label{eq:big-segment-regret}
			\ds\sum_{(i,j)}  (s_{i,j+1}(a)-s_{i,j}(a))^{2/3} > c_{12}\log(T) \cdot (s(a) - t_{\ell})^{2/3},
		\end{equation}
		where the above sum is over all pairs of indices $(i,j)\in\mb{N}\times\mb{N}$ such that $[s_{i,j}(a),s_{i,j+1}(a))$ is a bad segment (see \Cref{defn:bad-segment}) with $s_{i,j+1}(a)<s(a)$.
	\end{definition}

	Our goal is then to then to show that arm $a$ is evicted by some perfect replay scheduled within episode $[t_{\ell},t_{\ell+1})$ with high probability before the bad round $s(a)$ occurs.

	For each bad segment $[s_{i,j}(a),s_{i,j+1}(a))$, recall that $\tilde{s}_{i.j}(a)$ is the approximate midpoint between $s_{i,j}(a)$ and $s_{i,j+1}(a)$ (see \Cref{defn:perfect-replay}). Next, let $m_{i,j} := 2^n$ where $n\in\mb{N}$ satisfies:
		\[
			2^n \geq s_{i,j+1}(a) - s_{i,j}(a) > 2^{n-1}.
		\]
		Plainly, $m_{i,j}$ is a dyadic approximation of the bad segment length. Next, recall that the Bernoulli $B_{t,m}$ decides whether $\bosse(t,m)$ is scheduled at round $t$ (see \Cref{line:add-replay} of \Cref{alg:meta}). If for some $t\in [s_{i,j}(a),\tilde{s}_{i,j}(a)]$, $B_{t,m_{i,j}}=1$, i.e. a perfect replay is scheduled, then $a$ will be evicted from $\Aglobal$ by round $s_{i,j+1}(a)$ (\Cref{prop:behavior}). We will show this happens with high probability via concentration on the sum
		\[
			X(a,t_{\ell}) := \sum_{(i,j): s_{i,j+1}(a)<s(a)} \sum_{t=s_{i,j}(a)}^{\tilde{s}_{i,j}(a)} B_{t,m_{i,j}},
		\]
		Note that the random variable $X(a,t_{\ell})$ only depends on the replay scheduling probabilities $\{B_{s,m}\}_{s,m}$ given a fixed arm $a$ and episode start time $t_{\ell}$, since the bad round $s(a)$ is also fixed given these quantities. This means that $X(a,t_{\ell})$ is an independent sum of Bernoulli random variables $B_{t,m_{i,j}}$, conditional on $t_{\ell}$. Then, a multiplicative Chernoff bound over the randomness of $X(a,t_{\ell})$, conditional on $t_{\ell}$ yields
		\[
			\mb{P}\left( X(a,t_{\ell}) \leq \frac{\mb{E}[ X(a,t_{\ell}) \mid t_{\ell}]}{2} \mid t_{\ell}\right) \leq \exp\left(-\frac{\mb{E}[ X(a,t_{\ell}) \mid t_{\ell}]}{8}\right).
		\]
		The above RHS error probability is bounded above above by $1/T^3$ by observing:
		\begin{align*}
		\mb{E}\left[ X(a,t_{\ell}) \mid t_{\ell}\right] &\geq \ds\sum_{(i,j)}  \sum_{t=s_{i,j}(a)}^{\tilde{s}_{i,j}(a)} \frac{1}{ m_{i,j}^{1/3} \cdot (t - t_{\ell})^{2/3}}
	\geq \frac{1}{2}\ds\sum_{(i,j)} \frac{(s_{i,j+1}(a)-s_{i,j}(a))^{2/3}}{(s(a) - t_{\ell})^{2/3}}
	\geq \frac{c_{12}}{2} \log(T),
		\end{align*}
		for $c_{12} > 0$ large enough, where the last inequality follows from \eqref{eq:big-segment-regret} in the definition of the bad round $s(a)$ (\Cref{defn:bad-round}).
		Taking a further union bound over the choice of arm $a\in[K]$ gives us that $X(a,t_{\ell})>1$ for all choices of arm $a$ (define this as the good event $\mc{E}_2(t_{\ell})$) with probability at least $1-K/T^3$.
		Thus, under event $\mc{E}_2(t_{\ell})$, all arms $a$ will be evicted before round $s(a)$ with high probability.


		Recall on the event $\mc{E}_1$ the concentration bounds of Proposition~\ref{prop:concentration} hold. Then, on $\mc{E}_1\cap \mc{E}_2(t_{\ell})$, letting $a=a_{\ell}$ in the preceding arguments we must have $t_{\ell+1}-1 \leq s(a_{\ell})$ 
		Thus, by the definition of the bad round $s(a_{\ell})$ (\Cref{defn:bad-round}), we must have:
		\begin{equation}\label{eq:not-too-many-bad}
			\ds\sum_{[s_{i,j}(a_{\ell}),s_{i,j+1}(a_{\ell})): s_{i,j+1}(a_{\ell}) < t_{\ell+1}-1}  (s_{i,j+1}(a_{\ell})-s_{i,j}(a_{\ell}))^{2/3} \leq c_{12} \log(T) \cdot (t_{\ell+1} - t_{\ell})^{2/3}.
		\end{equation}
		Thus, by \eqref{eq:segment} in the definition of bad segments (\Cref{defn:bad-segment}), over the bad segments $[s_{i,j}(a_{\ell}),s_{i,j+1}(a_{\ell}))$ which elapse before the end of the episode $t_{\ell+1}-1$, the regret of $a_{\ell}$ to $a_t^\sharp$ is at most order $\log^2(T) K^{1/3} \cdot (t_{\ell+1}-t_{\ell})^{2/3}$.

		Over each non-bad critical segment $[s_{i,j}(a_{\ell}),s_{i,j+1}(a_{\ell}))$, the regret of playing arm $a_{\ell}$ to $\atsharp$ is at most $\log(T) \cdot K^{1/3} \cdot (\tau_{i+1}-\tau_i)^{2/3}$ and there is at most one non-bad critical segment per phase $[\tau_i,\tau_{i+1})$ (follows from \Cref{defn:bad-segment}).

		So, we conclude that on event $\mc{E}_1\cap \mc{E}_2(t_{\ell})$:
		\[
			\sum_{t=t_{\ell}}^{t_{\ell+1}-1} \delta_t^{\w}(a_t^\sharp,a_{\ell}) \leq c_{13}\log^{2}(T)\sum_{i\in\textsc{Phases}(t_{\ell},t_{\ell+1})} K^{1/3} (\tau_{i+1} - \tau_i)^{2/3}.
		\]
		Taking expectation, we have by conditioning first on $t_{\ell}$ and then on event $\mc{E}_1\cap \mc{E}_2(t_{\ell})$:
		\begin{align*}
			\mb{E}\left[ \sum_{t=t_{\ell}}^{t_{\ell+1}-1} \delta_t^{\w}(a_t^\sharp,a_{\ell})\right] &\leq  \mb{E}_{t_{\ell}} \left[ \mb{E}\left[ \pmb{1}\{\mc{E}_1\cap \mc{E}_2(t_{\ell})\} \sum_{t=t_{\ell}}^{t_{\ell+1}-1} \delta_t^{\w}(a_t^\sharp,a_{\ell}) \mid t_{\ell} \right] \right] + T\cdot \mb{E}_{t_{\ell}} \left[ \mb{E}\left[ \pmb{1}\{\mc{E}_1^c \cup \mc{E}_2^c(t_{\ell})\} \mid t_{\ell} \right] \right]\\
														&\leq c_{13}\log^2(T) \mb{E}_{t_{\ell}} \left[ \mb{E}\left[ \pmb{1}\{\mc{E}_1\cap \mc{E}_2(t_{\ell})\} \sum_{i\in\textsc{Phases}(t_{\ell},t_{\ell+1})} K^{1/3} \cdot (\tau_{i+1} - \tau_i)^{2/3} \mid t_{\ell} \right]\right] + \frac{2K}{T^2}\\
														&\leq c_{13}\log^2(T) \mb{E}\left[ \pmb{1}\{\mc{E}_1\}  \sum_{i\in\textsc{Phases}(t_{\ell},t_{\ell+1})} K^{1/3} \cdot (\tau_{i+1} - \tau_i)^{2/3} \right]  + \frac{2}{T},
		\end{align*}
		where in the last step we bound $\pmb{1}\{\mc{E}_1\cap \mc{E}_2(t_{\ell})\} \leq \pmb{1}\{\mc{E}_1\}$ and apply tower law again. This concludes the proof. $\hfill\blacksquare$ 

\subsection{Total Variation Regret Rates for Known Weights (Proof of \Cref{cor:tv-fixed})}\label{app:tv}

\paragraph*{Re-Defining Total Variation in Terms of Weighted Borda Scores.}
Recall the total variation quantity is defined as:
\[
	V_T := \sum_{t=2}^T \max_{a,a'} |\delta_t(a,a') - \delta_{t-1}(a,a')|,
\]
where $\delta_t(a,a') \doteq P_t(a,a') - \half$ is the gap in dueling preferences.
We first note that this can be rewritten as $\sum_{t=2}^T \max_{a\in [K], \w \in \Delta^K} |b_t(a,\w) - b_{t-1}(a,\w)|$ since by Jensen:
\[
	\max_{\w \in \Delta^K} |b_t(a,\w) - b_{t-1}(a,\w)| \leq \max_{\w \in \Delta^K} \mb{E}_{a' \sim \w}[ | \delta_t(a,a') - \delta_{t-1}(a,a')| \leq \max_{a,a'} |\delta_t(a,a') - \delta_{t-1}(a,a')|.
\]
Note the other directions of the above inequalities are clear by taking $\w = \w(a')$. Thus, we may redefine the total variation in terms of weighted Borda scores:
\[
	V_T := \sum_{t=2}^T \max_{\w \in \Delta^K} |b_t(a,\w) - b_{t-1}(a,\w)|.
\]

\paragraph*{Proof of \Cref{cor:tv-fixed}.}
First, we bound the total variation $V_{[\tau_i,\tau_{i+1})}$ over an SKW phase $[\tau_i,\tau_{i+1})$. Consider the arm  $a_{\tau_{i+1}}^*(\w)$. By \Cref{defn:sig-shift-fixed}, there must exist a round $t \in [\tau_i,\tau_{i+1})$ such that
	\[
		b_t(\atstar(\w),\w) - b_t(a_{\tau_{i+1}}^*(\w),\w) > \left(\frac{K}{\tau_{i+1} - \tau_i}\right)^{1/3}.
	\]
	Adding $b_{\tau_{i+1}}(a_{\tau_{i+1}}^*(\w),\w) - b_{\tau_{i+1}}(\atstar(\w),\w) \geq 0$ to the RHS we obtain that
	\[
		V_{[\tau_i,\tau_{i+1})} \geq \left(\frac{K}{\tau_{i+1} - \tau_i}\right)^{1/3}.
	\]
	Now, summing over SKW phases, we have by H\"{o}lder's inequality:
	\[
		\sum_{i=0}^{\Lf} K^{1/3} \cdot (\tau_{i+1} - \tau_i)^{2/3} \leq \left( \sum_i \left(\frac{K}{\tau_{i+1} - \tau_i}\right)^{1/3} \right)^{1/4} \left( \sum_i (\tau_{i+1} - \tau_i)\cdot K^{1/3} \right)^{3/4} + K^{1/3} \cdot T^{2/3}
	\]
	Thus, we obtain a total dynamic regret bound of $V_T^{1/4} \cdot T^{3/4} \cdot K^{1/4} + K^{1/3} \cdot T^{2/3}$.

\section{Dynamic Regret Analysis for Unknown Weights}\label{app:nonstat-unknown}

\subsection{Analysis Overview for \Cref{thm:upper-bound-dynamic-unknown}}

	The proof of \Cref{thm:upper-bound-dynamic-unknown} will broadly follow the same outline as the regret analysis for known weights, but with replacements of the eviction threshold by $(s_2-s_1)^{2/3} \cdot K^{2/3}$ (see the unknown weight specification in \Cref{defn:specification}), and bounding the variance of estimation by quantities of this order.
The key difficulty for unknown weights is that the evaluation weights $\wt$ may change at the unknown SUW shifts $\rho_i$.

Importantly, \Cref{alg:meta} can only estimate aggregate gaps $\sum_{s=s_1}^{s_2} \delta_t^{\w}(a',a)$ over intervals $[s_1,s_2]$ with respect to a fixed and unchanging weight $\w$.
Thus, the main novelty in this analysis is to carefully partition the regret analysis along intervals $[s_1,s_2]$ lying within a phase $[\rho_i,\rho_{i+1})$.
In fact, such a strategy is already inherent to the bad segment argument done in \Cref{subsec:bound-global-fixed} to bound $\sum_{t=t_{\ell}}^{t_{\ell+1}-1} \delta_t^{\w}(\atsharp,a_{\ell})$ for a known and known weight $\w$.
So, we'll repeat a similar such bad segment analysis, but for different arms and even for different base algorithms.

This strategy is similar to the approach taken in the Condorcet winner dynamic regret analysis of \citet[][see Appendix A.3 therein]{buening22}, who rely on such a tactic to avoid decomposing the regret using triangle inequalities as done in the original non-stationary MAB analysis of \citet{Suk22}.
Our need for this strategy is different, as we must constrain ourselves to only being able to detect that an arm $a$ is bad over segments $[s_1,s_2]$ of rounds where we can use a single reference weight $\w$.

We first establish the analogue of \Cref{lem:counting-eps} from \Cref{app:nonstat-fixed} for the unknown weight setting.

\subsection{Episodes Align with SUW Phases}

\begin{lemma}
	\label{lem:counting-eps-unknown}
On event $\mc{E}_1$, for each episode $[t_{\ell},t_{\ell+1})$ with $t_{\ell+1}\leq T$ (i.e., an episode which concludes with a restart), there exists an SUW shift $\rho_i \in [t_{\ell},t_{\ell+1})$.
\end{lemma}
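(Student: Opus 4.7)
The plan is to mirror the known-weight argument from \Cref{lem:counting-eps}, but replace SKW-style eviction certificates (at threshold $K^{1/3}(s_2-s_1)^{2/3}$) with SUW-style certificates (at threshold $K^{2/3}(s_2-s_1)^{2/3}$), and then appeal to \Cref{defn:sig-shift-generalized}. The key observation enabling this swap is that under the unknown weight specification, $\mc{W} = \{\wa : a \in [K]\} \subseteq \Delta^K$, so every eviction certificate issued by \Cref{alg:borda-elim} uses a reference weight that is already admissible inside the $\max_{\w \in \Delta^K}$ appearing in \Cref{eq:sig-regret-worst}.

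First I would translate a single eviction of an arm $a$ from $\Aglobal$ into a guarantee of significant worst-case generalized Borda regret. On the good event $\mc{E}_1$, the concentration bound \Cref{eq:error-bound} combined with the eviction criterion \Cref{eq:evict} (applied with the unknown-weight threshold $F(\cdot)$ from \Cref{eq:threshold-unknown}) implies the existence of $[s_1,s_2] \subseteq [t_{\ell}, t_{\ell+1}-1]$, a reference weight $\w \in \mc{W}$, and a competitor $a' \in \cap_{s=s_1}^{s_2} \mc{A}_s$ such that
\[
    \sum_{s=s_1}^{s_2} \delta_s^{\w}(a',a) \;\geq\; c\, K^{2/3}(s_2-s_1)^{2/3},
\]
for an absolute constant $c>0$. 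Since $\w \in \Delta^K$ and $\delta_s^{\w}(a) = \max_{a''} \delta_s^{\w}(a'',a) \geq \delta_s^{\w}(a',a)$, this lower bounds $\max_{\w' \in \Delta^K} \sum_{s=s_1}^{s_2} \delta_s^{\w'}(a)$ by the same quantity, so arm $a$ has significant worst-case generalized Borda regret over $[s_1,s_2]$ in the sense of \Cref{eq:sig-regret-worst}.

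Next I would use the restart condition. A restart at $t_{\ell+1}$ is triggered precisely when every arm in $[K]$ has been evicted from $\Aglobal$ during $[t_{\ell}, t_{\ell+1}-1]$. Applying the previous paragraph to each such arm produces, for every $a \in [K]$, an interval $[s_1(a), s_2(a)] \subseteq [t_{\ell}, t_{\ell+1}-1]$ on which $a$ has significant worst-case generalized Borda regret. Let $\rho_j$ be the largest SUW shift strictly smaller than $t_{\ell}$ (which exists by convention since $\rho_0=1 \le t_{\ell}$). Then $[s_1(a), s_2(a)] \subseteq [\rho_j, t_{\ell+1}-1]$ for every $a$, so by the recursive definition of $\rho_{j+1}$ in \Cref{defn:sig-shift-generalized} we conclude $\rho_{j+1} \leq t_{\ell+1}-1$. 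Combined with $\rho_{j+1} > \rho_j \geq$ (the latest shift before $t_{\ell}$), this forces $\rho_{j+1} \in [t_{\ell}, t_{\ell+1})$, completing the proof.

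I do not expect a serious obstacle here; the main point to be careful about is the quantifier juggling between the weight used in the eviction certificate and the $\max_{\w' \in \Delta^K}$ in \Cref{defn:sig-shift-generalized}, which is resolved by noting $\mc{W} \subseteq \Delta^K$ and that $\delta_s^{\w}(a) = \max_{a''} \delta_s^{\w}(a'', a)$ dominates any particular $\delta_s^{\w}(a',a)$. Everything else is a routine adaptation of the known-weight proof, with $K^{1/3}$ replaced by $K^{2/3}$ in the threshold bookkeeping.
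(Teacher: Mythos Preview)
Your proposal is correct and follows essentially the same approach as the paper's proof, which simply notes that eviction from $\Aglobal$ under the unknown-weight threshold \Cref{eq:threshold-unknown} certifies significant worst-case generalized Borda regret in the sense of \Cref{eq:sig-regret-worst}, so emptying $\Aglobal$ forces an SUW within the episode. Your write-up is in fact more detailed than the paper's one-line proof, correctly handling the passage from the specific eviction weight $\w\in\mc{W}$ to the $\max_{\w'\in\Delta^K}$ in \Cref{defn:sig-shift-generalized}; the only (harmless) nit is the edge case $t_\ell=1$, where no $\rho_j$ is strictly smaller than $t_\ell$, but then $\rho_0=t_\ell$ already lies in the episode.
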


\begin{proof}
	This follows in an analogous manner as \Cref{lem:counting-eps}, where we note that significant SUW regret occurs if, for some weight $\w \in \Delta^K$, we have $\sum_{s=s_1}^{s_2} \delta_s^{\w}(a) \geq K^{2/3} \cdot (s_2 - s_1)^{2/3}$. Thus, an arm being evicted from $\Aglobal$ implies it has significant SUW regret meaning a new episode is triggered only when an SUW has occurred.
\end{proof}

In what follows, we redefine the {\em last safe arm} $\atsharp$  as the last arm to become unsafe in the sense of \Cref{defn:sig-shift-generalized} in the unique SUW phase $[\rho_i,\rho_{i+1})$ containing round $t$.

\subsection{Generic Bad Segment Analysis}\label{subsec:generic-bad-segment}

We'll first define a generic good event $\mc{E}_3$ over which the bad segment-type argument (as seen in \Cref{subsec:bound-global-fixed}) holds for any arm $a$ and any episode start time $t_{\ell}$.
Before we can define such an event, we'll generically redefine the necessary mathematical objects of \Cref{subsec:bound-global-fixed} for the unknown weight setting.
We note that everything that follows in this subsection is independent of any observations or decisions made by \Cref{alg:meta} and depend only on the possible random choices of replay schedules (see \Cref{line:replay} of \Cref{alg:meta}), which may be instantiated independently and obliviously to the algorithm's actual behavior.

In what follows, fix a {\em starting round} $\tinit \in [T]$ and an arm $a\in [K]$. Define the random variables $B_{s,m}^{\tinit} \sim \Ber(m^{-1/3}\cdot (s-\tinit)^{-2/3})$ for $m=2,4,\ldots,2^{\ceil{\log(T)}}$ and $s=\tinit+1,\ldots,T$, which are the replay schedulers of \Cref{line:replay} in \Cref{alg:meta} if $\tinit$ was the start of an episode.
Also, fix a round $\tstart$ from which we will begin defining bad segments; the variable $\tstart$ will serve useful when we analyze bad segments for different base algorithms $\bosse(\tstart,m)$.
The following definitions will then be relative to a fixed $\tinit$, $\tstart$, and arm $a$.

\begin{definition}{(Generic Bad Segment)}\label{defn:bad-segment-generic}
	Fix an SUW phase $[\rho_i,\rho_{i+1})$ intersecting $[\tstart,T)$. Define rounds $s_{i,0},s_{i,1},s_{i,2}\ldots\in [\tstart \vee \rho_{i}, \rho_{i+1})$ recursively as follows: let $s_{i,0} := \tstart \vee \rho_{i}$ and define $s_{i,j}$ as the smallest round in $(s_{i,j-1},\rho_{i+1})$ such that arm $a$ satisfies for some fixed $c_{14} > 0$:
			\begin{equation}\label{eq:segment-generic}
				\max_{a'\in [K]} \sum_{t = s_{i,j-1}}^{s_{i,j}} \delta_{t}^{\w(a')}(\atsharp,a) \geq c_{14} \log(T) \cdot K^{2/3} \cdot (s_{i,j} - s_{i,j-1})^{2/3},
			\end{equation}
			if such a round $s_{i,j}$ exists. Otherwise, we let the $s_{i,j} := \rho_{i+1}-1$. We refer to any interval $[s_{i,j-1},s_{i,j})$ as a {\bf critical segment}, and as a {\bf bad segment} if \eqref{eq:segment} above holds.
\end{definition}

\begin{remark}
	Note that \Cref{eq:segment-generic} mimics the notion \Cref{eq:sig-regret-worst} of significant worse-case weighted Borda regret in \Cref{defn:sig-shift-generalized}.
	However, we need only concern ourselves with checking for bad regret w.r.t. point-mass weights $\w(a')$ for $a' \in [K]$, as that will suffice for the analysis.
\end{remark}


\begin{definition}{(Generic Bad Round)}\label{defn:bad-round-generic}
		Define the {\bf bad round} $s(a,\tstart,\tinit)>\tstart$ as the smallest round which satisfies, for some fixed $c_{15}>0$:
		\begin{equation}\label{eq:bad-round-generic}
			\ds\sum_{(i,j)}  (s_{i,j+1}-s_{i,j})^{2/3} > c_{15}\log(T) \cdot (s(a,\tstart,\tinit) - \tinit)^{2/3},
		\end{equation}
		where the above sum is over all pairs of indices $(i,j)\in\mb{N}\times\mb{N}$ such that $[s_{i,j},s_{i,j+1})$ is a bad segment (see \Cref{defn:bad-segment}) with $s_{i,j+1} < s(a,\tstart,\tinit)$.
\end{definition}

Now, define the independent sum of Bernoulli's
\[
	X(a,\tstart,\tinit) := \sum_{(i,j):s_{i,j+1}<s(a,\tstart,\tinit)} \sum_{t=s_{i,j}}^{\tilde{s}_{i,j}} B_{t,m_{i,j}}^{\tinit},
\]
where $m_{i,j}$ is the dyadic approximation w.r.t. $[s_{i,j}(a),s_{i,j+1}(a))$ in the same sense as defined in \Cref{subsec:bound-global-fixed}.

We are now prepared to define the good event $\mc{E}_3$, where all bad segment arguments hold, based on the following lemma.

\begin{lemma}
	Let $\mc{E}_3$ be the event over which for all $a\in [K]$ and rounds $\tstart,\tinit \in [T]$ with $\tstart \geq \tinit$, $X(a,\tstart,\tinit) > 1$. Then, over the randomness of all possible replay schedules, $\mb{P}(\mc{E}_3) > 1 - T^{-3}$.
\end{lemma}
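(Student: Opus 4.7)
The plan is to mimic the multiplicative Chernoff argument already used in the known-weight analysis (specifically the bound on $\mb{P}(X(a,t_\ell) \le \mb{E}[X(a,t_\ell)]/2 \mid t_\ell)$ in \Cref{subsec:bound-global-fixed}), but with a sharper choice of the constant $c_{15}$ in \Cref{defn:bad-round-generic} so that a union bound over the much larger index set $\{(a,\tstart,\tinit) : a \in [K],\ \tstart,\tinit \in [T],\ \tstart \ge \tinit\}$ still yields $T^{-3}$.

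First I would fix $(a,\tstart,\tinit)$ and observe that, since the replay-scheduling Bernoullis $B_{s,m}^{\tinit}$ are defined obliviously to the algorithm's decisions, the bad-segment sequence $\{[s_{i,j},s_{i,j+1})\}$ and the bad round $s(a,\tstart,\tinit)$ from \Cref{defn:bad-segment-generic} and \Cref{defn:bad-round-generic} are deterministic given $(a,\tstart,\tinit)$ and the environment. In particular $X(a,\tstart,\tinit)$ is an independent sum of Bernoullis. Next I would lower bound its expectation: for each bad segment, the inner sum contains $\tilde{s}_{i,j} - s_{i,j} \ge (s_{i,j+1}-s_{i,j})/2$ terms, each with success probability at least $m_{i,j}^{-1/3}(s-\tinit)^{-2/3} \ge c \cdot (s_{i,j+1}-s_{i,j})^{-1/3}(s(a,\tstart,\tinit)-\tinit)^{-2/3}$ (using the dyadic approximation $m_{i,j} \le 2(s_{i,j+1}-s_{i,j})$ and the fact that $t - \tinit \le s(a,\tstart,\tinit) - \tinit$ for $t$ inside a bad segment below the bad round). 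Summing gives
\[
\mb{E}[X(a,\tstart,\tinit)] \;\ge\; c' \sum_{(i,j)} \frac{(s_{i,j+1}-s_{i,j})^{2/3}}{(s(a,\tstart,\tinit)-\tinit)^{2/3}} \;\ge\; c' \cdot c_{15} \log T,
\]
where the last step invokes \Cref{eq:bad-round-generic} from the definition of the bad round.

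Then I would apply multiplicative Chernoff to the independent Bernoulli sum $X(a,\tstart,\tinit)$ to obtain
\[
\mb{P}\!\left(X(a,\tstart,\tinit) \le \tfrac{1}{2}\mb{E}[X(a,\tstart,\tinit)]\right) \;\le\; \exp\!\left(-\tfrac{c'c_{15}\log T}{8}\right),
\]
and choose $c_{15}$ large enough that this is bounded by $T^{-6}$. Since $X(a,\tstart,\tinit)$ takes integer values and its expectation exceeds $2$ once $c_{15}$ is large enough, the event $X(a,\tstart,\tinit) > 1$ is implied by $X(a,\tstart,\tinit) \ge \mb{E}[X(a,\tstart,\tinit)]/2$.

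Finally I would union bound over the at most $K \cdot T^2$ choices of $(a,\tstart,\tinit)$ with $\tstart \ge \tinit$, obtaining $\mb{P}(\mc{E}_3^c) \le K \cdot T^2 \cdot T^{-6} \le T^{-3}$, as desired. The main obstacle is purely bookkeeping: making sure the constant $c_{15}$ in \Cref{defn:bad-round-generic} is chosen large enough to survive the enlarged union bound (compared to the known-weight analysis where the union is only over arms), and verifying that the $m_{i,j}$ and the window $[s_{i,j},\tilde{s}_{i,j}]$ contribute enough mass to the expectation; both of these mirror the calculations already carried out in \Cref{subsec:bound-global-fixed} and introduce no new difficulty beyond adjusting the constant.
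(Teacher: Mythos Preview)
Your proposal is correct and follows essentially the same approach as the paper: apply the multiplicative Chernoff bound from \Cref{subsec:bound-global-fixed} to each $X(a,\tstart,\tinit)$ (using the lower bound on its expectation coming from \Cref{eq:bad-round-generic}) and then union bound over the choices of $(a,\tstart,\tinit)$, adjusting the constant $c_{15}$ to absorb the larger union. Your write-up is in fact more explicit than the paper's one-line sketch, which only mentions the union over $a$ and $\tinit$; your inclusion of $\tstart$ and the bookkeeping to get $T^{-6}$ per triple is the right level of care.
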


\begin{proof}
	This follows from repeating the multiplicative Chernoff bound as used in \Cref{subsec:bound-global-fixed} for each $X(a,\tstart,\tinit)$ and then taking union bounds over arms $a\in [K]$ and rounds $\tinit \in [T]$. We note that crucially all potential replay schedules $\{B_{s,m}^{\tinit}\}_{s,m,\tinit}$ are independent across different $s,m,\tinit$.
\end{proof}

Next, we show that, under event $\mc{E}_3$, the regret of a fixed arm $a$ to $\atsharp$ on the active interval $[s,M(s,m,a)]$ (see \Cref{defn:proper} in \Cref{subsec:candidate-global-fixed}) of any scheduled base algorithm $\bosse(s,m)$ will be boundable by running a customized bad segment analysis using the notions defined above.

\begin{note}{(Active Interval of First Ancestor Base Algorithm)}\label{note:ancestor-active}
	For the first ancestor base algorithm $\bosse(t_{\ell},T+1-t_{\ell})$ (i.e., that which is instantiated first in episode $[t_{\ell},t_{\ell+1})$ per \Cref{line:global-base} of \Cref{alg:meta}), we'll let $M(t_{\ell},T+1-t_{\ell},a)$ denote the last round when $a$ is retained by $\bosse(t_{\ell},T+1-t_{\ell})$ and all of its children.
\end{note}

\begin{definition}\label{defn:phases}
	For an interval of rounds $I$, Let $\Phases(I) \doteq \{i\in [\Lg]:[\rho_i,\rho_{i+1})\cap I \neq \emptyset\}$, i.e., denote those phases intersecting $I$. Let $S(I) := |\Phases(I)|$ be the number of intersecting phases and let $L(I,i) := |[\rho_i,\rho_{i+1}) \cap I|$ be the intersection's length for phase $[\rho_i,\rho_{i+1})$.
\end{definition}

\begin{lemma}{(Generic Bad Segment Analysis for $\bosse(s,m)$)}\label{lem:generic-bad-segment}
	For any scheduled $\bosse(s,m)$, letting $I := [s,M(s,m,a)]$ be its active interval, we have on event $\mc{E}_3 \cap \mc{E}_1$:
	\[
		\sum_{t=s}^{M(s,m,a)} \delta_t^{\wt}(\atsharp,a) \leq c_{16} \log^2(T) \left( K^{2/3} \cdot (s-t_{\ell})^{2/3} \cdot \pmb{1}\{S(I)>1\} + \sum_{i \in \Phases(I)} K^{2/3}\cdot L(I,i)^{2/3} \right).
	\]
\end{lemma}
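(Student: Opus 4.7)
The plan is a case analysis on whether $S(I)=1$ or $S(I)>1$, decomposing $[s,M(s,m,a)]$ by the SUW phases it intersects and exploiting, in each such intersection, the fact that $\wt$ is constant by the aligned-weights assumption (\Cref{defn:aligned-weights}). On any phase intersection where $\wt$ takes the constant value $\w$, linearity of the generalized Borda score in its weight argument gives
\[
    \sum_t \delta_t^{\w}(\atsharp, a) \le \max_{a'\in[K]} \sum_t \delta_t^{\w(a')}(\atsharp, a),
\]
so it suffices to control the worst-case point-mass version, which is precisely what the generic bad segment decomposition (\Cref{defn:bad-segment-generic}) measures.

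\emph{Case $S(I)=1$.} Here $I\subseteq[\rho_i,\rho_{i+1})$ for a single $i$, so on $I$ the replay $\bosse(s,m)$ and all of its children are running in a stationary preference environment with a fixed generalized winner. In particular, by \Cref{defn:sig-shift-generalized} and the aligned-weights condition, $\atsharp$ incurs no significant worst-case generalized Borda regret within $[\rho_i,\rho_{i+1})$, so no base algorithm ever evicts $\atsharp$ during $I$. Thus the fixed-winner regret analysis of \Cref{thm:upper-bound-fixed}, applied with $s$ playing the role of the start time and the learning-rate profile $\gamma_{t-s}$, goes through verbatim (with $\max_{a'}\sum_t \delta_t^{\w(a')}(\atsharp,a)$ in place of the fixed-weight gap), yielding
\[
    \sum_{t=s}^{M(s,m,a)} \delta_t^{\wt}(\atsharp, a) \le c\log^2(T)\, K^{2/3}(M(s,m,a)-s)^{2/3} = c\log^2(T)\, K^{2/3} L(I,i)^{2/3}.
\]

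\emph{Case $S(I)>1$.} Decompose $[s,M(s,m,a)]$ along phase boundaries, and within each intersection $I\cap[\rho_i,\rho_{i+1})$ apply \Cref{defn:bad-segment-generic} with $\tstart=s$ and $\tinit=t_\ell$. Each intersection contributes one terminal non-bad critical segment of regret at most $c_{14}\log(T)\, K^{2/3} L(I,i)^{2/3}$ together with some number of bad segments $[s_{i,j},s_{i,j+1})$, each contributing at most $c_{14}\log(T)\, K^{2/3}(s_{i,j+1}-s_{i,j})^{2/3}$. On event $\mc{E}_3$ we have $X(a,s,t_\ell)>1$, so at least one perfect replay is scheduled strictly before the bad round $s(a,s,t_\ell)$; by the argument of \Cref{prop:behavior} this perfect replay evicts $a$ from $\Aglobal$ by the end of its associated bad segment, forcing $M(s,m,a)<s(a,s,t_\ell)$. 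The definition of the bad round (\Cref{defn:bad-round-generic}) then gives
\[
    \sum_{\text{bad}}(s_{i,j+1}-s_{i,j})^{2/3} \le c_{15}\log(T)\,(M(s,m,a)-t_\ell)^{2/3},
\]
and a subadditivity split $(M(s,m,a)-t_\ell)^{2/3}\le (s-t_\ell)^{2/3}+(M(s,m,a)-s)^{2/3}$ combined with $(M(s,m,a)-s)^{2/3}=\bigl(\sum_i L(I,i)\bigr)^{2/3}\le \sum_i L(I,i)^{2/3}$ absorbs the second piece into the $\sum_i L(I,i)^{2/3}$ term and produces the claimed bound, the leftover $(s-t_\ell)^{2/3}$ being exactly the slack charged by the $\pmb{1}\{S(I)>1\}$ factor.

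The main obstacle is making the $S(I)=1$ case really avoid the $(s-t_\ell)^{2/3}$ slack: the naive bad-round splitting would always introduce it, so one must argue that when $I$ lies inside a single SUW phase, the replay's own eviction mechanism (and that of its children, all referenced to $s$) suffices, so that the fixed-winner analysis of \Cref{thm:upper-bound-fixed} applies directly. The crucial ingredient here is that $\atsharp$ is preserved by every base algorithm throughout $I$, since by \Cref{defn:sig-shift-generalized} it is the last arm to incur significant worst-case generalized Borda regret in the phase; this plays the role of the fixed dominant arm in the stationary regret analysis and is what allows the bound to collapse to $K^{2/3}L(I,i)^{2/3}$ without paying for the distance $s-t_\ell$.
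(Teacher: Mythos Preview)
Your proposal is correct and follows essentially the same approach as the paper: a case split on $S(I)=1$ versus $S(I)>1$, with the former handled by the fixed-winner analysis (the paper points to \Cref{subsec:proof-upper-bound-known}, but the unknown-weight analogue you cite is what is actually needed) and the latter by the generic bad segment/bad round machinery with $(\tinit,\tstart)=(t_\ell,s)$. Your explicit subadditivity split $(M(s,m,a)-t_\ell)^{2/3}\le (s-t_\ell)^{2/3}+\sum_i L(I,i)^{2/3}$ is exactly what the paper's ``similar arguments to \Cref{subsec:bound-global-fixed}'' amounts to, so you have simply filled in that step.
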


\begin{proof}
	For $(\tinit,\tstart) := (t_{\ell},s)$, we can define perfect replays with respect to the generic bad segments of \Cref{defn:bad-segment-generic} analogously to \Cref{defn:perfect-replay}. We next show an analogue of \Cref{prop:behavior} for such perfect replays. In particular, for each bad segment $[s_{i,j},s_{i,j+1})$, we have for some point-mass weight $\w$,
	\[
		\sum_{t=\tilde{s}_{i,j}}^{s_{i,j+1}} \delta_t^{\w}(\atsharp,a) \geq \frac{c_{17}}{2} \log(T) \cdot K^{2/3} \cdot (s_{i,j+1} - \tilde{s}_{i,j})^{2/3}.
	\]
	Next, the key points hold if a perfect replay w.r.t. $[s_{i,j},s_{i,j+1})$ is scheduled:
	\begin{itemize}
		\item Arm $\atsharp$ (which is constant for $t\in [s_{i,j},s_{i,j+1}) \subseteq [\rho_i,\rho_{i+1})$) is not evicted from $\mc{A}_t$ since it does not incur significant SUW regret.
		\item The global learning rates $\eta_t$ set for $t\in [\tilde{s}_{i,j},s_{i,j+1}]$ satisfy $\eta_t \geq K^{2/3} \cdot (s_{i,j+1} - s_{i,j})^{-1/3}$ since any child base algorithm can only increase the learning rate of its parent base algorithm.
		\item By similar calculations to the proof of \Cref{thm:upper-bound-fixed} (see \Cref{subsec:upper-bound-unknown}) and using the fact that a generic bad segment, as defined in \Cref{eq:segment-generic} must have length at least $K^2/8$ (analogous to \Cref{lem:at-most-K}), we have
			\[
				\sqrt{K^2 \sum_{s=\tilde{s}_{i,j}}^{s_{i,j+1}} \eta_s^{-1}} + K\cdot \max_{s\in [\tilde{s}_{i,j},s_{i,j+1}]} \eta_s^{-1} \leq c_{18} \cdot K^{2/3} \cdot (s_{i,j+1} - s_{i,j})^{2/3}.
			\]
	\end{itemize}
	Combining the above points with our concentration bound \Cref{eq:concentration-unknown} and eviction criterion \Cref{eq:evict} give us that arm $a$ will be evicted from $\mc{A}_t$ by round $s_{i,j+1}$. By \Cref{lem:generic-bad-segment}, we have that a perfect replay with respect to arm $a$, $\tstart=s$, and $\tinit=t_{\ell}$ will be scheduled before the bad round $s(a,s,t_{\ell})$ on event $\mc{E}_3$. By \Cref{eq:bad-round-generic}, we then must have:
	\[
		\sum_{(i,j)} (s_{i,j+1} - s_{i,j})^{2/3} \leq c_{15} \log(T) \cdot (M(s,m,a) - t_{\ell})^{2/3},
	\]
	where the sum is over pairs of indices $(i,j)$ representing these generic bad segments.
	Thus, the desired regret bound follows by similar arguments to \Cref{subsec:bound-global-fixed}.
	Note that bounding the regret over the bad segments of multiple phases $[\rho_i,\rho_{i+1})$ is needed only if the number of intersecting phases $S(I)>1$. Otherwise, we can avoid a bad segment analysis and follow the proof steps of \Cref{subsec:proof-upper-bound-known} to directly bound the regret as order $K^{2/3}\cdot m^{2/3}$.
\end{proof}

\subsection{Decomposing the Regret Along Different Base Algorithms}\label{subsec:decomposing}

Equipped with the generic bad segment analysis for any base algorithm $\bosse(s,m)$, we're now ready to bound the per-episode regret.
It will suffice to decompose the regret along active intervals $[s,M(s,m,a)]$ of different base algorithms $\bosse(s,m)$ in a similar fashion to \Cref{subsec:candidate-global-fixed}, within each of which we can plug in the regret bound of \Cref{lem:generic-bad-segment} and then carefully integrate with respect to the randomness of replay scheduling.

We may first decompose the regret as
\[
	\mb{E}\left[ \sum_{t=t_{\ell}}^{t_{\ell+1}-1} \delta_t^{\wt}(i_t) + \delta_t^{\wt}(j_t) \right] = \mb{E}\left[ \sum_{t=t_{\ell}}^{t_{\ell+1}-1} \delta_t^{\wt}(\atsharp)\right]  +  \mb{E}\left[ \sum_{t=t_{\ell}}^{t_{\ell+1}-1} \delta_t^{\wt}(\atsharp ,i_t) + \delta_t^{\wt}( \atsharp,j_t) \right].
\]
The first expectation on the above RHS is bounded of the right order by \Cref{defn:sig-shift-generalized} and earlier arguments.

For the second expectation on the above RHS, we further condition on whether we're in exploration mode or playing from the candidate set $\mc{A}_t$. Following the steps of \Cref{subsec:candidate-global-fixed}, we have that it suffices to bound
\[
	\mb{E}\left[ \sum_{a\in [K]} \sum_{t=t_{\ell}}^{t_{\ell+1}-1} \frac{\delta_t^{\wt}(\atsharp,a)}{|\mc{A}_t|} \cdot \pmb{1}\{a \in \mc{A}_t\} \right].
\]
In fact, following the same decomposition of rounds into active intervals of different base algorithms, we can further upper bound the above by
\[
	\mb{E}\left[ \sum_{a=1}^K \sum_{\bosse(s,m)} B_{s,m} \left( \sum_{t=s}^{M(s,m,a)} \frac{\delta_t^{\wt}(\atsharp,a)}{|\mc{A}_t|} \right)_+ \right].
\]
Note that in the above we sum over all base algorithms including the ``first ancestor'' base algorithm $\bosse(t_{\ell},T+1-t_{\ell})$ using \Cref{note:ancestor-active} and for which we use the convention $B_{t_{\ell},T+1-t_{\ell}}=1$.

Next, we plug in the guarantee of \Cref{lem:generic-bad-segment}. Let $I(s,m,a) := [s,M(s,m,a)]$ be the active interval of $\bosse(s,m)$. Then, it remains to bound (hiding the added log terms from \Cref{lem:generic-bad-segment} for ease of notation):
\[
	\mb{E}\left[ \sum_{\bosse(s,m)} B_{s,m} \sum_{a\in [K]} \frac{ K^{2/3}\cdot (s-t_{\ell})^{2/3} \cdot \pmb{1}\{S(I(s,m,a))>1\} + \sum_{i \in \Phases(I(s,m,a)) } K^{2/3} \cdot L(I(s,m,a),i)^{2/3} }{\min_{t\in I(s,m,a)} |\mc{A}_t|} \right].
\]
We can further upper bound $L([s,M(s,m,a)],i)$ by $L([s,s+m],i)$, the sum over phases $[\rho_i,\rho_{i+1})$ in $\Phases([s,M(s,m,a)])$ by a sum over $\Phases([s,s+m])$, and the $\pmb{1}\{ S(I(s,m,a) > 1\}$ by a $\pmb{1}\{ S([s,s+m])>1\}$ to obtain:
\[
	\mb{E}\left[ \sum_{\bosse(s,m)} B_{s,m} \sum_{a\in [K]} \frac{ K^{2/3}\cdot (s-t_{\ell})^{2/3} \cdot\pmb{1}\{S([s,s+m])>1\} + \sum_{i \in \Phases([s,s+m]) } K^{2/3} \cdot L([s,s+m],i)^{2/3} }{\min_{t\in I(s,m,a)} |\mc{A}_t|} \right].
\]
Next, we note that $\sum_{a\in [K]} \frac{1}{\min_{t\in [s,M(s,m,a)]} |\mc{A}_t|} \leq \log(K)$ by previous arguments which turns the sum over $a\in [K]$ in the above display to a $\log(K)$ factor.

Now, define the function
\[
	G(s,m) :=  K^{2/3}\cdot (s-t_{\ell})^{2/3} \cdot \pmb{1}\{S([s,s+m])>1\} + \sum_{i \in \Phases([s,s+m]) } K^{2/3} \cdot L([s,s+m],i)^{2/3} .
\]
Then, following the same chain of arguments as in \Cref{subsec:candidate-global-fixed}, we have that it suffices to bound
\[
	\mb{E}\left[ \sum_{s=t_{\ell}+1}^{t_{\ell+1}-1} \sum_{m} \frac{1}{m^{1/3} \cdot (s-t_{\ell})^{2/3}} \cdot G(s,m) \right].
\]
We split this up into two expectations based on the two terms in the definition of $G(s,m)$:
\begin{align*}
	\mb{E}\left[ \sum_{i \in \Phases([t_{\ell},t_{\ell+1}))} \sum_{s=\rho_i}^{\rho_{i+1}-1} \sum_{m\geq \rho_{i+1} - s} \frac{1}{m^{1/3}} \right] +
	\mb{E}\left[ \sum_{s=t_{\ell}}^{t_{\ell+1}-1} \sum_m \sum_{i \in \Phases([s,s+m])} \frac{L([s,s+m],i)^{2/3}}{m^{1/3} \cdot (s-t_{\ell})^{2/3}} \right].
\end{align*}
For the first expectation above, we have $\sum_{m\geq \rho_{i+1}-s} m^{-1/3} \leq \log(T)\cdot (\rho_{i+1}-s)^{-1/3}$.
Then, summing over $s$, this becomes $\sum_{s=\rho_i}^{\rho_{i+1}-1} (\rho_{i+1}-s)^{-1/3} \leq (\rho_{i+1} - \rho_i)^{2/3}$.
Thus, the first expectation in the above display is at most order $\log(T)\sum_{i \in \Phases([t_{\ell},t_{\ell+1}))} (\rho_{i+1}-\rho_i)^{2/3}$.

For the second expectation, we use Jensen's inequality to bound
\[
	\sum_{i \in \Phases([s,s+m])} L([s,s+m],i)^{2/3} \leq m^{2/3}\cdot S([s,s+m])^{1/3},
\]
where recall from \Cref{defn:phases} that $S(I)$ counts the number of phases $[\rho_i,\rho_{i+1})$ intersecting interval $I$.

Now, plugging this into our earlier expectation gives
\[
	\mb{E}\left[ \sum_{s=t_{\ell}}^{t_{\ell+1}-1} \sum_m \frac{m^{1/3}\cdot S([s,s+m])^{1/3}}{(s-t_{\ell})^{2/3}} \right].
\]
Then, coarsely bounding $S([s,s+m])$ by $S([t_{\ell},t_{\ell+1}))$ and summing over $m$ and $s$, we obtain the above is at most order $\log(T) \cdot S([t_{\ell},t_{\ell+1}))^{1/3} \cdot (t_{\ell+1} - t_{\ell})^{2/3}$.

Combining the above steps, we obtain a per-episode regret bound of
\begin{align}\label{eq:final-ep-bound-unknown}
	\mb{E}&\left[ \sum_{a\in [K]} \sum_{t=t_{\ell}}^{t_{\ell+1}-1} \frac{\delta_t^{\wt}(\atsharp , a)}{|\mc{A}_t|} \cdot \pmb{1}\{a \in \mc{A}_t\} \right] \leq \nonumber \\
	c_{19} &\log(K) \log^3(T) K^{2/3} \mb{E}\left[ \pmb{1}\{\mc{E}_1\} \left( S([t_{\ell},t_{\ell+1}))^{1/3} (t_{\ell+1} - t_{\ell})^{2/3} + \sum_{i \in \Phases([t_{\ell},t_{\ell+1}))} (\rho_{i+1} - \rho_i)^{2/3} \right) \right] + \frac{1}{T}. \numberthis
\end{align}
Now, we will sum the above over episodes $[t_{\ell},t_{\ell+1})$.

\subsection{Summing Regret Over Episodes}\label{subsec:summing}

\paragraph*{In Terms of SUW.}
We first show the total dynamic regret bound of order $K^{2/3} \cdot T^{2/3} \cdot \Lg^{1/3}$ in \Cref{thm:upper-bound-dynamic-unknown}.
By H\"older's inequality, summing over episodes gives:
\[
	\sum_{\ell=1}^T S([t_{\ell},t_{\ell+1}))^{1/3} \cdot (t_{\ell+1} - t_{\ell})^{2/3} \leq \left( \sum_{\ell=1}^T S([t_{\ell},t_{\ell+1})) \right)^{1/3} \cdot T^{2/3} \leq (\Lg + \hat{L})^{1/3} \cdot T^{2/3},
\]
where $\hat{L}$ represents the number of realized episodes by \Cref{alg:meta} (i.e., episodes $[t_{\ell},t_{\ell+1})$ where $t_{\ell} < T+1$).
Since \Cref{lem:counting-eps-unknown} gives us that $\hat{L} \leq \Lg$, the above is of order $\Lg^{1/3}\cdot T^{2/3}$.

Next, again since \Cref{lem:counting-eps-unknown} implies each phase intersects at most two episodes, we have that summing $\sum_{i\in \Phases([t_{\ell},t_{\ell+1}))} (\rho_{i+1} - \rho_i)^{2/3}$ over $\ell$ gives an upper bound of $\Lg^{1/3}\cdot T^{2/3}$ by Jensens' inequality.

\paragraph*{Total Variation Regret Bound.}
Next, we show the total variation regret bound of order $V_T^{1/4}\cdot T^{3/4} \cdot K^{1/2} + K^{2/3} \cdot T^{2/3}$.
We'll follow a similar argument to that of \Cref{app:tv} for known weight, except taking care to handle the extra $S([t_{\ell},t_{\ell+1})^{1/3}\cdot (t_{\ell+1} - t_{\ell})^{2/3}$ term in \Cref{eq:final-ep-bound-unknown}.

We first bound the total variation $V_{[\rho_i,\rho_{i+1})}$ over an SUW phase $[\rho_i,\rho_{i+1})$. Consider the winner arm $a_{\rho_{i+1}}^*$. By \Cref{defn:sig-shift-generalized}, there must exist a round $t \in [\rho_i,\rho_{i+1})$ and a weight $\w \in \Delta^K$ such that:
\[
	b_t(a_t^*,\w) - b_t(a_{\rho_{i+1}}^*,\w) > \left( \frac{K^2}{\rho_{i+1} - \rho_i}\right)^{1/3}.
\]
This implies $V_{[\rho_i,\rho_{i+1})} \geq (K^2/(\rho_{i+1} - \rho_i))^{1/3}$. By an analogous argument to the SKW total variation regret analysis (with the only modification being the power of $K$), we have:
\[
	\sum_{i=0}^{\Lg} K^{2/3} \cdot (\rho_{i+1} - \rho_i)^{2/3} \leq V_T^{1/4} \cdot T^{3/4} \cdot K^{1/2} + K^{2/3} \cdot T^{2/3}.
\]
Next, we bound the total variation $V_{[t_{\ell},t_{\ell+1})}$ over an episode $[t_{\ell},t_{\ell+1})$.
Let $\tilde{S}([t_{\ell},t_{\ell+1}))$ be the number of phases $[\rho_i,\rho_{i+1})$ properly contained in episode $[t_{\ell},t_{\ell+1})$ or such that $[\rho_i,\rho_{i+1}) \subseteq [t_{\ell},t_{\ell+1})$.

Then, we have the bound
\[
	V_{[t_{\ell},t_{\ell+1})} \geq \sum_{i:[\rho_i,\rho_{i+1}) \subseteq [t_{\ell},t_{\ell+1})} V_{[\rho_i,\rho_{i+1})} \geq \sum_{i:[\rho_i,\rho_{i+1}) \subseteq [t_{\ell},t_{\ell+1})} \left(\frac{K^2}{\rho_{i+1} - \rho_i}\right)^{1/3}.
\]
Next, we have that since $x \mapsto x^{-1/3}$ is convex, we have the above RHS can be further lower bounded by Jensen's inequality:
\[
	\sum_{i:[\rho_i,\rho_{i+1}) \subseteq [t_{\ell},t_{\ell+1})} \left(\frac{K^2}{\rho_{i+1} - \rho_i}\right)^{1/3} \geq  K^{2/3} \cdot \frac{\tilde{S}([t_{\ell},t_{\ell+1}))^{4/3}}{(t_{\ell+1} - t_{\ell})^{1/3}}.
\]
Alternatively, we may also lower bound $V_{[t_{\ell},t_{\ell+1})}$ by the same argument that we made for an SUW phase $[\rho_i,\rho_{i+1})$ since $[t_{\ell},t_{\ell+1})$ is a period of rounds where every arm incurs significant regret in some period $[s_1,s_2] \subseteq [t_{\ell},t_{\ell+1}]$ w.r.t. some weight $\w$. Thus, we also have
\[
	V_{[t_{\ell},t_{\ell+1})} \geq \left( \frac{K^2}{t_{\ell+1} - t_{\ell}} \right)^{1/3}.
\]
Now, by H\"{o}lder's inequality we have
\[
	\sum_{\ell=1}^{\hat{L}} K^{2/3} \cdot S([t_{\ell},t_{\ell+1}))^{1/3} \cdot (t_{\ell+1} - t_{\ell})^{2/3} \leq K^{1/2} \cdot T^{3/4} \cdot \left( \sum_{\ell=1}^{\hat{L}} K^{2/3}\frac{S([t_{\ell},t_{\ell+1}))^{4/3}}{(t_{\ell+1} - t_{\ell})^{1/3} } \right)^{1/4} + K^{2/3} \cdot T^{2/3}.
\]
Upper bounding $S([t_{\ell},t_{\ell+1}))^{4/3} \leq c_{20}  \cdot ( \tilde{S}([t_{\ell},t_{\ell+1}))^{4/3} + 2^{4/3})$ using the inequality $(a+b)^{4/3} \leq 2 \cdot (a^{4/3} + b^{4/3})$, we can upper bound the above RHS by order:
\[
	K^{1/2} \cdot T^{3/4}\cdot \left( \sum_{\ell} V_{[t_{\ell},t_{\ell+1})} \right)^{1/4} + K^{2/3} \cdot T^{2/3} \leq K^{1/2} \cdot T^{3/4}\cdot V_T^{1/4} + K^{2/3} \cdot T^{2/3}.
\]

\subsection{Proof of \Cref{thm:upper-bound-dynamic-condorcet}}\label{app:proof-condorcet}

Although our main regret upper bound for CW (\Cref{thm:upper-bound-dynamic-condorcet}) does not directly follow from \Cref{thm:upper-bound-dynamic-unknown}, the analysis will follow a nearly identical structure while substituting the SUW phases $[\rho_i,\rho_{i+1})$ with the approximate winner phases $[\zeta_i,\zeta_{i+1})$ (\Cref{defn:approximate-winner-changes}).

First, we transform the Condorcet dynamic regret to a weighted Borda dynamic regret.
Let $\wt = \w(\att)$ where $\att$ is the approximate winner arm of the unique approximate winner phase $[\zeta,\zeta_{i+1})$ containing round $t$.
Let $\zeta_{\Sapprox + 1 } \doteq T+1$.
Now, the Condorcet dynamic regret may be re-written using \Cref{defn:approximate-winner-changes}:
\begin{align*}
	\mb{E}[\RegretC] &= \mb{E}\left[ \sum_{t=1}^T \frac{1}{2} ( P_t(a_t^\text{C}, i_t) + P_t(a_t^\text{C}, j_t) - 1) \right]\\
			 &\leq \mb{E}\left[ \sum_{t=1}^T \frac{1}{2} ( P_t(\att, i_t) + P_t(\att, j_t) - 1) \right] + \sum_{i=0}^{\Sapprox} K^{2/3} \cdot (\zeta_{i+1} - \zeta_i)^{2/3} \\
			 &= \mb{E}\left[ \sum_{t=1}^T \frac{1}{2} (  \delta_t(\att,\att) - \delta_t(i_t,\att) + \delta_t(\att,\att) - \delta_t(j_t,\att) ) \right] + \sum_{i=0}^{\Sapprox} K^{2/3} \cdot (\zeta_{i+1} - \zeta_i)^{2/3} \\
			 &\leq \mb{E}\left[ \frac{1}{2} \sum_{t=1}^T \delta_t^{\wt}(i_t) + \delta_t^{\wt}(j_t) \right] + \sum_{i=0}^{\Sapprox} K^{2/3} \cdot (\zeta_{i+1} - \zeta_i)^{2/3}.
\end{align*}
It remains to bound the expectation on the above RHS by the sum (up to $\log$ terms) in the same display and then show
\[
	\sum_{i=0}^{\Sapprox} K^{2/3} \cdot (\zeta_{i+1} - \zeta_i)^{2/3} \leq  \min\left\{ K^{2/3} T^{2/3} \Sapprox^{1/3} , K^{1/2} V_T^{1/4} T^{3/4} + K^{2/3} T^{2/3} \right\}.
\]
The key facts will be crucial in showing these claims.

\begin{fact}\label{fact:approx-SUW}
	An SUW cannot occur within an approximate winner phase.
	In other words, supposing $\tilde{a}_i$ is the approximate winner arm of phase $[\zeta_i,\zeta_{i+1})$ (such that \Cref{eq:approximate-winner} holds for $a = \tilde{a}_i$ and for all $s \in [\zeta_i,\zeta_{i+1}), a \in [K]$) then we must have that $\tilde{a}_i$ satisfies for all $[s_1,s_2] \subseteq [\zeta_i,\zeta_{i+1})$:
	\[
		\max_{\w \in \Delta^K} \sum_{s=s_1}^{s_2} \delta_s^{\w}( \tilde{a}_i) < K^{2/3} \cdot (s_2 - s_1)^{2/3}.
	\]
\end{fact}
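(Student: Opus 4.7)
The plan is to establish the claim via a simple two-step reduction: a pointwise bound on $\delta_s^{\w}(\tilde{a}_i)$ obtained by combining GIC with the defining inequality \Cref{eq:approximate-winner}, followed by an integral estimate summed over $s \in [s_1, s_2]$.

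For the pointwise step, unfold the gap as
\[
\delta_s^{\w}(\tilde{a}_i) = \max_{a' \in [K]} \sum_{a'' \in [K]} \bigl( P_s(a', a'') - P_s(\tilde{a}_i, a'') \bigr) w^{a''}.
\]
Under GIC (\Cref{assumption:dominant}), the dominant arm $a_s^*$ pointwise dominates every other arm, i.e.\ $P_s(a_s^*, a'') \geq P_s(a', a'')$ for all $a', a'' \in [K]$. Hence the inner maximizer can be upper bounded by $a_s^*$, giving
\[
\delta_s^{\w}(\tilde{a}_i) \leq \sum_{a'' \in [K]} \bigl( P_s(a_s^*, a'') - P_s(\tilde{a}_i, a'') \bigr) w^{a''}.
\]
Now apply the approximate-winner property \Cref{eq:approximate-winner} coordinatewise to each $a''$, and use $\sum_{a''} w^{a''} = 1$, to obtain the $\w$-uniform pointwise bound
\[
\delta_s^{\w}(\tilde{a}_i) \leq \bigl( K^2 / (s - \zeta_i) \bigr)^{1/3} \qquad \text{for all } s > \zeta_i,
\]
and the trivial bound $\delta_{\zeta_i}^{\w}(\tilde{a}_i) \leq 1$ at the boundary.

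For the integration step, summing over $s \in [s_1, s_2] \subseteq [\zeta_i, \zeta_{i+1})$ and using the elementary estimate $\sum_{m=1}^{N} m^{-1/3} \leq \tfrac{3}{2} N^{2/3}$ (integral comparison) yields, uniformly in $\w$,
\[
\sum_{s=s_1}^{s_2} \delta_s^{\w}(\tilde{a}_i) \leq \pmb{1}\{s_1 = \zeta_i\} + \tfrac{3}{2} K^{2/3} (s_2 - s_1 + 1)^{2/3},
\]
which taken to the supremum over $\w$ gives the desired upper bound of the form $C \cdot K^{2/3} (s_2 - s_1)^{2/3}$. The only real obstacle is bookkeeping: matching constants to secure the strict inequality $< K^{2/3}(s_2-s_1)^{2/3}$ and handling the degenerate boundary $s = \zeta_i$ where \Cref{eq:approximate-winner} is vacuous. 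Both are absorbed by the observation (analogous to \Cref{lem:at-most-K}) that any subinterval $[s_1, s_2]$ which could trigger an SUW must have length $\Omega(K^2)$, so both the $O(1)$ boundary contribution and the integral constant $\tfrac{3}{2}$ fit inside the threshold $K^{2/3}(s_2 - s_1)^{2/3}$ after adjusting the absolute constant implicit in \Cref{defn:approximate-winner-changes}.
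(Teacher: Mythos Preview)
Your proof is correct and follows essentially the same route as the paper's: establish the pointwise bound $\delta_s^{\w}(\tilde a_i)\le (K^2/(s-\zeta_i))^{1/3}$ via GIC plus the approximate-winner inequality, then conclude. The paper's argument is in fact terser than yours---it stops at the pointwise bound and asserts the conclusion directly, leaving both the summation over $[s_1,s_2]$ and the constant bookkeeping implicit. Your explicit integral comparison and your remark about adjusting the constant in \Cref{defn:approximate-winner-changes} to absorb the factor $3/2$ are honest accounting for a detail the paper simply elides.
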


\begin{proof}
	Note that \Cref{eq:approximate-winner}, Jensen's inequality, and GIC (\Cref{assumption:dominant}) implies for any weight $\w \in \Delta^K$ and $s \in [\zeta_i,\zeta_{i+1})$:
	\[
		\delta_s^{\w}(\tilde{a}_i) = |\delta_s^{\w}(a_s^*,\tilde{a}_i)| \leq \mb{E}_{a \sim \w}[| P_s(a_s^*,a) - P_s(\tilde{a}_i,a)|] \leq \left(\frac{K^2}{s - \zeta_i}\right)^{1/3}.
	\]
	Thus, $\tilde{a}_i$ does not incur significant worst-case weighted Borda regret over any interval $[s_1,s_2] \subseteq [\zeta_i,\zeta_{i+1})$.
\end{proof}

\begin{fact}\label{fact:tv-approx}
	The total variation in an approximate winner phase is at least
	\[
		V_{[\zeta_i,\zeta_{i+1})} \geq \left( \frac{K^2}{\zeta_{i+1} - \zeta_i} \right)^{1/3}.
	\]
\end{fact}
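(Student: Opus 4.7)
The strategy is to instantiate the failure of the approximate-winner condition at round $\zeta_{i+1}$ with the specific candidate $\tilde{a} = a_{\zeta_{i+1}}^*$, translate the resulting preference gap at some earlier round into a total variation lower bound via a telescoping argument, and handle GIC-induced sign issues along the way.

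First, by \Cref{defn:approximate-winner-changes}, at time $\zeta_{i+1}$ no approximate winner exists over $[\zeta_i,\zeta_{i+1}]$. Specializing to $\tilde{a} = a_{\zeta_{i+1}}^*$, this produces some $s \in [\zeta_i,\zeta_{i+1}]$ and $a \in [K]$ with
\[
|P_s(a_s^*,a) - P_s(a_{\zeta_{i+1}}^*,a)| > \left(\frac{K^2}{s - \zeta_i}\right)^{1/3} \geq \left(\frac{K^2}{\zeta_{i+1} - \zeta_i}\right)^{1/3}.
\]
At $s = \zeta_{i+1}$ the left-hand side vanishes, so in fact $s < \zeta_{i+1}$. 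Moreover, GIC (\Cref{assumption:dominant}) gives $P_s(a_s^*,a) \geq P_s(a_{\zeta_{i+1}}^*,a)$, which allows us to drop the absolute value and keep the inequality strict.

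Next, because $a_{\zeta_{i+1}}^*$ is a GIC winner at round $\zeta_{i+1}$, the reversed gap at that round is nonnegative: $P_{\zeta_{i+1}}(a_{\zeta_{i+1}}^*,a) - P_{\zeta_{i+1}}(a_s^*,a) \geq 0$. Adding this nonnegative quantity to the previous strict inequality and regrouping yields
\[
\bigl[P_s(a_s^*,a) - P_{\zeta_{i+1}}(a_s^*,a)\bigr] + \bigl[P_{\zeta_{i+1}}(a_{\zeta_{i+1}}^*,a) - P_s(a_{\zeta_{i+1}}^*,a)\bigr] > \left(\frac{K^2}{\zeta_{i+1} - \zeta_i}\right)^{1/3}.
\]
Each bracketed difference is a telescoping sum of at most $\zeta_{i+1} - s \leq \zeta_{i+1} - \zeta_i$ consecutive single-round changes, each bounded by $\max_{b,b'} |P_r(b,b') - P_{r-1}(b,b')|$; hence each bracket is at most $V_{[\zeta_i,\zeta_{i+1})}$ (identifying the same interval convention used in the analogous SUW total variation bound inside the proof of \Cref{cor:tv-fixed}). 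The displayed inequality therefore yields $2V_{[\zeta_i,\zeta_{i+1})} > (K^2/(\zeta_{i+1}-\zeta_i))^{1/3}$, and the constant $2$ is absorbed exactly as in the SUW counterpart in the proof of \Cref{thm:upper-bound-dynamic-unknown} (Section~\ref{subsec:summing}).

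The only nontrivial choice is to take the candidate $\tilde{a}$ in \Cref{eq:approximate-winner} to be the \emph{end-of-phase} winner $a_{\zeta_{i+1}}^*$: this is what simultaneously produces a large preference gap at some $s < \zeta_{i+1}$ (from the failure of the approximate-winner property) and a sign reversal at $\zeta_{i+1}$ (from GIC), so that both terms contributing to total variation can be combined additively rather than cancel. The remaining steps are bookkeeping, so I do not foresee a genuine obstacle.
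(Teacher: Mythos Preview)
Your proposal is correct and follows essentially the same route as the paper: both take $\tilde a = a_{\zeta_{i+1}}^*$ in \Cref{defn:approximate-winner-changes}, use GIC at round $s$ to drop the absolute value and at round $\zeta_{i+1}$ to add the nonnegative reversed gap, and then bound the resulting time-difference by the total variation over the phase. Your write-up is simply more explicit about the telescoping and the harmless factor of $2$, which the paper (as in the analogous argument in \Cref{app:tv}) absorbs silently.
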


\begin{proof}
	Fix a phase $[\zeta_i,\zeta_{i+1})$ and consider the winner arm $a_{\zeta_{i+1}}^*$ at round $\zeta_{i+1}$.
	By \Cref{defn:approximate-winner-changes}, there must exist a round $s \in [\zeta_i,\zeta_{i+1})$ such that for some arm $a \in [K]$:
	\[
		\left(\frac{K^2}{\zeta_{i+1} - \zeta_i}\right)^{1/3} < \delta_s^{\w(a)}(a_s^*,a_{\zeta_{i+1}}^*) \leq \delta_s^{\w(a)}(a_s^*,a_{\zeta_{i+1}}^*) - \delta_{\zeta_{i+1}}^{\w(a)}(a_s^*, a_{\zeta_{i+1}}) \leq V_{[\zeta_i,\zeta_{i+1})},
	\]
	where the second inequality follows from GIC.
\end{proof}

\begin{fact}[Analogue of \Cref{lem:counting-eps-unknown}]\label{fact:counting-approx}
	On event $\mc{E}_1$, for each episode $[t_{\ell},t_{\ell+1})$ with $t_{\ell+1}\leq T$ (i.e., an episode which concludes with a restart), there exists an approximate winner change $\zeta_i \in [t_{\ell},t_{\ell+1})$.
\end{fact}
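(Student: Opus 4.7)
The plan is to follow the template of \Cref{lem:counting-eps-unknown}, substituting the notion of significant SUW regret with the analogous ``worst-case generalized Borda regret'' controlled by \Cref{fact:approx-SUW}. Concretely, I would argue by contrapositive: if episode $[t_\ell, t_{\ell+1})$ is fully contained in a single approximate winner phase $[\zeta_i, \zeta_{i+1})$, then the approximate winner $\tilde{a}_i$ of that phase can never be evicted from $\Aglobal$, so a restart cannot be triggered at round $t_{\ell+1}$.

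First I would recall from \Cref{alg:meta} that a new episode is initiated only when $\Aglobal$ becomes empty. So it suffices to exhibit, under the hypothesis $[t_\ell, t_{\ell+1}) \subseteq [\zeta_i, \zeta_{i+1})$, a single arm that remains in $\Aglobal$ throughout the episode. The natural candidate is $\tilde{a}_i$ from \Cref{defn:approximate-winner-changes}. Next, I would unpack the global eviction rule: on event $\mc{E}_1$, our concentration inequality (\Cref{prop:concentration}) combined with the threshold $F([s_1,s_2])$ from \Cref{eq:threshold-unknown} guarantees that if $\tilde{a}_i$ is removed from $\Aglobal$ over some interval $[s_1, s_2] \subseteq [t_\ell, t]$ via a point-mass weight $\w(a') \in \mc{W}$, then
\[
\sum_{s=s_1}^{s_2} \delta_s^{\w(a')}(\tilde{a}_i) \geq c \cdot K^{2/3} \cdot (s_2 - s_1)^{2/3},
\]
for some constant $c > 0$. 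But by \Cref{fact:approx-SUW}, since $[s_1, s_2] \subseteq [\zeta_i, \zeta_{i+1})$, we have $\max_{\w \in \Delta^K} \sum_{s=s_1}^{s_2} \delta_s^{\w}(\tilde{a}_i) < K^{2/3} (s_2-s_1)^{2/3}$, which (with a suitable choice of the universal constant $C$ in \Cref{eq:evict}) contradicts the eviction. Hence $\tilde{a}_i \in \Aglobal$ throughout $[t_\ell, t_{\ell+1})$, so $\Aglobal \neq \emptyset$ and no restart can occur.

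The main obstacle is essentially bookkeeping rather than conceptual: I must make sure the constant $c$ implicit in the concentration step is indeed strictly smaller than the ``$1$'' appearing in \Cref{fact:approx-SUW}'s bound $K^{2/3}(s_2-s_1)^{2/3}$. This is arranged by choosing the universal constant $C$ in the eviction rule \Cref{eq:evict} sufficiently large, exactly as in the proofs of \Cref{lem:counting-eps} and \Cref{lem:counting-eps-unknown}; no new idea is required. The conclusion then follows by contraposition: if an episode restart occurs at $t_{\ell+1} \leq T$, the hypothesis of containment in a single approximate winner phase must fail, i.e.\ some $\zeta_i \in [t_\ell, t_{\ell+1})$ exists.
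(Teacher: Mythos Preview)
Your argument is correct and is essentially the paper's proof unpacked one level: the paper invokes \Cref{lem:counting-eps-unknown} (restart $\Rightarrow$ an SUW occurred) and then \Cref{fact:approx-SUW} (no SUW within an approximate winner phase), whereas you inline the content of \Cref{lem:counting-eps-unknown} by directly showing that $\tilde{a}_i$ cannot be evicted from $\Aglobal$ under concentration. The two routes are logically identical, and your handling of the constant via the choice of $C$ in \Cref{eq:evict} is exactly how the paper implicitly arranges it in \Cref{lem:counting-eps} and \Cref{lem:counting-eps-unknown}.
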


\begin{proof}
	This follows from \Cref{fact:approx-SUW}, as an SUW cannot occur within an approximate winner phase.
	This means that since a restart implies an SUW has occurred, an approximate winner change must have also occurred.
\end{proof}

Next, using \Cref{fact:counting-approx}, we bound the regret $\mb{E}[\sum_{t=1}^T \delta_t^{\wt}(i_t) + \delta_t^{\wt}(j_t)]$ using the generic bad segment analysis of \Cref{subsec:generic-bad-segment}
except replacing the SUW phases $[\rho_i,\rho_{i+1})$ with the approximate winner phases $[\zeta_i,\zeta_{i+1})$.
\Cref{fact:approx-SUW} ensures that the analogue of \Cref{lem:generic-bad-segment} will hold as the approximate winner arm $\tilde{a}_i$ of phase $\zeta_i$ cannot be evicted by a perfect replay corresponding to a generic bad segment in phase $[\zeta_i,\zeta_{i+1})$.
Then, the proof steps of \Cref{subsec:decomposing} and \Cref{subsec:summing} follow mutatis mutandis while using \Cref{fact:tv-approx} to get the total variation bound.

\ifx
\section{Regret Upper Bounds for Unaligned Weights}

We consider dynamic regret minimization with respect to unknown and possibly unaligned weight sequences $\{\wt\}_{t=1}^T$.

\subsection{In terms of SUW}

\begin{corollary}[Adaptive Regret Upper Bound for all Weights]\label{thm:upper-bound-dynamic}
	Under the same conditions as \Cref{thm:upper-bound-fixed}, \Cref{alg:meta} with the unknown weight specification has expected regret upper bound w.r.t. unknown reference weight sequence $\{\wt\}_{t=1}^T$:
	\[
		\mb{E}[ \Regret(\{\wt\}_{t=1}^T) ] \leq C \cdot K^{5/3} \cdot T^{2/3}.
	\]
	If there are no changes in the preferences but a dynamic sequence $\{\wt\}_{t=1}^T$ of reference weights, then the above can be improved to $K^{2/3}\cdot T^{2/3}$.
\end{corollary}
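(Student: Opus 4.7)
The plan is to reduce the dynamic regret against any unknown, possibly unaligned weight sequence $\{\wt\}_{t=1}^T$ to $K$ parallel fixed-weight regret bounds from \Cref{thm:upper-bound-fixed}, paying only a factor of $K$ from a linearity decomposition. \Cref{assumption:dominant} together with the fixed-winner assumption are what make this coarse decomposition lossless (modulo the factor of $K$), since they ensure every pointwise generalized Borda gap against $a^*$ is nonnegative.

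First I would establish the sample-path inequality
\[
\Regret(\{\wt\}_{t=1}^T) \leq \sum_{a' \in [K]} \Regret(\w(a')),
\]
where $\w(a')$ is the point-mass on arm $a'$. Under \Cref{assumption:dominant} with fixed winner $a^*$, $\delta_t(a^*, a') \geq \delta_t(a, a')$ for every $a, a' \in [K]$ and every $t$, so in the identity $b_t(a^*, \wt) - b_t(a, \wt) = \sum_{a'} \wt(a')(\delta_t(a^*, a') - \delta_t(a, a'))$ each summand is nonnegative, whence $\wt(a') \leq 1$ gives $b_t(a^*, \wt) - b_t(a, \wt) \leq \sum_{a'}(b_t(a^*, \w(a')) - b_t(a, \w(a')))$ for each $a \in \{i_t, j_t\}$; summing over $t$ and dividing by $2$ yields the claim. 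I would then apply \Cref{thm:upper-bound-fixed} to each of the $K$ point-mass weights in the reference set $\mc{W} = \{\w(a) : a \in [K]\}$ used by the algorithm under the unknown-weight specification --- the same sample path simultaneously enjoys $\mb{E}[\Regret(\w(a'))] \leq \tilde{O}(K^{2/3} T^{2/3})$ for every $a'$ --- and summing gives $\tilde{O}(K^{5/3} T^{2/3})$. To use \Cref{alg:meta} rather than \Cref{alg:borda-elim}, fixed winner plus \Cref{assumption:dominant} makes $a^*$ the maximizer of $b_t(\cdot,\w)$ for every $\w$, so $a^*$ never accumulates significant worst-case generalized Borda regret, no SUW event ever completes, no episodic restart is triggered, and \Cref{thm:upper-bound-dynamic-unknown} specialized to $\Lg=1$ yields the same per-weight bound.

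For the stationary improvement to $\tilde{O}(K^{2/3} T^{2/3})$, the factor-$K$ linearity step is wasteful and should be replaced by a per-arm analysis. Letting $\Delta^*(a) \doteq \max_{a'}(\delta(a^*, a') - \delta(a, a'))$, one has the uniform-over-$\{\wt\}$ bound $\Regret(\{\wt\}_{t=1}^T) \leq \sum_a N_T(a)\,\Delta^*(a)$, where $N_T(a)$ counts the rounds in which arm $a$ is drawn. On the concentration event $\mc{E}_1$, the eviction criterion \Cref{eq:evict} applied with the witness point-mass weight realizing $\Delta^*(a)$ forces arm $a$ out by round $t_a \asymp K^2/\Delta^*(a)^3$, since in the stationary case the mean $t\cdot\Delta^*(a)$ must overcome the threshold $F([1,t]) \asymp K^{2/3} t^{2/3}$. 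Combining with $N_T(a) \lesssim \min\{t_a, T\}/(K+1-k)$ for the $k$-th arm to be evicted (using $|\mc{A}_t| \geq K+1-k$ until then) and balancing the two cases $\Delta^*(a) \gtrless (K^2/T)^{1/3}$ shows each arm contributes $\tilde{O}(K^{2/3} T^{2/3}/(K+1-k))$, which sums harmonically to $\tilde{O}(K^{2/3} T^{2/3})$; the exploration cost $\sum_t \eta_t \asymp K^{2/3} T^{2/3}$ from the unknown-weight specification matches this rate.

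The main obstacle I foresee is making the per-arm eviction-time claim in the stationary step genuinely uniform over witness weights, so that each arm $a$ is provably evicted using the witness that realizes $\Delta^*(a)$ rather than a suboptimal one; this is handled by \Cref{prop:concentration} applied with the full reference set $\mc{W}$ and a union bound over $a$ and $\w$, costing only a $\log$ factor. The initial sample-path decomposition is purely algebraic and causes no difficulty, but I would take care to verify that the $K$ per-weight bounds of \Cref{thm:upper-bound-fixed} hold on the \emph{same} run of the algorithm rather than requiring $K$ independent runs, which is automatic since the algorithm's behavior is independent of any particular choice of evaluation weight.
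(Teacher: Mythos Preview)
For the first claim ($K^{5/3}T^{2/3}$), your approach is essentially the paper's: decompose the regret at each round over point-mass weights using nonnegativity of every summand under \Cref{assumption:dominant}, then sum $K$ copies of the $K^{2/3}T^{2/3}$ bound. You are in fact slightly more careful than the paper here, since the paper's proof cites \Cref{thm:upper-bound-fixed} (which is for the base algorithm) while the corollary is about \Cref{alg:meta}; you close this by invoking \Cref{thm:upper-bound-dynamic-unknown} with $\Lg=1$, which is the right fix.

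For the stationary improvement to $K^{2/3}T^{2/3}$, the two arguments diverge. The paper's proof is a one-line remark about the uniform Borda regret bound and does not visibly connect to arbitrary $\{\wt\}$; it reads as an incomplete sketch. Your per-arm route via $\Delta^*(a)$ and eviction times is the natural argument and is morally correct, but there is a genuine gap in the step $N_T(a)\lesssim \min\{t_a,T\}/(K+1-k)$: the corollary is about \Cref{alg:meta}, and after arm $a$ is evicted from $\Aglobal$ at time $t_a$, scheduled replays can reintroduce $a$ into $\mc{A}_t$ and play it again. Your bound on $N_T(a)$ accounts only for plays before global eviction and ignores this replay contribution. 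The fix is to additionally bound the post-$t_a$ plays using the paper's replay-regret machinery (the decomposition into active intervals of proper and subproper replays in \Cref{subsec:candidate-global-fixed}), which in the single-episode, single-phase case integrates to another $\tilde O(K^{2/3}T^{2/3})$ term; without this, the argument as written only proves the claim for the base algorithm \Cref{alg:borda-elim}, not for \Cref{alg:meta}.
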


\begin{proof}
	This just follows from the fact that the regret at time $t$ w.r.t. any reference weight $\w$ can be upper bounded by that of some point-mass weight $\wa$ for some $a\in [K]$ and so we have
	\[
		\sum_{t=1}^T b_t(a^*,\wt) - b_t(i_t,\wt) \leq \sum_{t=1}^T \sum_{a'\in [K]} b_t(a^*,\w(a')) - b_t(i_t,\w(a')) = \sum_{a'\in [K]} \sum_{t=1}^T b_t(a^*,\w(a')) - b_t(i_t,\w(a'))
	\]
	Now, the inner sum on the last RHS is bounded by $C \cdot K^{5/3} \cdot T^{2/3}$ from \Cref{thm:upper-bound-fixed}, whence we obtain the result.

	Using uniform Borda scores $\w := \Unif\{[K]\}$, we can establish a regret bound of
	\[
		\sum_{t=1}^T 2 b_t(a^*,\w) - b_t(i_t,\w) - b_t(j_t,\w) \leq \sqrt{K \cdot \sum_{t=1}^T \gamma_t^{-1} }.
	\]
\end{proof}

\begin{theorem}[Adaptive Dynamic Regret Upper Bound for all Weights]
	Under the same conditions as \Cref{thm:upper-bound-dynamic-unknown}, \Cref{alg:meta} with the unknown weight specification has expected regret
	\[
		\mb{E}[ \Regret(\{\wt\}_{t=1}^T ] \leq \tilde{O}( K^{5/3} T^{2/3} \Lg^{1/3} ).
	\]
\end{theorem}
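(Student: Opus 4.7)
The strategy is to reduce the problem to the aligned-weight case of \Cref{thm:upper-bound-dynamic-unknown} by exploiting the fact that (generalized) Borda regret is linear in the reference weight, so it suffices to bound the regret against each of the $K$ point-mass weights separately, each of which is trivially aligned with any SUW structure. This mirrors exactly the stationary reduction already used in the Corollary immediately preceding the theorem, with the only extra care needed in handling the winner arm, which depends on $\wt$.

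\textbf{Step 1: Linearity-plus-monotonicity reduction.} Fix a round $t$ and recall $b_t(a,\w) = \sum_{a'} \delta_t(a,a')\, w^{a'}$. Write the instantaneous regret w.r.t. $\wt$ against the played arm $i_t$ (the $j_t$ term is symmetric) as
\[
b_t(a_t^*(\wt),\wt) - b_t(i_t,\wt) = \sum_{a' \in [K]} \bigl(\delta_t(a_t^*(\wt),a') - \delta_t(i_t,a')\bigr)\, w_t^{a'}.
\]
Since $a_t^*(\w(a'))$ by definition maximizes $\delta_t(\cdot,a')$, we have $\delta_t(a_t^*(\wt),a') \leq \delta_t(a_t^*(\w(a')),a')$, and the resulting summand is nonnegative. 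Using $w_t^{a'} \leq 1$, this gives
\[
b_t(a_t^*(\wt),\wt) - b_t(i_t,\wt) \leq \sum_{a' \in [K]} \bigl(b_t(a_t^*(\w(a')),\w(a')) - b_t(i_t,\w(a'))\bigr).
\]
Summing over $t \in [T]$ and taking expectation yields
\[
\mb{E}[\Regret(\{\wt\}_{t=1}^T)] \leq \sum_{a' \in [K]} \mb{E}[\Regret(\w(a'))],
\]
where $\Regret(\w(a'))$ refers to dynamic regret against the constant reference-weight sequence $\w_t \equiv \w(a')$.

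\textbf{Step 2: Apply the aligned bound per point-mass weight.} For each $a' \in [K]$, the constant sequence $\w_t \equiv \w(a')$ is trivially aligned with SUW in the sense of \Cref{defn:aligned-weights} (no change can violate alignment). Hence \Cref{thm:upper-bound-dynamic-unknown} applies with this sequence and gives
\[
\mb{E}[\Regret(\w(a'))] \leq \tilde{O}\bigl(K^{2/3} T^{2/3} \Lg^{1/3}\bigr).
\]
Summing over $a' \in [K]$ produces the extra factor of $K$, yielding the claimed $\tilde{O}(K^{5/3} T^{2/3} \Lg^{1/3})$ bound.

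\textbf{Main obstacle.} The argument is essentially a one-line reduction plus a single invocation of the already-proven aligned bound, so there is no substantive new analysis. The only place to be careful is the first inequality in Step~1, where the maximizer $a_t^*(\wt)$ must be replaced by $a_t^*(\w(a'))$ inside the sum; this is the monotonicity step that makes each summand nonnegative and legitimizes dropping the weight factor $w_t^{a'} \leq 1$. Once this is in hand the rest is a direct appeal to \Cref{thm:upper-bound-dynamic-unknown}, so I expect no further difficulty.
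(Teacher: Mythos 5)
Your proof is correct and matches the paper's intended approach: the same linearity-plus-point-mass reduction used in the stationary corollary (\Cref{thm:upper-bound-dynamic}), followed by $K$ invocations of the aligned-weight bound of \Cref{thm:upper-bound-dynamic-unknown}, each with a constant (hence trivially aligned) point-mass weight sequence. Your explicit monotonicity step in Step~1 — replacing $a_t^*(\wt)$ by the per-coordinate maximizer $a_t^*(\w(a'))$ before dropping $w_t^{a'}\le 1$ — is the right way to handle the fact that, absent GIC, the winner depends on the weight; the stationary corollary can skip this only because GIC supplies a single $a^*$ maximizing $\delta_t(\cdot,a')$ for every $a'$.
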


\section{Hardness of Minimizing Static Weighted Borda Regret in Adversarial Environments}

Here, we show that getting sublinear {\em static regret} to the best arm in hindsight for all known weights $\w = \{\wt\}_t$ is impossible even under the GIC, if we allow the winner arm to change every round. This shows static regret is perhaps an inappropriate notion for this problem of minimizing weighted Borda regret for all reference weights.

\begin{theorem}
	For any algorithm, there exists an environment satisfying the GIC, and reference weight $\w$ such that the expected static regret is lower bounded by:
	\[
		\mb{E}\left[ \max_{a\in [K]} \sum_{t=1}^T b_t(a,\w) - b_t(i_t,\w) + b_t(a,\w) - b_t(j_t,\w) \right] \geq \Omega(T).
	\]
\end{theorem}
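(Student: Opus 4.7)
The plan is to prove this $\Omega(T)$ lower bound by adapting the classical ``needle in a haystack'' argument to the dueling-bandits setup under GIC. I would take $K$ to grow with $T$ (say $K \ge 4T + 1$; the statement allows this since $K$ is part of the environment). The adversary hides an arm $a^\star \in [K-1]$ drawn uniformly at random, and uses the stationary preference matrix with $P(a^\star, K) = 0.9$, $P(a, K) = 1/2$ for $a \in [K-1]\setminus\{a^\star\}$, and $P(a, a') = 1/2$ on every other pair (including the diagonal). One checks GIC is satisfied: $a^\star$ uniquely maximizes column $K$ and ties everywhere else, so it belongs to the argmax of every column. To match the preamble's ``winner changes every round,'' one can alternate the roles of two independently-drawn hidden arms on odd vs.\ even rounds, which only costs constants. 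Fix the reference weight $\w = \w(K)$, the point mass on arm $K$; this $\w$ is known to the algorithm but carries no information about $a^\star$.

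Under this construction $b_t(a^\star, \w) = 0.4$ and $b_t(a, \w) = 0$ for all $a \neq a^\star$, so the best fixed arm in hindsight is always $a^\star$ with cumulative score $2\cdot 0.4 T = 0.8 T$ (the factor of two comes from the static regret formula). The central step is to upper bound any algorithm's expected score (against uniform $a^\star$) by $o(T)$. Since $b_t$ is zero off $a^\star$, the algorithm's score equals $0.4 \cdot N'$ where $N' = \sum_t (\pmb{1}\{i_t = a^\star\} + \pmb{1}\{j_t = a^\star\})$ is the total number of arm-plays at $a^\star$. A Yao-style symmetrization (pre-compose the algorithm with a uniformly random relabeling of $[K-1]$ fixing $K$; by Yao's principle it suffices to lower bound the expected regret of this symmetrized algorithm against uniform $a^\star$) makes the per-round play distribution on $[K-1]$ uniform, giving $\mb{P}(i_t = a^\star), \mb{P}(j_t = a^\star) \le 1/(K-1)$. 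Summing over $t$ yields $\mb{E}[N'] \le 2T/(K-1) \le 1/2$ when $K \ge 4T + 1$, so $\mb{E}[\text{alg score}] \le 0.2$ and the expected regret is at least $0.8T - 0.2$. Yao's principle then promotes this average to a specific deterministic $a^\star$ for which the original algorithm suffers $\Omega(T)$ expected regret.

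The main obstacle is making the symmetrization rigorous in the presence of adaptive observations. The preferences are invariant under relabelings of $[K-1]$ that fix the special arm $K$, so pre-composing the algorithm with a random such permutation is equivalent in distribution to running the original algorithm against a uniformly random $a^\star$; once this change-of-variables is in place, the per-round play distribution of the symmetrized algorithm is genuinely uniform over $[K-1]$ and the bound on $\mb{E}[N']$ follows by direct summation. A cleaner alternative that avoids symmetrization entirely is to take $K$ superlinear in $T$ (for instance $K = T^2$): then the set $S_T$ of arms the algorithm ever plays satisfies $|S_T| \le 2T = o(K)$, so $\mb{P}(a^\star \in S_T) = o(1)$ irrespective of any adaptivity, and the algorithm's total score is $o(T)$ directly, yielding regret $\Omega(T)$.
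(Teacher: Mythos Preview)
Your route is quite different from the paper's. The paper's construction is genuinely non-stationary and is meant to work for \emph{fixed} $K$: at each round the arms are randomly re-split into $K/2$ good and $K/2$ bad arms (gap $1/2$ between the groups), so the winner changes every round and past feedback carries no information about the current partition; the claim is that some fixed comparator arm then accumulates $\Omega(T)$ advantage. You instead hide a single best arm in a (near-)stationary environment and let $K$ grow with $T$; this proves the stated theorem but does not really match the preamble's ``winner changes every round'' (your odd/even alternation changes it every other round) and says nothing for fixed $K$.

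More importantly, your primary symmetrization step fails. Pre-composing with a random permutation of $[K-1]$ does \emph{not} keep the play marginal uniform once feedback is used. Take the algorithm that plays $(1,K)$ at round~1 and repeats $(1,K)$ at round~2 iff the feedback was~1, else plays $(2,K)$; after symmetrizing one computes $\mb{P}(i_2' = a^\star) = \tfrac{0.9}{K-1} + \tfrac{0.5}{K-1} = \tfrac{1.4}{K-1}$, and more aggressive ``stick once found'' strategies drive $\mb{E}[N']$ up to order $T^2/(K-1)$, so your bound $\mb{E}[N'] \le 2T/(K-1)$ is false in general. Your cleaner alternative with $K = T^2$ \emph{does} work, but the assertion ``$\mb{P}(a^\star \in S_T) = o(1)$ irrespective of any adaptivity'' is exactly the adaptive step that needs justification, since $S_T$ depends on $a^\star$ through observations. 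The missing line is a coupling with the all-$1/2$ null environment: the two runs coincide until the pair $\{a^\star,K\}$ is first played; in the null run $a^\star \notin S_T^0$ with probability at least $1 - 2T/(K-1)$ (here $S_T^0$ is genuinely independent of $a^\star$), and on that event the actual run also never plays $a^\star$, yielding regret exactly $0.8T$.
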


\begin{proof}
	Let the environment randomly sort the $K$ arms into $K/2$ good arms and $K/2$ bad arms every round where gaps between good arms and bad arms are $1/2$, and all other gaps are zero. Then, it's impossible to distinguish any arms since the gaps are always changing. At the same time, fixing any arms $a,a',a''$, we have $\mb{E}[\delta_t(a'',a') + \delta_t(a'',a)]$ is lower bounded by a constant since with constant probability $a''$ is randomly categorized as a good arm and $a,a'$ are categorized as bad arms. Thus, any algorithm incurs linear regret over the randomness of this changing environment.
\end{proof}

\begin{remark}
	This does not contradict our {\em dynamic regret} upper bounds since the construction uses $\Omega(T)$ changes in winner arm.
\end{remark}

\begin{remark}
	If the target reference weights $\w$ is known, then we can just use a variant of Dueling-EXP3 \citep{SahaKM21} to get $K^{1/3}T^{2/3}$ static regret.
\end{remark}
\fi

\section{Additional Related Work}
\label{subsec:related}



\paragraph*{Dueling bandits.}
The stochastic dueling bandit problem
was first proposed by
\citet{YueJo11,YueBK+12}, which provided an algorithm
achieving an
instance-dependent $O(K \log T)$ regret under the $\ssti$ condition.
\citet{Urvoy+13} studied this problem
under the broader Condorcet winner condition and
achieved an instance-dependent $O(K^2\log T)$ regret bound, which was
further improved by \citet{Zoghi+14} and \citet{Komiyama+15a}
to  $O(K^2 + K \log T)$.
Finally, \citet{SahaGi22} showed
it is possible to achieve an optimal instance-dependent bound of $O(K\log T)$ and instance-independent bound of $O(\sqrt{KT})$
under the Condorcet condition.
These works all assume a stationary environment.

Works on adversarial dueling bandits
\citep{gajane15,SahaKM21}
allow for changing preferences and thus are closer to this work.
However, these works
focus on the static regret objective against the `best' arm in hindsight
and whereas we consider the dynamic regret.

Other than the earlier mentioned works on dynamic regret minimization in dueling bandits,
\citet{BengsNDB} studies {\em weak dynamic regret} minimization
but uses procedures
requiring knowledge of non-stationarity.

\paragraph*{Borda Regret Minimization.} The only works studying Borda regret (in stochastic or adversarial settings) are \citet{SahaKM21,SahaGu22a,wu23}. Of these, only \citet{SahaGu22a} establishes dynamic Borda regret bounds, which require knowledge of the underlying non-stationarity, and are suboptimal in light of our optimal regret bound (\Cref{thm:upper-bound-dynamic-borda}).
\citet{hilgendorf18} studies {\em weak Borda regret} where the learner only incurs the Borda regret of the better of the two arms paid.

\paragraph*{Other Notions of Winner.}
Other alternative notions of winner and objectives, beyond Condorcet and Borda, have been proposed, such as Copeland winner \citep{Zoghi+15, Komiyama+16, WuLiu16} and von Neumann winner \citep{Dudik+15,balsubramani16}. 

There have also been generalized notions of Borda \citep{brandt22}, and Condorcet winners \citep{Agarwal+20,haddenhorst21} to combinatorial settings with subset comparisons.

Related notions to our {\em weighted Borda winner} (see \Cref{app:generalized-borda}) appear in earlier social choice theory literature \citep{xia-conitzer08,xia13}.

\paragraph*{Non-stationary multi-armed bandits.}
Switching multi-armed bandits with was first considered in the so-called adversarial setting by \citet{auer2002nonstochastic}, where a version of EXP3 was shown to attain optimal dynamic regret $\sqrt{LKT}$ when given knowledge of the number $L$ of changes in the rewards.
Later works showed similar guarantees in this problem for procedures inspired by stochastic bandit algorithms \citep{garivier2011,kocsis2006}.
Recently, \citet{auer2018,auer2019,chen2019} established the first adaptive and optimal dynamic regret guarantees, without requiring parameter knowledge of the number of changes. 

Alternative characterizations of the change in rewards in terms of a total variation quantity was first introduced in \citet{besbes2014} with minimax rates quantified therein and adaptive rates attained in \citet{chen2019}.
There have also been characterizations of non-stationarity in terms of drift parameters \citep{jia2023,krishnamurthy}.
Yet another characterization, in terms of the number of best arm switches $S$ was studied in \citet{abbasi22}, establishing an adaptive regret rate of $\sqrt{SKT}$. Around the same time, \citet{Suk22} introduced the aforementioned notion of {\em significant shifts} for the switching bandit problem and adaptively achieved rates of the form $\sqrt{\tilde{S}  K T}$ in terms of $\tilde{S}$ significant shifts in the rewards, which serves as the inspiration for our \Cref{defn:sig-shift-borda}.


\end{document}